\theoremstyle{plain}
\newtheorem{theorem}{Theorem}[section]
\newtheorem{lemma}[theorem]{Lemma}
\newtheorem{corollary}[theorem]{Corollary}
\theoremstyle{definition}
\newtheorem{definition}[theorem]{Definition}
\theoremstyle{remark}
\newtheorem{remark}[theorem]{Remark}
\newcommand{\ones}{\mathbf 1}
\newcommand{\SF}{$\mathrm{SF}$}
\newcommand{\RFD}{$\mathrm{RFD}$}
\newcommand{\BF}{\textsc{BF}}
\newcommand{\GW}{\textit{GW}}
\newcommand{\FGW}{\textit{FGW}}
\icmltitlerunning{Efficient Graph Field Integrators Meet Point Clouds}
\begin{document}

\twocolumn[
\icmltitle{Efficient Graph Field Integrators Meet Point Clouds}



\icmlsetsymbol{equal}{*}

\begin{icmlauthorlist}
\icmlauthor{Krzysztof Choromanski}{equal,googleresearch,columbia}
\icmlauthor{Arijit Sehanobish}{equal,yale}
\icmlauthor{Han Lin}{equal,columbia}
\icmlauthor{Yunfan Zhao}{equal,columbia}
\icmlauthor{Eli Berger}{xxx} 
\icmlauthor{Tetiana Parshakova}{stanford}
\icmlauthor{Alvin Pan}{columbia}
\icmlauthor{David Watkins}{ai}
\icmlauthor{Tianyi Zhang}{columbia}
\icmlauthor{Valerii Likhosherstov}{cambridge}
\icmlauthor{Somnath Basu Roy Chowdhury}{unc}
\icmlauthor{Avinava Dubey}{googleresearch}
\icmlauthor{Deepali Jain}{googleresearch}
\icmlauthor{Tamas Sarlos}{googleresearch}
\icmlauthor{Snigdha Chaturvedi}{unc}
\icmlauthor{Adrian Weller}{cambridge,alan}
\end{icmlauthorlist}

\icmlaffiliation{columbia}{Columbia University}
\icmlaffiliation{yale}{Independent Researcher}
\icmlaffiliation{cambridge}{University of Cambridge}
\icmlaffiliation{googleresearch}{Google Research}
\icmlaffiliation{alan}{The Alan Turing Institute}
\icmlaffiliation{stanford}{Stanford University}
\icmlaffiliation{unc}{The University of North Carolina at Chapel Hill}
\icmlaffiliation{ai}{The Boston Dynamics AI Institute}
\icmlaffiliation{xxx}{University of Haifa}

\icmlcorrespondingauthor{Krzysztof Choromanski}{kchoro@google.com}

\icmlkeywords{Machine Learning, ICML}

\vskip 0.3in
]



\printAffiliationsAndNotice{\icmlEqualContribution} 

\begin{abstract}
We present two new classes of algorithms for efficient field integration on graphs encoding point clouds. The first class, $\mathrm{SeparatorFactorization}$ (\SF), leverages the bounded genus of point cloud mesh graphs, while the second class, $\mathrm{RFDiffusion}$ (\RFD), uses popular $\epsilon$-nearest-neighbor graph representations for point clouds. 
Both can be viewed as providing the functionality of Fast Multipole Methods (FMMs, \citealp{FMM}), which 
have had a tremendous impact on efficient integration, but for non-Euclidean spaces.
We focus on geometries induced by distributions of walk lengths between points (e.g., shortest-path distance). 
We provide an extensive theoretical analysis of our algorithms, obtaining new results in structural graph theory as a byproduct.
We also perform exhaustive empirical evaluation, including on-surface interpolation for rigid and deformable objects (particularly for mesh-dynamics modeling), Wasserstein distance computations for point clouds, and the Gromov-Wasserstein variant.
\end{abstract}

\section{Introduction \& Related Work}
\label{sec:intro_rel}

Let us consider a weighted undirected graph $\mathrm{G}=(\mathrm{V},\mathrm{E},\mathrm{W})$, where: $\mathrm{V}$ stands for the set of vertices/nodes, $\mathrm{E}$ is the set of edges and $\mathrm{W}: \mathrm{E} \rightarrow \mathbb{R}_{+}$ encodes edge-weights. We assume that a tensor-field $\mathcal{F}:\mathrm{V} \rightarrow \mathbb{R}^{d_{1} \times \ldots \times d_{l}}$ is defined on $\mathrm{V}$, where: $d_{1},\ldots,d_{l}$ stand for tensor dimensions. A kernel (similarity measure) $\mathrm{K}:\mathrm{V} \times \mathrm{V} \rightarrow \mathbb{R}$ on $\mathrm{V}$ is given. 
In this paper, we are interested in efficiently computing the expression $i(v)$ for each $v \in \mathrm{V}$, as defined below:
\begin{equation}
\label{eq:base_integ}
i(v) := \sum_{w \in \mathrm{V}}\mathrm{K}(w,v)\mathcal{F}(w) .
\end{equation}
The expression $i(v)$ can be interpreted as an integration of $\mathcal{F}$ on $\mathrm{G}$ with respect to measure $\mathrm{K}(\cdot, v)$. As such, it can also be thought of as a discrete approximation of the $\mathcal{F}$-field integration in the continuous non-Euclidean space, discretely approximated by $\mathrm{G}$. We refer to the process of computing $i(v)$ for every $v \in \mathrm{V}$ as \textit{graph-field integration} (GFI), see: Fig. \ref{fig:torus}. We write $N=|\mathrm{V}|$ for the size of $\mathrm{V}$. 

\begin{figure}[t!] 
\centering
  \includegraphics[width=0.8\linewidth]{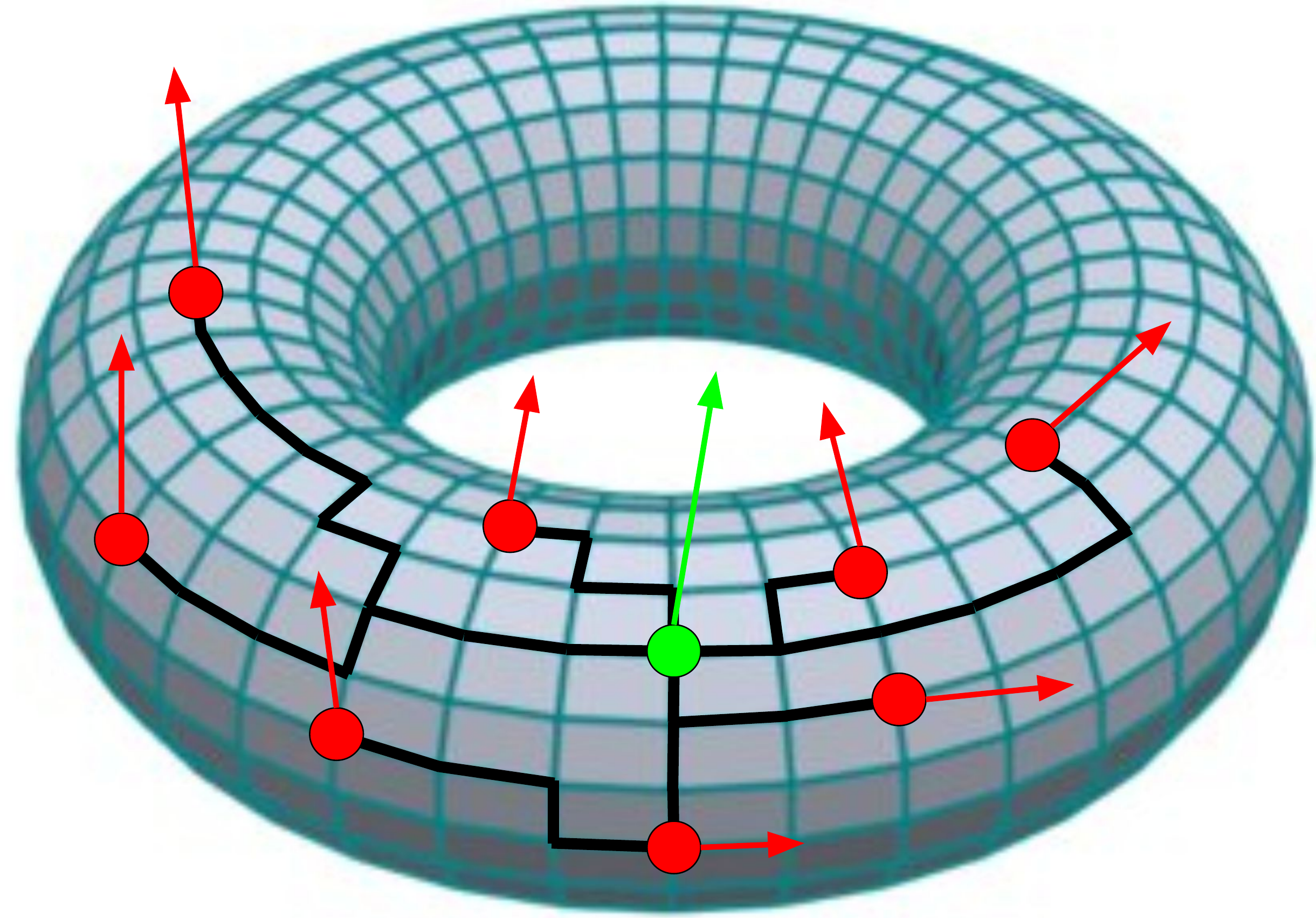}
\vspace{-2mm}  
\caption{\small{Visualization of the problem of integrating vector-field $\mathcal{F}$ on the mesh-graph. Vector $i(v)$ in the green-node $v$ is defined as a weighted sum of the vectors $\mathcal{F}(w)$ in all the nodes (red arrows) with the coefficients $\mathrm{K}(w,v)$ given as $\mathrm{K}(w,v)=f(\mathrm{dist}(w,v))$ for a shortest-path-distance function $\mathrm{dist}$ between nodes of the mesh-graph and some function $f:\mathbb{R} \rightarrow \mathbb{R}$ (with $f(0)=0$). The shortest-path-distance tree from $v$ is marked in black.}}
\label{fig:torus}
\vspace{-5mm}
\end{figure}

A naive, brute-force approach to computing all $i(v)$ is to: \textbf{(1)} calculate a kernel matrix $\mathbf{K}=[\mathrm{K}(w,v)]_{w,v \in \mathrm{V}} \in \mathbb{R}^{N \times N}$ (\textit{\textbf{pre-processing}}), \textbf{(2)} conduct $d_{1} \cdot \ldots \cdot d_{l}$ matrix-vector multiplications: $\mathbf{K}\mathbf{v}^{c_{1},\ldots,c_{l}}$ (\textit{\textbf{inference}}) for $0 \leq c_{i} < d_{i}$, where $\mathbf{v}^{c_{1},\ldots,c_{l}} \overset{\mathrm{def}}{=} \mathcal{F}(\cdot)[c_{1}]\ldots[c_{l}] \in \mathbb{R}^{N}$. Both steps are computationally expensive: inference requires $O(N^{2}d_{1}\cdot\ldots\cdot d_{l})$ time; and pre-processing at least $O(N^{2})$ (in practice, depending on kernel $\mathrm{K}$, often at least $O(N^{3})$). Therefore for large $N$, this approach becomes computationally infeasible.

It is thus natural to ask the following: \textit{Can pre-processing or inference be performed in sub-quadratic time for the number of nodes of a graph?}

It is hopeless to provide a positive answer for an arbitrary graph $\mathrm{G}$ and kernel $\mathrm{K}$; however, methods for certain subclasses have been extensively studied over decades. Probably the most prominent example is the family of \textit{Fast Multipole Methods} (FMMs) \cite{FMM, FMM-2, FMM-3, FMM-4, FMM-5, FMM-6, FMM-7, FMM-8}. FMMs were originally developed for the $N$-body simulation problem \cite{nbody} in force-fields, where $\mathcal{F}$ might encode mass/charge distribution over points, $\mathrm{K}$ defines corresponding potential-field decay over distances and $i(v)$ calculates coordinates of the resultant forces in all $N$ points. Since then, FMMs have been applied in a plethora of applications: (1) molecular/stellar dynamics, (2) interpolation with radial basis functions, and (3) solving differential equations: Poisson/Laplace (fluid dynamics, \citealp{fluid}), Maxwell's (electromagnetism, \citealp{FMM-9}), Helmholtz (acoustic scattering problem, \citealp{FMM-10}).

FMMs were developed for Euclidean spaces corresponding to grid graphs $\mathrm{G}$ embedded in $\mathbb{R}^{d}$ and the specific class of kernels $\mathrm{K}$ defined as functions of the dot-product similarity. However, several applications in machine learning involving point cloud and mesh-graph data require integrating more general graphs (defined on surfaces) approximating non-Euclidean metrics. Examples include: (a) computing Wasserstein distances between probabilistic distributions defined on meshes \cite{solomon2015convolutional}, and (b) interpolation of the velocity fields given on meshes to model the complex dynamics of objects \cite{mesh-net-v2}.

In this paper, we present two new algorithms for the efficient (i.e., \textbf{sub-quadratic} in $N$) graph-field integration for graphs encoding point cloud data (where graph weights correspond to distances betweeen points). The first,  $\mathrm{SeparatorFactorization}$ (\SF), leverages bounded genus of point cloud mesh graphs, while the second, $\mathrm{RFDiffusion}$ (\RFD), uses popular $\epsilon$-nearest-neighbor ($\epsilon$-NN) graph representations for point clouds. Both can be considered to provide the functionality of Fast Multipole Methods but in non-Euclidean spaces. We focus on geometries induced by distributions of walks' lengths between points (e.g., shortest-path distance). We provide an extensive theoretical analysis of the proposed algorithms and, as a byproduct, present new results in structural graph theory. We also perform exhaustive empirical evaluation, including on-surface interpolation for  rigid and deformable objects (e.g., for mesh-dynamics modeling), Wasserstein distance computations for point clouds, including the Gromov-Wasserstein variant as well as point cloud classification. Our code is available at~\url{https://github.com/topographers/efficient_graph_algorithms}.

To summarize, our main contributions are as follows:
\begin{enumerate}
\vspace{-3mm}
\item We propose an $O(N\log^{2}(N))$ time complexity $\mathrm{SF}$ algorithm for approximate graph field integration on mesh-graphs, generalizing methods introduced by \citet{topmasking} (Sec. \ref{sec:sign}, \ref{sec:effiicient-sf}). The algorithm works for kernels of the form: $\mathrm{K}_{f}(w,v)=f(\mathrm{dist}(w,v))$ for the shortest-path-distance function $\mathrm{dist}$ and an \textbf{arbitrary} $f:\mathbb{R} \rightarrow \mathbb{R}$.
\item As a byproduct of methods developed for the $\mathrm{SF}$ algorithm, we construct \textbf{the first} efficient \textbf{exact} graph field integrator of $O(N \log^{2}(N))$ time complexity for unweighted graphs with bounded-length \textit{geodesic-cycles} (e.g., cycles such that some shortest path between any two nodes of the cycle belongs to the cycle). For the important special case: $f_{\lambda}(x)=\exp(-\lambda x)$, we obtain additional computational gains resulting in $O(N\log^{1.383}(N))$ time complexity (Sec. \ref{sec:sign}).
\item We comprehensively compare the $\mathrm{SF}$ algorithm with alternative methods that approximate graph-induced metrics using the powerful technique of \textit{low-distortion trees} \citep{bartal1996probabilistic, charikar, fakcharoenphol2004tight, ld-3, ld-tree, ld-2} (Sec. \ref{sec:exp} and Appendix~\ref{sec-graph-metric-approx}).
\item We propose an $O(N)$ time complexity $\mathrm{RFD}$ algorithm for approximate graph field integration on generalized $\epsilon$-NN graphs. By leveraging 
\textbf{r}andom-\textbf{f}eature-based embeddings, \RFD\ decomposes  $\epsilon$-NN graphs into low-rank dot-product graph space \cite{dot-product-graphs}(Sec. \ref{sec:rfdiff}). \RFD\ works for graph diffusion kernels of the form: $[\mathrm{K}(w,v)]_{w,v \in \mathrm{V}}=\exp(\lambda \mathbf{W}_{\mathrm{G}})$, where $\mathbf{W}_{\mathrm{G}}$ is the weighted adjacency matrix of $\mathrm{G}$, and $\exp$ encodes matrix-exponentiation.
\item We comprehensively compare $\mathrm{RFDiffusion}$ with state-of-the-art algorithms for fast computation of the action of the exponentiated matrix \cite{exp-1, exp-2}. 
\vspace{-2mm}
\end{enumerate}

The $\mathrm{SF}$ algorithm is a combinatorial method leveraging geometries defined by shortest path metrics in the form of the kernel $\mathrm{K}(w,v)=f(\mathrm{dist}(w,v))$. In contrast, the $\mathrm{RFD}$ algorithm is an algebraic approach utilizing geometries defined in terms of the distribution of walks of different lengths between the nodes, not only the shortest paths. The $\mathrm{RFD}$ approach complements the $\mathrm{SF}$ algorithm, acting on the $\epsilon$-NN representation of the point cloud, which is a popular alternative to mesh-graphs used by the $\mathrm{SF}$ algorithm. 

Random feature (RF) map methods are well-known to be an effective strategy to scale up kernel algorithms \citep{rf-1,rf-2, rf-3}. This makes $\mathrm{RFDiffusion}$ the fastest method in our algorithmic portfolio, particularly well-suited for TPU/GPU-powered computations. However, it works for a specific (yet essential) kernel, whereas the $\mathrm{SF}$ algorithm leverages a general class of shortest-path-induced kernels.

\vspace{-3mm}
\section{SeparatorFactorization and RFDiffusion}
\label{sec:alg}
\paragraph{Defining geometries on graphs via walks.}

We start with the following class of kernels $\mathrm{K}^{\Lambda}:\mathrm{V} \times \mathrm{V} \rightarrow \mathbb{R}$ defined on the nodes of the graph $\mathrm{G}$ for the decreasing function $p:\mathbb{N} \rightarrow \mathbb{R}$ and a given hyper-parameter $\Lambda>0$:
\begin{equation}
\label{eq:walks}
\mathrm{K}^{\Lambda}(w,v) = \sum_{k=0}^{\infty} p_{\Lambda}(k) n_{k},
\end{equation}
where $n_{k}$ is the number of walks of length $k$ between $w$ and $v$.
Taking $p_{\Lambda}(k)=\frac{\Lambda^{k}}{k!}$, one reconstructs a version of the so-called \textit{graph diffusion kernel} (GDK).  Using even a simpler formula: $p_{\Lambda}(k)=\Lambda^{k}=\exp(\log(\Lambda)k)$ we obtain another valid kernel related to the \textit{Leontief Inverse matrix}  \cite{leontief, smola_kondor}. 
Note that the sum from the RHS of Eq. \ref{eq:walks} starts de facto from $k=\mathrm{dist}(w,v)$ since $k \geq \mathrm{dist}(w,v)$. 
The first class of kernels considered in this paper is obtained by taking the latter formula for $p(k)$ and its leading $p$-coefficient from the sum in Eq. \ref{eq:walks} (note that longer-walks are discounted as providing weaker ties between vertices).  
Instead of defining $\mathrm{K}(w,v)=\exp(-\lambda \mathrm{dist}(w,v))$ for $\lambda=-\log(\Lambda)$, we consider its generalized version for an \textbf{arbitrary} $f:\mathbb{R} \rightarrow \mathbb{R}$,
\begin{equation}
\mathrm{K}_{f}(w,v) = f(\mathrm{dist}(w,v)).
\end{equation}
Thus the kernel becomes an arbitrary (potentially learnable) function of the shortest path distance. Such kernels are intensively studied for mesh modeling, where $\mathrm{dist}(\cdot, \cdot)$ approximates geodesic distances, see \cite{mory} and \cite{solomon2015convolutional}. In the latter work, that kernel is ultimately replaced by a more computationally feasible variant of the diffusion kernel, see Sec. \ref{sec:wass_dist}.

We also provide an efficient GFI mechanism in the setting where walks of all lengths are considered. Here we decide to work with the aforementioned GDK:
\begin{equation}
\label{eq:diffusion}
[\mathrm{K}^{\Lambda}_{\mathrm{GDK}}(w,v)]_{w,v \in \mathrm{V}}=\exp(\Lambda \cdot \mathrm{W}_{\mathrm{G}}),
\end{equation}
for the weighted adjacency matrix $\mathrm{W}_{\mathrm{G}}$ of $\mathrm{G}$, where $\mathrm{exp}$ denotes matrix exponentiation.
\vspace{-3mm}
\subsection{Tractability and Bounded Genus Graphs}

We start with the tractability concept recently introduced by \citet{topmasking}. 

\begin{definition}[tractable $(\mathcal{G}, f)$-pairs]
\label{def:main}
Let $\mathcal{G}$ be a family of weighted undirected graphs and let $f:\mathbb{R} \rightarrow \mathbb{C}$ be a function. Denote $\mathbf{K}=[\mathrm{K}_{f}(w,v)]_{w,v \in \mathrm{V}}$.
We say that $(\mathcal{G},f)$ is $T$-tractable if for any $\mathbf{x} \in \mathbb{R}^{|\mathrm{V}|}$, matrix-vector multiplication $\mathbf{K}\mathbf{x}$ can be computed in time $O(T)$. If $T=o(|\mathrm{V}|^{2})$, then we say that $(\mathcal{G},f)$ is tractable.
\end{definition}

In Table 1, we summarize previously known results on $T$-tractable $(\mathcal{G}, f)$-pairs, all from \cite{topmasking}. 

\begin{table}[h]
\vspace{-4.0mm}
    \caption{\small{Summary of the known $(\mathcal{G},f)$-tractability results \cite{topmasking}. If not stated otherwise, considered graphs are unweighted. Below, $\mathrm{diam}(\mathrm{G})$ stands for the diameter of $\mathrm{G}$.}} 
    \label{table:speed}
    \scalebox{0.80}{
     \begin{tabular}{ c c c} 
    \toprule
       $\mathcal{G}$ & $f(z)$ & $T$ \\ [0.5ex] 
    \toprule
       weighted trees & $\exp(az+b)$ for given $a,b \in \mathbb{C}$ & $|\mathrm{V}|$ \\ [0.5ex] 
       unweighted trees & arbitrary & $|\mathrm{V}|\log^{2}(\mathrm{|V|})$  \\ [0.5ex] 
       unweighted trees & arbitrary & $|\mathrm{V}|\mathrm{diam(\mathrm{G})}$ \\ [0.5ex] 
       $d$-dimensional grids & arbitrary & $|\mathrm{V}|\log(\mathrm{|V|})$  \\ [0.5ex] 
    \bottomrule
    \end{tabular}}
\end{table}
\vspace{-2mm}
We immediately realize that $(\mathcal{G},f)$-tractability implies efficient GFI for kernels $\mathrm{K}_{f}$. The results for trees from Table 1 can be elegantly extended to trigonometric functions $f$ (still on trees) by using complex field $\mathbb{C}$, see Appendix, Sec. \ref{sec:app_warmup}. 

In this section, we target mesh-graph representations of point clouds that are not trees. But they are not completely random. If the surface where the mesh graph lives does not have too many ``holes", the mesh graph is very structured. This property can be precisely quantified as a \textit{bounded genus}. The \textit{genus} of a surface is the topologically invariant property defined as the largest number of non-intersecting simple closed curves that can be drawn on the surface without separating it~\cite{genus}. 

\begin{theorem}[\citealp{genus-2}]
\label{thm:genus}
The set of vertices $\mathrm{V}$ of graphs of genus $ \leq g$ (i.e., embeddable with no edge-crossings on the surface of genus $g$) can be efficiently (in time $O(|\mathrm{V}|+g)$) partitioned into three subsets: $\mathrm{V}_{1}$, $\mathrm{V}_{2}$ and $\mathcal{S}$ such that:
$|\mathrm{V}_{1}|,|\mathrm{V}_{2}| \geq \frac{|\mathrm{V}|}{3}$, $|S| = O(\sqrt{(g+1)|\mathrm{V}|})$ and furthermore there are no edges between $\mathrm{V}_{1}$ and $\mathrm{V}_{2}$ . 
\end{theorem}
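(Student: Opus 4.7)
The plan is to adapt the classical Lipton--Tarjan planar separator argument, paying for the non-trivial topology of a genus-$g$ surface with an additive $O(\sqrt{g|\mathrm{V}|})$ contribution to the separator size. Concretely, I would fix an embedding of $\mathrm{G}$ on a surface $\Sigma_g$ of genus $g$, build a BFS tree $T$ from an arbitrary root $r$, and stratify $\mathrm{V}$ into BFS level sets $L_0,L_1,L_2,\ldots$. A level $L_m$ is selected so that the vertices of levels strictly below and strictly above $L_m$ each have mass between $|\mathrm{V}|/3$ and $2|\mathrm{V}|/3$; if $|L_m|=O(\sqrt{(g+1)|\mathrm{V}|})$ we are done and $\mathcal{S}:=L_m$ is the separator.

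If no such small middle level exists, I would pick two levels $L_{m-a}$ and $L_{m+b}$ with small size and such that the ``middle slab'' they bound captures at most $2|\mathrm{V}|/3$ vertices. Contracting everything above $L_{m-a}$ to a single super-vertex and discarding everything below $L_{m+b}$ yields a subgraph $\mathrm{G}'$ of depth at most $a+b+1$ that still embeds on $\Sigma_g$. Within $\mathrm{G}'$, I would run the fundamental-cycle argument: for each non-tree edge $e=(u,v)$ the cycle $C_e$ obtained by closing the tree path from $u$ to $v$ with $e$ has length at most $2(a+b)+1$, and its vertex set is a natural separator candidate because, via the Jordan curve theorem, in the planar case a single such $C_e$ separates $\Sigma_0$ into two disks.

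The main obstacle, and the place where the genus really enters, is that on $\Sigma_g$ one fundamental cycle need not separate: the first homology of $\Sigma_g$ has rank $2g$, so up to $2g$ non-separating cycles can coexist. I would handle this by iteratively choosing $O(g)$ non-separating fundamental cycles and ``cutting'' along them: each cut drops the genus of the residual surface by one, and by an averaging argument over the at most $O(|\mathrm{V}|+g)$ non-tree edges, one can select each cycle to have length $O(\sqrt{|\mathrm{V}|/(g+1)})$. After at most $2g$ cuts the residual surface is planar, and at that point a single additional Lipton--Tarjan fundamental cycle of length $O(\sqrt{|\mathrm{V}|})$ gives the balanced planar separator. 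Adding everything up, the vertices lying on all selected cycles have total cardinality
\begin{equation*}
O(g)\cdot O\!\left(\sqrt{|\mathrm{V}|/(g+1)}\right) + O(\sqrt{|\mathrm{V}|}) \;=\; O\!\left(\sqrt{(g+1)|\mathrm{V}|}\right),
\end{equation*}
which meets the target bound on $|\mathcal{S}|$.

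For the running time I would argue that BFS, the level-selection step, enumeration of non-tree edges, and the homological cut-choice (with the combinatorial embedding given as input, encoded by rotation systems of total size $O(|\mathrm{V}|+g)$ via Euler's formula $|\mathrm{V}|-|\mathrm{E}|+|F|=2-2g$) can all be carried out in $O(|\mathrm{V}|+g)$ time by maintaining the rotation system and face data structure incrementally after each cut. Balance ($|\mathrm{V}_1|,|\mathrm{V}_2|\ge |\mathrm{V}|/3$) is inherited from the choice of $L_m$ in step one and preserved by passing to $\mathrm{G}'$; the absence of $\mathrm{V}_1$--$\mathrm{V}_2$ edges follows because every edge of $\mathrm{G}$ either lies entirely above $L_{m-a}$, entirely below $L_{m+b}$, or has at least one endpoint on one of the chosen cycles, hence in $\mathcal{S}$. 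The delicate step is the homological cycle selection: controlling simultaneously (i) non-separability of each chosen cycle in the current residual surface and (ii) its small length relative to $|\mathrm{V}|/(g+1)$, which is where I would expect most of the technical work to live.
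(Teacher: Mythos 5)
This statement is not proved in the paper at all: it is imported verbatim from the cited reference (the Gilbert--Hutchinson--Tarjan separator theorem for bounded-genus graphs), so there is no in-paper proof to compare against. Your sketch is, in outline, a faithful reconstruction of exactly that cited argument: BFS levels and level truncation to obtain a shallow slab, contraction of the top part, fundamental cycles of the BFS tree, and repeated cutting along short non-separating cycles to reduce the genus until Lipton--Tarjan applies. So the approach is the right one and the arithmetic $O(g)\cdot O\bigl(\sqrt{|\mathrm{V}|/(g+1)}\bigr)+O\bigl(\sqrt{|\mathrm{V}|}\bigr)=O\bigl(\sqrt{(g+1)|\mathrm{V}|}\bigr)$ is consistent with the target bound, as is your (correctly flagged) assumption that the combinatorial embedding is given as input, without which the $O(|\mathrm{V}|+g)$ running time is unattainable since computing genus is hard.

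Two concrete loose ends. First, your accounting omits the two truncation levels $L_{m-a}$ and $L_{m+b}$, which also go into $\mathcal{S}$: the crux of the quantitative argument is the trade-off that within distance $O\bigl(\sqrt{|\mathrm{V}|/(g+1)}\bigr)$ of the median level one can find, by averaging \emph{over levels} (not over non-tree edges), levels of size $O\bigl(\sqrt{(g+1)|\mathrm{V}|}\bigr)$; this simultaneously bounds the slab depth (hence every fundamental cycle length, automatically, with no averaging needed for the cycles) and the contribution of the boundary levels to $\mathcal{S}$. As written, your averaging step is attached to the wrong object and the level sizes never enter your final tally. Second, the genuinely delicate point you defer --- ``homological cycle selection'' --- is in the cited proof a topological lemma asserting that, as long as the residual surface has positive genus, some \emph{fundamental cycle of the spanning tree} is non-separating (so shortness is free from the depth bound); you also need to track the vertex duplication created by each cut so that the final planar separator and the balance bounds lift back to the original graph, with all duplicated/cut vertices charged to $\mathcal{S}$. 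These are exactly the places where the cited source does its technical work, and a complete write-up would have to supply them.
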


We call set $\mathcal{S}$ a \textit{balanced separator} since it splits the vertices of $\mathrm{V}$ into two ``large" subsets (each of size $\geq c |\mathrm{V}|$ for some universal constant $c$; in our case $c=\frac{1}{3}$). Balanced separators are useful since they often enable using divide-and-conquer strategies to solve problems on graphs. As we show soon, this holds for the GFI problem.

\subsection{Towards SeparatorFactorization: BCTW Graphs}
\label{sec:sign}

Let us consider first an extreme case where bounded-size balanced separators exist. A prominent class of graphs with this property is a family of  \textit{bounded connected treewidth} (BCTW) graphs. We next introduce the concept of \textit{treewidth} ($\mathrm{tw}$), one of the most important graph parameters of modern structural graph theory.

\begin{definition}[tree-decomposition \& treewidth]
A tree-decomposition of a given undirected graph $\mathrm{G}=(\mathrm{V},\mathrm{E})$ is a tree $T$ with  nodes corresponding to subsets (bags) $X_{1},\ldots,X_{L}$ of $\mathrm{V}$ satisfying the following:
\vspace{-3mm}
\begin{itemize}
\item $\bigcup_{i=1}^{L} X_{i} = \mathrm{V}$,
\item for every edge $\{u,w\} \in E$ there exists a bag $X_{i}$ such that $u,w \in X_{i}$,
\item for any two $X_{i},X_{j}$, the subset $X_{i} \cap X_{j}$ is 
contained in all nodes on the (unique) path from $X_{i}$ to $X_{j}$. 
\end{itemize}
\vspace{-3mm}
The \textit{treewidth} of $\mathrm{G}$ is the minimum over different tree-decompositions of $\mathrm{G}$ of 
$\max_{i=1,\ldots,L} |X_{i}|-1$. 
\end{definition}

We say that a family $\mathcal{G}$ of undirected and unweighted graphs has \textit{bounded connected treewidth}, if each $\mathrm{G} \in \mathcal{G}$ has a tree-decomposition, where all the bags are connected graphs of bounded size. BCTW-graphs are extensions of trees. Every tree is a BCTW graph, but BCTW graphs can contain cycles (while trees cannot).
It turns out that bags from the tree-decomposition are themselves separators. 

Next, we show that sparse BCTW graphs admit fast GFI:
\begin{theorem}
\label{theorem:bct_main}
If $\mathcal{G}$ is a family of bounded connected treewidth sparse graphs (i.e., with $|\mathrm{E}|=O(|\mathrm{V}|)$) then $(\mathcal{G},f)$ is $|\mathrm{V}|\log^{2}(|\mathrm{V}|)$-tractable for any $f:\mathbb{R} \rightarrow \mathbb{C}$.
\end{theorem}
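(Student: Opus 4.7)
I would generalize the centroid-decomposition strategy that makes unweighted trees $|\mathrm{V}|\log^{2}|\mathrm{V}|$-tractable (Table~\ref{table:speed}) to BCTW graphs by replacing a centroid \emph{vertex} with a centroid \emph{bag} of a tree decomposition, and handling the resulting bounded-size separator via a ``distance-profile bucketing'' trick that exploits bag connectedness.

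First I would fix a tree decomposition $(T,\{X_i\})$ of $\mathrm{G}$ with every bag connected and of size $\leq b = O(1)$; the hypothesis $|\mathrm{E}| = O(|\mathrm{V}|)$ together with bounded bag size forces $|T| = O(|\mathrm{V}|)$. Applying the standard centroid decomposition to $T$ produces a hierarchy of $O(\log |\mathrm{V}|)$ levels in which each centroid bag $X$, viewed as a vertex subset of $\mathrm{G}$, is a separator splitting the current subproblem into pieces of size at most a constant fraction of the whole. At each level I would precompute $\mathrm{dist}(v,s)$ for every $v$ in the subproblem and every $s \in X$ in $O(|\mathrm{V}|)$ time per level using $|X|=O(1)$ BFS sweeps, and then recurse on the pieces to accumulate same-side contributions.

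The crux is to evaluate, for each $v$, the cross-separator contribution $\sum_{u} f(\mathrm{dist}(u,v))\, x_u$ over $u$'s on the opposite side of $X$. The separator property gives $\mathrm{dist}(u,v) = \min_{s \in X}(\mathrm{dist}(u,s)+\mathrm{dist}(s,v))$, and the obstruction --- the non-linearity of $f \circ \min$ --- is what separates the BCTW case from the tree case. I would resolve it by exploiting that $X$ is connected and of size $b = O(1)$: the triangle inequality yields $|\mathrm{dist}(u,s)-\mathrm{dist}(u,s_0)| \leq b-1$ for any reference $s_0 \in X$, so the \emph{relative profile} $\delta(u) := (\mathrm{dist}(u,s)-\mathrm{dist}(u,s_0))_{s \in X}$ takes only $O(1)$ distinct integer-vector values. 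Bucketing the far-side vertices into $C_\delta = \{u : \delta(u)=\delta\}$, one gets
\[
\mathrm{dist}(u,v) \;=\; \mathrm{dist}(u,s_0) + \mu_\delta(v), \qquad \mu_\delta(v) \;:=\; \min_{s \in X}\bigl(\delta_s + \mathrm{dist}(s,v)\bigr),
\]
so that within each bucket the distance separates additively into a $u$-only and a $v$-only piece.

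Consequently the class-$\delta$ contribution reads $\sum_k A_\delta(k)\, f(k+\mu_\delta(v))$ with $A_\delta(k) := \sum_{u \in C_\delta,\, \mathrm{dist}(u,s_0)=k} x_u$ --- a one-dimensional discrete correlation in $k$ of exactly the form that appears in the unweighted-tree case. Tabulating $g_\delta(m) := \sum_k A_\delta(k)\, f(k+m)$ for all relevant shifts via FFT costs $O(D \log D)$ with $D = O(|\mathrm{V}|)$ the diameter, after which each $v$ needs $O(1)$ lookups across the $O(1)$ buckets; combined with $O(|\mathrm{V}|)$ per-level BFS and the $O(\log|\mathrm{V}|)$ centroid levels, the total is the claimed $O(|\mathrm{V}|\log^{2}|\mathrm{V}|)$. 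The hardest step is the bucketing reduction: collapsing $f \circ \min$ into $O(1)$ tree-style FFT correlations uses \emph{connectedness} of the bag crucially (to bound $\mathrm{diam}_\mathrm{G}(X)$ and hence the profile alphabet) and \emph{boundedness} of its size (to keep the number of buckets $O(1)$); dropping either hypothesis would reintroduce a non-separable minimum inside $f$ and break the reduction.
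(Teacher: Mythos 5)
Your cross-separator machinery is essentially the paper's: the balanced separator is a connected bag of bounded size, the ``relative profile'' $\delta(u)$ is exactly the paper's signature vector $\rho_u$ (bounded entrywise by $|\mathcal{S}|-1$ because the bag is connected), and the bucket-wise additive split $\mathrm{dist}(u,v)=\mathrm{dist}(u,s_0)+\mu_\delta(v)$ followed by an FFT-evaluated one-dimensional correlation is the Hankel-matrix step of the paper's Substep 2.2. That part of your argument is sound and matches the intended proof.

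The genuine gap is in your treatment of the same-side contributions: you ``recurse on the pieces,'' implicitly assuming that for $u,v$ in the same piece $\mathcal{A}$ the distance inside the induced subgraph $\mathrm{G}[\mathcal{A}]$ (or inside the centroid-decomposition subproblem) equals $\mathrm{dist}_{\mathrm{G}}(u,v)$. This is true on trees, which is why centroid decomposition works there, but it fails for BCTW graphs: a shortest $u$--$v$ path may enter the separator $X$ at some $x$, detour through the opposite side $\mathcal{B}$, and re-enter at some $y\in X$. For the same reason, the identity $\mathrm{dist}(u,v)=\min_{s\in X}(\mathrm{dist}(u,s)+\mathrm{dist}(s,v))$ that your bucketing relies on is only valid when $X$ separates $u$ from $v$ in the graph in which the distances are measured, which stops being automatic at deeper levels of your hierarchy. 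The paper closes exactly this hole: since $\mathcal{S}$ is connected of size at most $t+1$, any such detour's $x$--$y$ portion is a shortest path between separator vertices of length at most $t$, so one can fix one shortest path per pair $(x,y)\in\mathcal{S}^2$ and let $\mathcal{B}'$ be the (constant-size, at most $t\binom{t+1}{2}$) set of $\mathcal{B}$-vertices on the short ones; recursing on the extended graph $\mathrm{G}[\mathcal{A}\cup\mathcal{S}\cup\mathcal{B}'\cup\mathcal{C}]$ (with $\mathcal{C}$ a constant-size set restoring bounded connected treewidth) preserves all $\mathcal{A}$--$\mathcal{A}$ distances, and the contributions of the constantly many added vertices are subtracted via correction terms computed by Dijkstra/BFS. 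Without this distance-preserving augmentation (or an equivalent correction mechanism), your recursion computes $f$ of the wrong distances for same-side pairs whose shortest paths cross the separator, so the proof as written does not go through; with it, your outline becomes the paper's argument, and the recursion $T(N)=T(N_1)+T(N_2)+O(N\log N)$ over balanced parts gives the claimed $O(N\log^2 N)$ bound (see Sec.~\ref{sec:app_thmbct}).
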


Interestingly, a family $\mathcal{G}$ has bounded connected treewidth iff all the geodesic cycles of graphs $\mathrm{G} \in \mathcal{G}$ have bounded length (see \citealp{diestel}). Thus, as a corollary, we immediately get the following result:

\begin{corollary}
If $\mathcal{G}$ is a family of sparse graphs with geodesic cycles of bounded length, then $(\mathcal{G},f)$ is $|\mathrm{V}|\log^{2}(|\mathrm{V}|)$-tractable for any function $f:\mathbb{R} \rightarrow \mathbb{C}$.
\end{corollary}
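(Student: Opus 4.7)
The plan is to derive the corollary by composing two facts that are both already in hand: (i) Theorem~\ref{theorem:bct_main}, which provides $|\mathrm{V}|\log^{2}(|\mathrm{V}|)$-tractability for any family of bounded connected treewidth sparse graphs and any $f:\mathbb{R}\to\mathbb{C}$; and (ii) the structural equivalence attributed to Diestel in the paragraph immediately preceding the corollary, namely that bounded connected treewidth of a graph family is the same as having a uniform bound on the lengths of geodesic cycles of its members. Since the hypothesis of the corollary matches the ``geodesic cycle'' side of (ii), and the conclusion of (i) is precisely the conclusion we want, the whole argument is a two-line composition.

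First, I would spell out the direction of (ii) that I need: if there exists a constant $\ell$ such that every geodesic cycle of every $\mathrm{G}\in\mathcal{G}$ has length at most $\ell$, then each such $\mathrm{G}$ admits a tree-decomposition whose bags are connected subgraphs of $\mathrm{G}$ of size at most some constant $w(\ell)$ depending only on $\ell$. This places $\mathcal{G}$ inside the class to which Theorem~\ref{theorem:bct_main} applies. The sparsity hypothesis $|\mathrm{E}|=O(|\mathrm{V}|)$ transfers verbatim from the corollary to the theorem, so all hypotheses are met and $|\mathrm{V}|\log^{2}(|\mathrm{V}|)$-tractability follows.

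The only subtle point—and thus the main obstacle in making the argument rigorous rather than finding the right idea—is verifying that the Diestel-style equivalence yields a bound on bag size that is \emph{uniform} across $\mathcal{G}$, rather than a bound that degrades graph by graph. This should be read off from the quantitative form of the cited result: the function $\ell \mapsto w(\ell)$ depends only on the geodesic-cycle bound $\ell$ and not on the individual graph, so quantifying over all $\mathrm{G}\in\mathcal{G}$ with the same $\ell$ gives a single uniform bound on connected treewidth. With that uniformity recorded, the invocation of Theorem~\ref{theorem:bct_main} is immediate and the corollary is proved.
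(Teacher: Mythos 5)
Your proposal is correct and matches the paper's own derivation: the corollary is obtained there exactly by combining Theorem~\ref{theorem:bct_main} with the cited equivalence between bounded-length geodesic cycles and bounded connected treewidth (the quantitative, uniform version of which appears in the appendix on tree-decompositions with connected bags, where bag size is bounded by a function of the geodesic-cycle bound alone). Your extra remark about uniformity of the bound across the family is a reasonable point to make explicit, but no further argument is needed.
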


Below we provide a sketch of the proof of Theorem \ref{theorem:bct_main}, which also serves as pseudocode with a detailed explanation of each step. The full proof is given in Appendix Sec. \ref{sec:app_thmbct}. This sketch will be sufficient to develop a ``practical" version of the method that can be applied to bounded genus mesh graphs.
We introduce extra notation for $\mathcal{Y},\mathcal{Z} \subseteq \mathrm{V}$: 
\begin{align}
\begin{split}
i^{\mathrm{G}}_{\mathcal{Z}}(v) \overset{\mathrm{def}}{=} \sum_{w \in \mathcal{Z}} \mathrm{K}(w,v)\mathcal{F}(w), \\
i^{\mathrm{G}}_{\mathcal{Z}}(\mathcal{Y}) \overset{\mathrm{def}}{=} \{i^{\mathrm{G}}_{\mathcal{Z}}(y):y \in \mathcal{Y}\}.
\end{split}
\end{align}

\textit{Proof sketch of Theorem \ref{theorem:bct_main}:}

\begin{figure}[!htb]
  \includegraphics[width=0.99\linewidth]{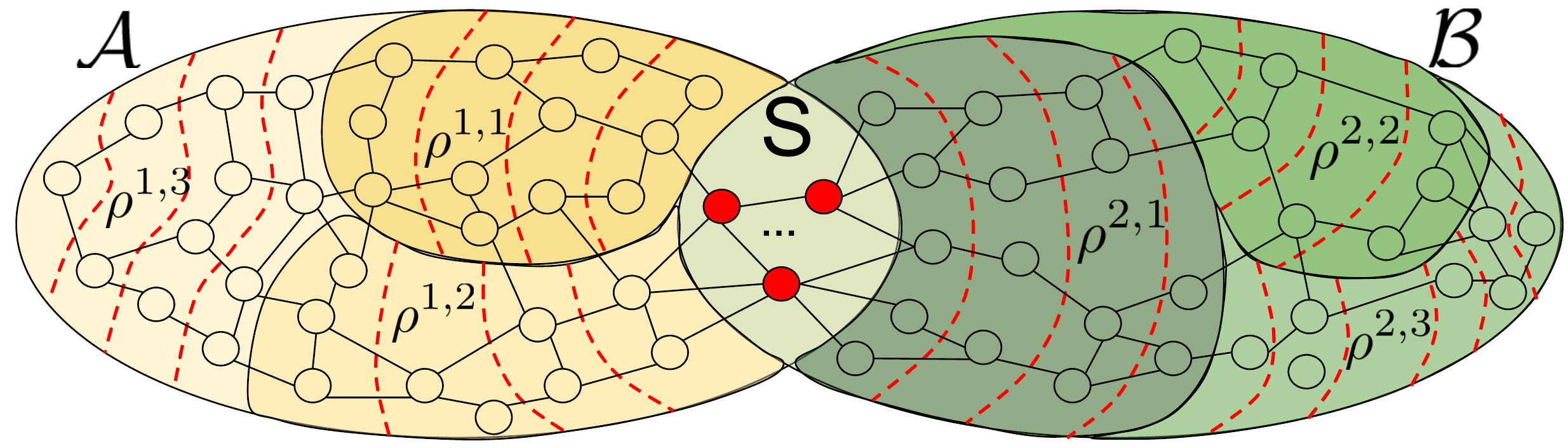}
\vspace{-3mm}  
\caption{\small{Visualization of the proof-sketch of Theorem \ref{theorem:bct_main}. Subsets $\mathcal{A}$ and $\mathcal{B}$ are sliced based on the signature vectors $\rho^{x,y}$ (different shadows of yellow and green). Slices are further partitioned based on the distance from the separator $\mathcal{S}$ (dotted red lines).}}
\label{fig:bct-vis}
\vspace{-2.4mm}
\end{figure}

\paragraph{Step 1: Balanced separation \& initial integration.} We start by finding a small (constant size) balanced separator $\mathcal{S}=\{s_{1},\ldots,s_{|\mathcal{S}|}\}$ splitting $\mathrm{V}$ into two ``large" subsets: $\mathcal{A}$ and $\mathcal{B}$ ($|\mathcal{A}|,|\mathcal{B}| > cN$ for some universal constant $c>0$). It turns out that this can be done in time $O(N)$ (see: Sec. \ref{sec:app_thmbct}). We compute $i^{\mathrm{G}}_{\mathrm{V}}(\mathcal{S})$ using Dijkstra's algorithm ($O(N\log(N))$ time complexity) or its improved variant ($O(N\log \log(N))$ time complexity; \citealp{thorup}). 

It suffices to compute $i^{\mathrm{G}}_{\mathrm{V}}(\mathcal{A} \cup \mathcal{B})$.
Note that for every $v$:
\begin{equation}
i(v) = i^{\mathrm{G}}_{\mathcal{A}}(v) + i^{\mathrm{G}}_{\mathcal{B}}(v) + i^{\mathrm{G}}_{\mathcal{S}}(v).
\end{equation}

\paragraph{Step 2: Computing $i^{\mathrm{G}}_{\mathcal{S}}(\mathcal{A} \cup \mathcal{B})$.} As before, this can be done in time $O(N\log(N))$ (or even $O(N\log\log(N))$).

\paragraph{Step 3: Computing $i^{\mathrm{G}}_{\mathcal{A}}(\mathcal{A})$ and $i^{\mathrm{G}}_{\mathcal{B}}(\mathcal{B})$.}
This can be done by running the algorithm recursively on the sub-graphs $\mathrm{G}[\mathcal{A}]$, $\mathrm{G}[\mathcal{B}]$ of $\mathrm{G}$ induced by $\mathcal{A}$ and $\mathcal{B}$ respectively (for a rigorous proof, we actually need to run it for extended versions of $\mathrm{G}[\mathcal{A}]$ and $\mathrm{G}[\mathcal{B}]$ since the shortest path between two vertices in $\mathcal{A}/\mathcal{B}$ can potentially use vertices $\notin \mathcal{A}/\mathcal{B}$; crucially, as we show in Sec. \ref{sec:app_thmbct}, those extended versions are obtained by adding only a \textbf{constant} number of extra vertices; for the practical variant we apply the simplified version though). 


\paragraph{Step 4: Computing $i^{\mathrm{G}}_{\mathcal{A}}(\mathcal{B})$ and $i^{\mathrm{G}}_{\mathcal{B}}(\mathcal{A})$.}

We will show how to compute the latter. The former can be calculated in a completely analogous way.

\paragraph{Substep 4.1: $\mathcal{A},\mathcal{B}$-slicing based on signature vectors.}
For every vertex $v \in\mathcal{A} \cup \mathcal{B}$, we define $\chi_{v} \in \mathbb{R}^{|\mathcal{S}|}$ as $\chi_{v}[k]=\mathrm{dist}(v,s_{k})$ for $k=1,\ldots,|\mathcal{S}|$. Write: $\chi_{v}=\tau_{v} + \rho_{v}$, where $\tau_{v}[i]=\min_{k \in \mathcal{S}} \mathrm{dist}(v,k),  \forall i$. We call vector $\rho_{v} \in \mathbb{R}^{|\mathcal{S}|}$ the \textit{signature vector} ($\mathrm{sg}$-$\mathrm{vect}$).  The critical observation is that not only does this vector have bounded dimensionality (since $\mathcal{S}$ is of constant size), but a bounded number of different possible values of different dimensions, i.e., for every $i=1,\ldots,|\mathcal{S}|$:
\begin{equation}
0 \leq \rho_{v}[i] \leq |\mathcal{S}| - 1.
\end{equation}
This is an immediate consequence of the fact that the separator is connected. This implies that there is only a finite (yet super-exponentially large in $|\mathcal{S}|$ !) number of different signature vectors $\rho_{v}$. We partition $\mathcal{A}$ into subsets corresponding to different signature vectors, called: $\rho^{1,1},\rho^{1,2},\ldots$ and similarly, partition $\mathcal{B}$ into subsets corresponding to different signature vectors: $\rho^{2,1},\rho^{2,2},\ldots$ (see Fig. \ref{fig:bct-vis}).

\paragraph{Substep 4.2: Partitioning slices.}

Fix a subset $A_{\rho^{1,l}} \subseteq \mathcal{A}$ corresponding to some $\rho^{1,l}$ and a subset $B_{\rho^{2,t}} \subseteq \mathcal{B}$ corresponding to some $\rho^{2,t}$. Note that for every $v \in A_{\rho^{1,l}}$ and $w \in B_{\rho^{2,t}}$ the following holds:
\begin{equation}
\label{eq:neat_formula}
\mathrm{dist}(w,v) = \tau_{w}[1] + \tau_{v}[1] + \min_{k \in \mathcal{S}} (\rho^{2,t}[k] + \rho^{1,l}[k]).
\end{equation}
Furthermore, the last element of the RHS above does not depend on $w$ and $v$, and $\mathrm{dist}(w,v)$ depends only on the distances of $w$ and $v$ from $\mathcal{S}$. We thus partition subsets $A_{\rho^{1,l}}$ and $B_{\rho^{2,t}}$ based on the value of $\tau_{v}[1]$ and $\tau_{w}[1]$ accordingly. To compute $i^{\mathrm{G}}_{\mathcal{B}^{2,t}}(A_{\rho^{1,l}})$, it is thus sufficient to compute a sequence: $\{i^{\mathrm{G}}_{\mathcal{B}^{2,t}}(v_{i})\}$ for $i=0,1,\ldots$ and where $v_{i}$ is an arbitrary vertex of $A_{\rho^{1,l}}$ with $\mathrm{dist}(v_{i},\mathcal{S})=i$. This can be done via a linear transformation encoded by a Hankel matrix $\mathbf{W}$ as in the proof of Lemma 6.1 from \cite{topmasking} using Fast Fourier Transform (FFT) in time $O(N\log(N))$. Furthermore, if $f_{\lambda}(x)=\exp(-\lambda x)$, multiplication with $\mathbf{W}$ can be done in time $O(N)$. Putting together computations for all (constant) number of pairs: $(A_{\rho^{1,l}}, B_{\rho^{2,t}})$, we conclude that $i_{\mathcal{B}}^{\mathcal{G}}(\mathcal{A})$ can be computed in time $O(N\log(N))$ (or even $O(N\log\log(N))$ for $f_{\lambda}(x)=\exp(-\lambda x)$). Solving the corresponding time complexity recursion, we obtain total \textbf{pre-processing time} $O(N\log(N))$ and \textbf{inference time} $O(N \log^{2}(N))$ (or even $O(N\log^{1.38}(N))$ for $f_{\lambda}(x)=\exp(-\lambda x)$). This completes the proof sketch.

\vspace{-3mm}
\subsection{SeparatorFactorization}
\label{sec:effiicient-sf}
The $\mathrm{SeparatorFactorization}$ method is a straightforward relaxation of the
the algorithm presented above.
The relaxation to make the approach practical (for approximate GFI) is based on the following pillars:
\begin{enumerate}
    \vspace{-3mm}
    \item \textbf{Separator truncation.} Replacing small $\mathcal{S}$ from Theorem \ref{thm:genus} (not necessarily of constant size) with its sub-sampled constant-size subset $\mathcal{S}^{\prime}$ (the other vertices of $\mathcal{S}$ are distributed across $\mathcal{A}$ and $\mathcal{B}$ randomly). Balanced separation is computed via the algorithmic version of Theorem \ref{thm:genus} (see Fig. \ref{fig:sphinx} for an example).
    \item \textbf{Clustering signature vectors.} Instead of partitioning sets $\mathcal{A}$ and $\mathcal{B}$ based on $\mathrm{sg}$-$\mathrm{vect}$s (their number is finite yet super-exponentially large in $|\mathcal{S}^{\prime}|$), the partitioning is based on their hashed versions. We use a constant number of hashes. Every hashing mechanism that can be computed in time $O(N\log(N))$ is acceptable, and thus LSH methods can be applied \cite{lsh}. We found that in practice, initial partitioning based on $\mathrm{sg}$-$\mathrm{vect}$ (substep 4.1) can be avoided. Good quality approximate GFI can be obtained by only one-level partitioning of $\mathcal{A}$ and $\mathcal{B}$ (based on the distance from $\mathcal{S}^{\prime}$). 
\vspace{-3mm}
\end{enumerate}
As in Sec. \ref{sec:sign}, this approach leads to $O(N\log(N))$ \textbf{pre-processing time} and $O(N\log^{2}(N))$ \textbf{inference time}. Furthermore, for weighted graphs, all the distances are effectively quantized (meaning natural numbers can approximate them). Finally, we stop the recursive unroll when the subsets $\mathcal{A}$ and $\mathcal{B}$ are small enough (when brute-force matrix-vector multiplication for GFI is fast enough).
\vspace{-2mm}
\subsection{RFDiffusion}
\label{sec:rfdiff}
In contrast to $\mathrm{SeparatorFactorization}$ above, the $\mathrm{RFDiffusion}$ algorithm leverages an $\epsilon$-NN (Nearest Neighbor) representation of point clouds (see Fig. \ref{fig:sphinx}). This representation is particularly convenient for the graph diffusion kernel from Eq. \ref{eq:diffusion} used by $\mathrm{RFDiffusion}$.

$\mathrm{RFDiffusion}$ starts by producing a low-rank decomposition of the weighted adjacency matrix $\mathbf{W}_{\mathrm{G}}$ of $\mathrm{G}$, defined via a set of vectors $\{n_{i} : i \in \mathrm{V}\} \subseteq \mathbb{R}^{d}$ and $f:\mathbb{R}^{d} \rightarrow \mathbb{R}$:
\vspace{-1mm}
\begin{equation}
\label{eq:w}
\mathbf{W}_{\mathrm{G}}(i,j)=f(\mathbf{n}_{i}-\mathbf{n}_{j}) .
\vspace{-2mm}
\end{equation}
The \textit{generalized $\epsilon$-NN graphs} are special instantiations of such graphs, where $f$ is defined as  
$f(\mathbf{z})=h(\|\mathbf{z}\|)$ for  non-increasing $h$ with compact support, where vectors $\mathbf{n}_{i}$ are the points themselves
and $\|\cdot\|$ is some norm (e.g. $f(\mathbf{z}) = \mathbbm{1}[\|\mathbf{z}\| \leq \epsilon]$, as for the regular $\epsilon$-NN graph).
\begin{figure}\vspace{-2mm}
\centering
  \includegraphics[width=0.75\linewidth]{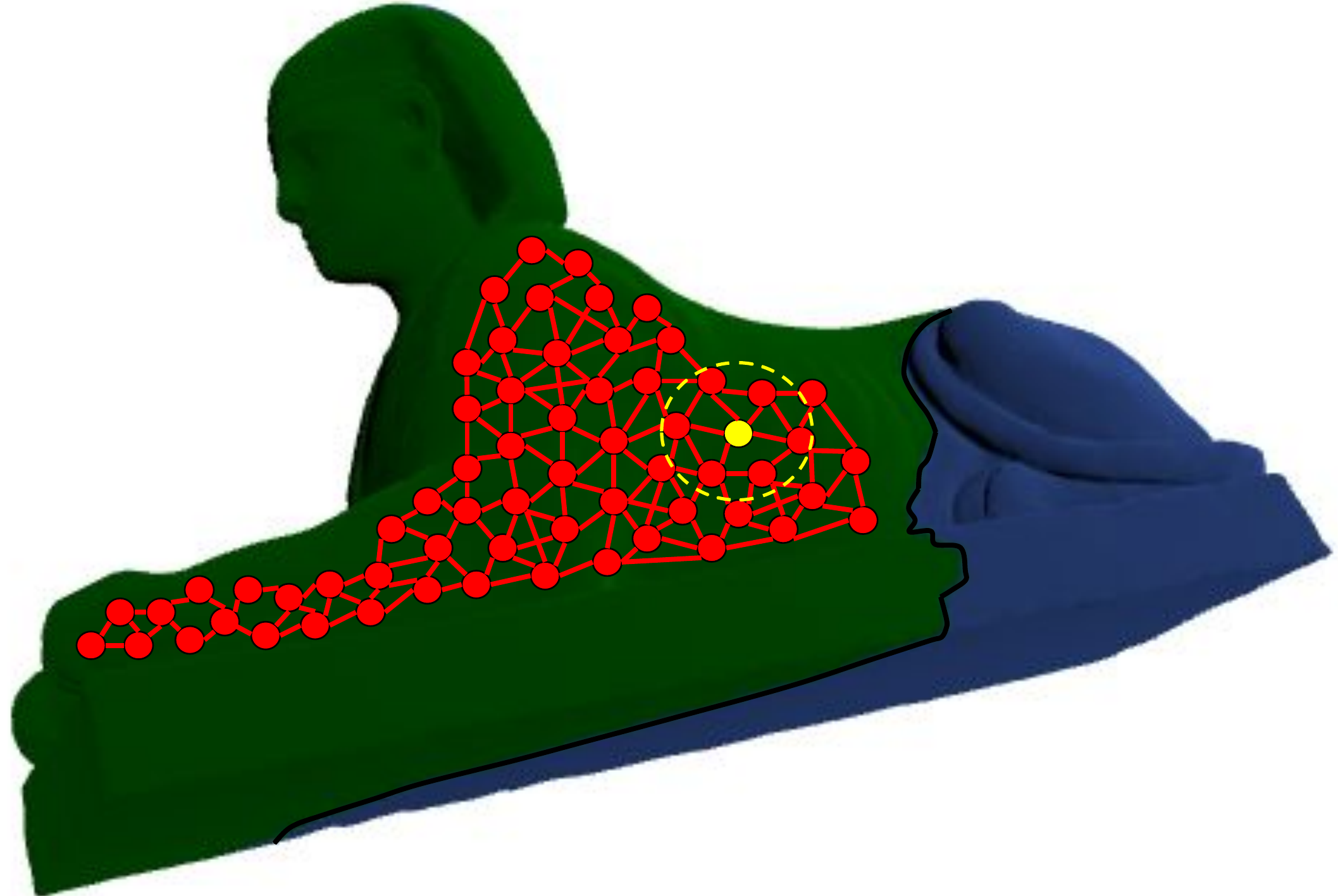}
\vspace{-3mm}  
\caption{\small{A sphinx mesh with $\mathbf{1.17M}$ faces, its first-level balanced separation obtained via $\mathrm{SeparatorFactorization}$} (with $\mathbf{685K}$ faces entirely in one class and $\mathbf{486K}$ entirely in the second one) and a visualization of the $\epsilon$-NN graph leveraged by $\mathrm{RFDiffusion}$. The graph is different from the mesh graph; particularly in this picture, it is not planar even though the mesh graph is.}
\label{fig:sphinx}
\vspace{-4mm}
\end{figure}

Our goal is to rewrite: $\mathbf{W}_{\mathrm{G}}(i,j) \approx \phi(\mathbf{n}_{i})^{\top}\psi(\mathbf{n}_{j})$ for maps $\phi,\psi:\mathbb{R}^{d} \rightarrow \mathbb{C}^{m}$ and $m \ll N$.
Note that (for $\mathbf{i}^{2}=-1$):
{ \small
\begin{align}
\begin{split}
f(\mathbf{z}) &= \int_{\mathbb{R}^{d}} \exp(2 \pi \mathbf{i} \omega^{\top}\mathbf{z})\tau(\omega)d\omega\\
&= \int_{\mathbb{R}^{d}} \exp(2 \pi \mathbf{i} \omega^{\top}\mathbf{z})\frac{\tau(\omega)}{p(\omega)}p(\omega) d\omega,    
\end{split}
\end{align}
}
where $\tau: \mathbb{R}^{d} \rightarrow \mathbb{C}$ is the Fourier Transform (FT) of $f$ and $p$ is a pdf function corresponding to some probability distribution $P \in \operatorname{Prob}(\mathbb{R}^{d})$. Take 
$\omega_{1},\ldots,\omega_{m} \overset{\mathrm{iid}}{\sim} P$. For $\mathbf{v} \in \mathbb{R}^{d}$, define $\rho_{j}=2 \pi \mathbf{i} \omega_{j}^{\top}\mathbf{v}$ and $\nu_{i}=\sqrt{\frac{\tau(\omega_{i})}{p(\omega_{i})}}$. Then, using Monte Carlo approximation, we can estimate: $\mathbf{W}_{\mathrm{G}}(i,j) \approx \phi(\mathbf{n}_{i})^{\top}\psi(\mathbf{n}_{j})$
for $\phi=\sigma_{1}$, $\psi=\sigma_{-1}$ and
$\sigma_{c}(\mathbf{v}) = \frac{1}{\sqrt{m}} 
\left(\exp(2 \pi c \mathbf{i} \omega_{1}^{\top}\mathbf{v})\nu_{1},\ldots,\exp(2 \pi c \mathbf{i} \omega_{m}^{\top}\mathbf{v})\nu_{m}\right)^{\top} \vspace{-1mm}
$.

\textbf{Note.} Distribution $P$ should 1) provide efficient sampling, 2) have easy-to-compute pdf, and 3) (ideally) provide low estimation variance. Here we use (truncated) Gaussian.

We conclude that we can decompose $\mathbf{W}_{\mathrm{G}}(i,j)$ as $\mathbf{W}_{\mathrm{G}}(i,j) = \mathbf{A}\mathbf{B}^{T}$, where the rows of $\mathbf{A} \in \mathbb{R}^{N \times m}$ and $\mathbf{B} \in \mathbb{R}^{N \times m}$ are given as:
$\{\sigma_{1}(\mathbf{n}_{j}):j \in \mathrm{V}\}$ and $\{\sigma_{-1}(\mathbf{n}_{j}):j \in \mathrm{V}\}$ respectively. Now note that we have the following:
\vspace{-2mm}
{\small
\begin{align}
\begin{split}
\exp(\Lambda \cdot \mathbf{A}\mathbf{B}^{\top}) &= \sum_{i=0}^{\infty} \frac{1}{i!}(\Lambda \mathbf{A}\mathbf{B}^{\top})^{i} \\
=
\mathbf{I}& + \sum_{i=0}^{\infty}\frac{1}{(i+1)!}\mathbf{A}(\Lambda\mathbf{B}^{\top}\mathbf{A})^{i+1}\mathbf{A}^{-1} \\
= 
\mathbf{I}& + \mathbf{A}[\exp(\Lambda \mathbf{B}^{\top}\mathbf{A}) - \mathbf{I}](\mathbf{B^{\top}\mathbf{A}})^{-1}\mathbf{B}^{\top}
\end{split}
\end{align}
}

\begin{figure*}[!htb]
\begin{center}
  \includegraphics[width=0.8\linewidth]{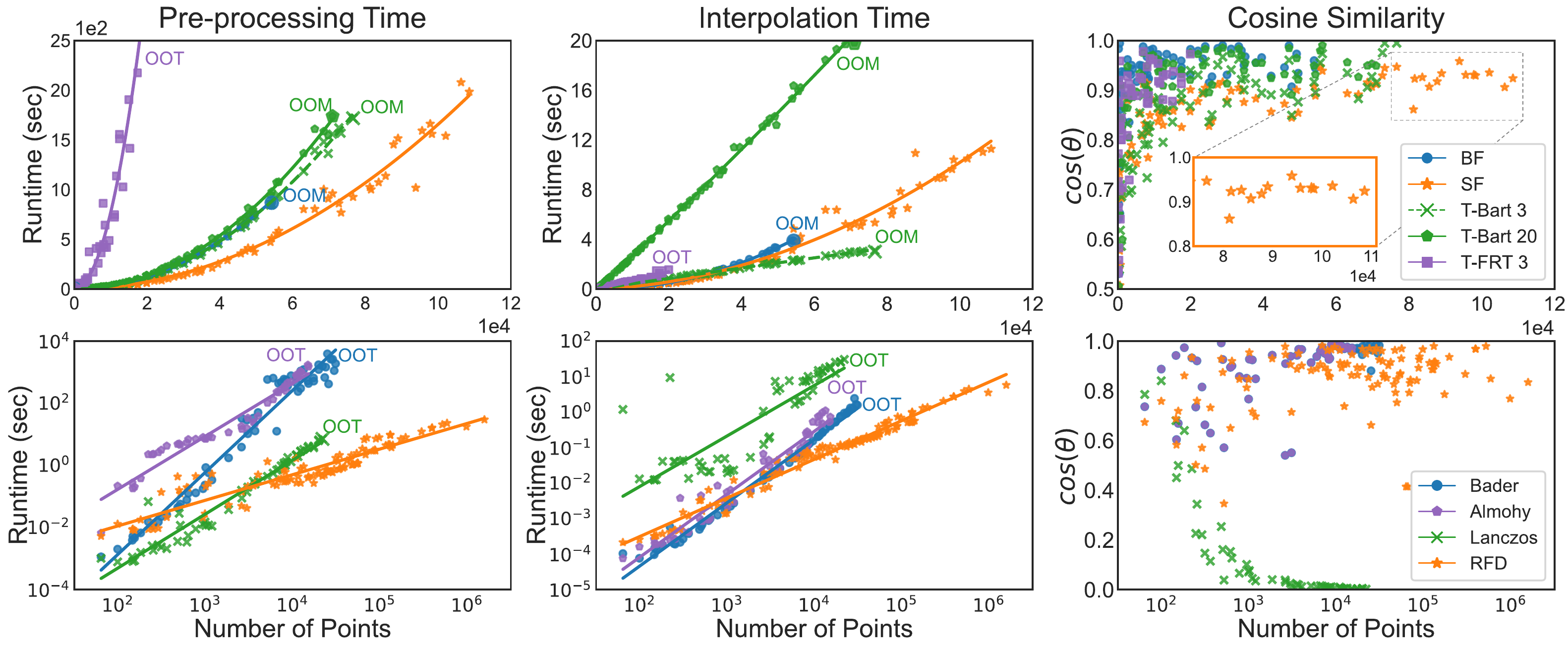}
\vspace{-5mm}  
\caption{\small{\textbf{First Row}: Vertex normal prediction with SF and comparison with relevant low-distortion trees methods. \textbf{Second Row}: The same task, but with RFD and the corresponding methods for multiplications with matrix exponentials. All methods except \SF \ and \RFD \ either went out of memory (OOM) or ran out of time (OOT). The two algorithms maintain high accuracy even on large meshes.}}
\label{fig:vertex_normal}
\end{center}
\end{figure*}
\vspace{-2mm}
We can thus approximate: $\exp(\Lambda \cdot \mathbf{W}_{\mathrm{G}})\mathbf{x}$ for any vector $\mathbf{x} \in \mathbb{R}^{N}$ (which leads to the GFI algorithm) as:
{\small
\begin{equation}
\mathbf{x} + 
\mathbf{A}([\exp(\Lambda \mathbf{B}^{\top}\mathbf{A}) - \mathbf{I}]((\mathbf{B^{\top}\mathbf{A}})^{-1}(\mathbf{B}^{\top}
\mathbf{x}))),
\end{equation}
}
where brackets indicate the order of computations. We see that this algorithm has: (a) \textbf{pre-processing} time linear in $N$ and cubic in the number of random features $m$, and (b) \textbf{inference} stage linear in $N$ and quadratic in $m$. Algorithm $\mathrm{RFDiffusion}$ can be thought of as approximating the graph given by Eq. \ref{eq:w} via random feature map-based smoothing (in our applications with $f$ given as a threshold function and $d=3$) and has an excellent property - its running time is \textbf{independent} (Fig. \ref{fig:epsilon_ablates} in Appendix~\ref{sec:ablates}) of the number of edges of the graph (that is never explicitly materialized). 

It remains to compute function $\tau$. Fortunately, this is easy for several threshold functions $f$ defining $\epsilon$-NN graphs.  For instance, for $f(\mathbf{z}) = \mathbbm{1}[\|\mathbf{z}\|_{1} \leq \epsilon]$ and $\xi \in \mathbb{R}^{d}$, we have: 
\vspace{-3mm}
{\small
\begin{equation}
\tau(\mathbf{\xi}) = \prod_{i=1}^{d} \frac{\mathrm{sin}(2\epsilon \xi_{i})}{\xi_{i}},
\end{equation}
}
and for $f(\mathbf{z}) = \mathbbm{1}[\|\mathbf{z}\|_{2} \leq \epsilon]$, $\tau$ is the $d$-th order \textit{Bessel function} \cite{bessel}.

We quantify the quality of the estimation of the original $\epsilon$-NN graph with $\mathrm{L}_{1}$-norm via $\mathrm{RFDiffusion}$ (proof in Sec \ref{sec:lemma_rfd}). Analogous results can be derived from other norms.

\begin{lemma}
\label{lemma:rfd}
Take the $\epsilon$-NN point cloud graph in $\mathbb{R}^{3}$ with respect to the $\mathrm{L}_{1}$-norm. For two given vertices $v$ and $w$, denote by $\mathrm{MSE}(\widehat{\mathbf{W}}(v,w))$ the mean squared error of the $\mathrm{RFDiffusion}$-based estimation of the true weight $\mathbf{W}(v,w)$ between $v$ and $w$ (defined as: $1$ if $\mathrm{dist}_{\mathrm{L}_{1}}(v,w) \leq \epsilon$ and $0$ otherwise). Let $\mathrm{P}$ be a Gaussian distribution truncated to the $L_{1}$-ball $\mathcal{B}(R)$ of radius $R$, used by RFD. Assume that $v$ and $w$ are encoded by $\mathbf{n}_{v}$ and $\mathbf{n}_{w}$. Then:
{\small
\begin{align}
\begin{split}
\mathrm{MSE}(\widehat{\mathbf{W}}(v,w)) 
\leq \frac{1}{m}\left((2\pi)^{\frac{3}{2}}C(\Gamma_{\epsilon}(R))^{d}-\theta_{1}^{2}\right)+\theta_{2}, 
\vspace{-2mm}
\end{split}
\end{align}
}
\end{lemma}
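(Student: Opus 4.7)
The plan is to use the standard bias--variance decomposition
\[
\mathrm{MSE}(\widehat{\mathbf{W}}(v,w)) \;=\; \mathrm{Var}(\widehat{\mathbf{W}}(v,w)) \;+\; \bigl|\,\mathbb{E}[\widehat{\mathbf{W}}(v,w)] - \mathbf{W}(v,w)\,\bigr|^{2},
\]
so that the variance term produces the $\tfrac{1}{m}\bigl((2\pi)^{3/2}C(\Gamma_{\epsilon}(R))^{d} - \theta_{1}^{2}\bigr)$ piece and the squared bias produces $\theta_{2}$. Write $\mathbf{z}:=\mathbf{n}_{v}-\mathbf{n}_{w}$ and let $X:=e^{2\pi\mathbf{i}\,\omega^{\top}\mathbf{z}}\,\tau(\omega)/p(\omega)$ for $\omega\sim P$; the estimator is $\widehat{\mathbf{W}}(v,w)=\tfrac{1}{m}\sum_{j=1}^{m}X_{j}$ with i.i.d.\ copies $X_{j}$.

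\paragraph{Bias / $\theta_{2}$.} A direct importance-sampling calculation gives
$\theta_{1}:=\mathbb{E}[X]=\int_{\mathcal{B}(R)} e^{2\pi\mathbf{i}\,\omega^{\top}\mathbf{z}}\tau(\omega)\,d\omega$, i.e.\ the truncation to $\mathcal{B}(R)$ of the Fourier inversion formula for $f(\mathbf{z})=\mathbbm{1}[\|\mathbf{z}\|_{1}\le\epsilon]$. Hence the bias equals the Fourier tail $-\int_{\mathbb{R}^{3}\setminus\mathcal{B}(R)} e^{2\pi\mathbf{i}\,\omega^{\top}\mathbf{z}}\tau(\omega)\,d\omega$, whose squared magnitude is controlled using the explicit product formula $\tau(\xi)=\prod_{i=1}^{3}\sin(2\epsilon\xi_{i})/\xi_{i}$ (which is integrable at infinity); this quantity is what we package into $\theta_{2}$.

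\paragraph{Variance.} Since the $X_{j}$ are i.i.d., $\mathrm{Var}(\widehat{\mathbf{W}}(v,w))=\tfrac{1}{m}(\mathbb{E}[|X|^{2}]-|\theta_{1}|^{2})$, and $|e^{2\pi\mathbf{i}\,\omega^{\top}\mathbf{z}}|=1$ yields
$\mathbb{E}[|X|^{2}]=\int_{\mathcal{B}(R)} |\tau(\omega)|^{2}/p(\omega)\,d\omega$. Substituting the truncated Gaussian density $p(\omega)=e^{-\|\omega\|_{2}^{2}/2}/\bigl((2\pi)^{3/2}\Gamma_{\epsilon}(R)\bigr)$, where $\Gamma_{\epsilon}(R)$ is the standard Gaussian mass inside $\mathcal{B}(R)$, gives
\[
\mathbb{E}[|X|^{2}] \;=\; (2\pi)^{3/2}\,\Gamma_{\epsilon}(R)\!\int_{\mathcal{B}(R)} |\tau(\omega)|^{2}\,e^{\|\omega\|_{2}^{2}/2}\,d\omega.
\]
Using the elementary estimate $|\sin(2\epsilon u)/u|\le 2\epsilon$ I would upper-bound $|\tau|^{2}e^{\|\omega\|_{2}^{2}/2}$ on $\mathcal{B}(R)$ by a uniform constant absorbing $(2\epsilon)^{6}e^{R^{2}/2}$, and then overapproximate the $L_{1}$-ball of radius $R$ by the $L_{\infty}$-cube $[-R,R]^{d}$, which factors the remaining integral into $d$ one-dimensional pieces, contributing the additional $(\Gamma_{\epsilon}(R))^{d-1}$. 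Collecting everything produces the desired bound $(2\pi)^{3/2}C(\Gamma_{\epsilon}(R))^{d}$ on $\mathbb{E}[|X|^{2}]$, and subtracting $\theta_{1}^{2}$ and dividing by $m$ yields the variance part of the lemma.

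\paragraph{Main obstacle.} The delicate step is the bookkeeping in the variance calculation: one must split the integrand so that exactly $d$ copies of $\Gamma_{\epsilon}(R)$ are produced --- one coming from the $1/p$ normalization and the remaining $d-1$ from the $L_{\infty}$-cube overapproximation of $\mathcal{B}(R)$ --- while keeping the residual ``$C$'' factor dimension-independent. A secondary subtlety is that $\widehat{\mathbf{W}}(v,w)$ is in general complex-valued, so the variance decomposition must consistently use $|\cdot|^{2}$; this is why the $\theta_{1}^{2}$ appearing in the lemma should be read as $|\theta_{1}|^{2}$ (these coincide here because $\tau$ is even, making $\theta_{1}$ real).
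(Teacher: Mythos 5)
Your high-level plan is the same as the paper's: represent $\widehat{\mathbf{W}}(v,w)$ as an i.i.d.\ average of importance-weighted Fourier features, observe that its expectation is the truncated Fourier integral $\tilde f(\mathbf{z})=\int_{\mathcal{B}(R)}\exp(2\pi\mathbf{i}\,\omega^{\top}\mathbf{z})\tau(\omega)\,d\omega$, and then apply the bias--variance decomposition, bounding the second moment by inserting the truncated-Gaussian density and factoring coordinatewise. (The paper actually uses the real estimator $X_i=\cos(2\pi\omega_i^{\top}\mathbf{z})\tau(\omega_i)/p(\omega_i)$ and bounds $\cos^2\le 1$, which plays the same role as your $|e^{2\pi\mathbf{i}\,\omega^{\top}\mathbf{z}}|=1$ step, so that part matches.)

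The gap is in the variance bookkeeping: as written it neither matches the quantities in the statement nor is self-consistent. In the paper (the definitions appear immediately after the lemma), $C=\int_{\mathcal{B}(R)}(2\pi)^{-3/2}e^{-\|\mathbf{r}\|^{2}/2}d\mathbf{r}$ is the Gaussian mass of $\mathcal{B}(R)$, i.e.\ exactly the normalizer released by $1/p$, since $p^{-1}(\omega)=(2\pi)^{3/2}C\,e^{\|\omega\|^{2}/2}$; whereas $\Gamma_{\epsilon}(R)=\int_{-R}^{R}\sin^{2}(\epsilon x)/x^{2}\,dx$ is a one-dimensional sinc-squared integral, and \emph{all} $d$ copies of it come from factoring $\tau^{2}(\omega)=\prod_{i}\sin^{2}(2\epsilon\omega_i)/\omega_i^{2}$ coordinatewise over (a cube containing) $\mathcal{B}(R)$. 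You have swapped these roles: you take $\Gamma_{\epsilon}(R)$ to be the Gaussian mass (``one copy from the $1/p$ normalization'') and $C$ to be a residual constant absorbing $(2\epsilon)^{6}e^{R^{2}/2}$, which contradicts the lemma's definitions, so your argument does not establish the stated inequality. Moreover, the step ``bound $|\tau|^{2}e^{\|\omega\|^{2}/2}$ by a uniform constant and then factor the remaining integral into $d$ one-dimensional pieces contributing $(\Gamma_{\epsilon}(R))^{d-1}$'' cannot work as described: once the integrand has been replaced by a constant, what remains is just the volume of $\mathcal{B}(R)$ and there is nothing left to factor into sinc integrals; the paper instead keeps $\tau^{2}$ intact and gets $(2\pi)^{3/2}C$ from $1/p$ multiplied by a product of $d$ one-dimensional integrals. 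A secondary mismatch: $\theta_{2}$ in the paper is not the squared bias but $\gamma(2f(\mathbf{z})+\gamma)=\tilde f^{2}(\mathbf{z})-f^{2}(\mathbf{z})$, obtained by writing $\mathrm{MSE}\le\mathrm{Var}+(\tilde f(\mathbf{z})-f(\mathbf{z}))(\tilde f(\mathbf{z})+f(\mathbf{z}))$; your $\gamma^{2}$ coincides with $\theta_{2}$ only when $f(\mathbf{z})=0$, so ``packaging'' the squared bias into $\theta_{2}$ needs the additional step relating $\gamma^{2}$ to $\gamma(2f(\mathbf{z})+\gamma)$.
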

for $\theta_{1}=(f(\mathbf{z})+ \gamma)$, $\theta_{2}=\gamma(2f(\mathbf{z})+\gamma)$, $\mathbf{z}=\mathbf{n}_{v}-\mathbf{n}_{w}$, $\mathrm{C} = \int_{\mathcal{B}(R)} (2\pi)^{-\frac{3}{2}}\exp(-\frac{\|\mathbf{r}\|^{2}}{2})d\mathbf{r}$,
$\Gamma_{\epsilon}(R)= \int_{-R}^{R}\frac{\sin^{2}(\epsilon x)}{x^{2}} dx$
and $\gamma = -\int_{\mathbb{R}^{d} \backslash \mathcal{B}(R)}
\cos(2\pi \omega^{\top}\mathbf{z})\prod_{i=1}^{d}\frac{\sin(2 \epsilon \omega_{i})}{\omega_{i}} d\omega$.

\section{Experiments}
\label{sec:exp}
To evaluate \SF\ and \RFD, we choose two broad applications: a) interpolation on meshes and b) Wasserstein distances and barycenters computation on point clouds. The kernel matrices chosen for \SF\ and \RFD\ are ~$\mathrm{K}(i,j):=\exp(-\lambda \mathrm{dist}(i,j))$ and~$\mathrm{K}:= \exp(\lambda \mathbf{W}_{\mathrm{G}})$ respectively. Moreover we demonstrate the effectiveness of \RFD\ kernel in various point cloud and graph classification tasks. 
 
\subsection{Interpolation on Meshes}\label{sec:interpolation_meshes}
In this section, we use our methods to predict the masked properties of meshes. In particular, we compare the computational efficiency of \SF\ and \RFD\ against baselines in predicting vertex normals and nodes' velocities in meshes.

\textbf{Vertex normal prediction.}\label{sec:vertex_normal_prediction}
In this setup, we predict the field of normals in vertices from its masked variant. We are given a set of nodes with vertex locations $\mathbf{x}_i\in\mathbb{R}^3$ and vertex normals $\mathbf{F}_i\in\mathbb{R}^3$ in a mesh $\mathrm{G}$ with vertex-set $\mathrm{V}$. In each mesh, we randomly select a subset $\mathrm{V}' \subseteq \mathrm{V}$ with $|\mathrm{V}'|=0.8|\mathrm{V}|$ and mask out their vertex normals (set as zero vectors). Our task is to predict the vertex normals of each masked node $i\in \mathrm{V}'$ computed as:
{\small
\begin{equation*} \vspace{-2mm}
\mathbf{F}_i=\sum_{j\in \mathrm{V}\setminus \mathrm{V}'}\mathrm{K}(i,j)\mathbf{F}_j,  
 \vspace{-1mm}
\end{equation*}
}
We perform a grid search on $\lambda$ and other algorithm-specific hyper-parameters for each mesh. We report the result with the highest cosine similarity between predicted and ground truth vertex normals, averaged over all the nodes.


We run tests on \textbf{120 meshes} for 3D-printed objects with a wide range of sizes from the Thingi10k~\cite{Thingi10K} dataset (see Sec. ~\ref{sec:mesh_interpolation} for details).
Fig. \ref{fig:vertex_normal} reports the pre-processing time, interpolation time, and cosine similarity for algorithms on meshes with different sizes $|\mathrm{V}|$. In the first row, we compare \SF\ with brute-force  ($\mathrm{BF}$) (explicit kernel-matrix materialization followed by matrix-vector multiplications) and low-distortion tree-based algorithms such as Bartal trees (T-Bart-n; $n$ is the number of trees, \citealp{bartal1996probabilistic}) and FRT trees (T-FRT, \citealp{fakcharoenphol2004tight}) (see Appendix~\ref{sec-graph-metric-approx} for details). \SF\ is the fastest in pre-processing and 
accurately interpolates on large meshes while BF, T-FRT, and T-Bart gradually run out of memory or time (OOM/OOT).

In the second row of Fig. \ref{fig:vertex_normal}, we compare \RFD\ with three other algorithms for multiplications with matrix exponentials, including Bader's algorithm \cite{bader2019computing}, Al-Mohy's algorithm \cite{al2010new}, and Lanczos method \cite{orecchia}. As the mesh size increases, the pre-processing time of Bader and Al-Mohy grows quickly. The performance of the Lanczos algorithm is positively correlated with hyper-parameter $m$, which controls the number of Arnoldi iterations. Even though we chose $m$ to be relatively large (which affects the interpolation time), its performance still drops quickly as mesh size grows. In contrast, \RFD\ scales well to large meshes. For example, on the mesh with \textbf{1.5M} nodes, \RFD\ needs only \textbf{29.7} seconds for the pre-processing and \textbf{5.7} seconds for interpolation. Detailed ablation studies are given in Sec.~\ref{sec:ablates_vertex_normal_prediction}.

\textbf{Velocity prediction.}
We further evaluate our algorithms on the deformable \texttt{flag\_simple} dataset from \cite{pfaff2020learning}. The largest mesh sizes from that dataset are of order $~\sim1.5$k nodes; thus, one can, in principle, apply brute force methods. Therefore this dataset was used only to provide a vision-based validation of the techniques. Fig. \ref{fig:acceleration_prediction} shows four sample snapshots of the mesh. The vertex location $\mathbf{x}_i\in\mathbb{R}^3$ and velocity $\mathbf{F}_i\in\mathbb{R}^3$ from each node $\mathbf{n}_i$ in the snapshot are used for interpolation. We randomly mask out 5\% of the nodes in each mesh and do a similar interpolation for vertex normals. In the supplementary material, we provide videos representing the dynamics of the deformable meshes and their corresponding fields (ground truth and predicted).

\vspace{0mm}  
\begin{figure}
\vspace{-2mm}
\centering
  \includegraphics[width=0.82\linewidth]{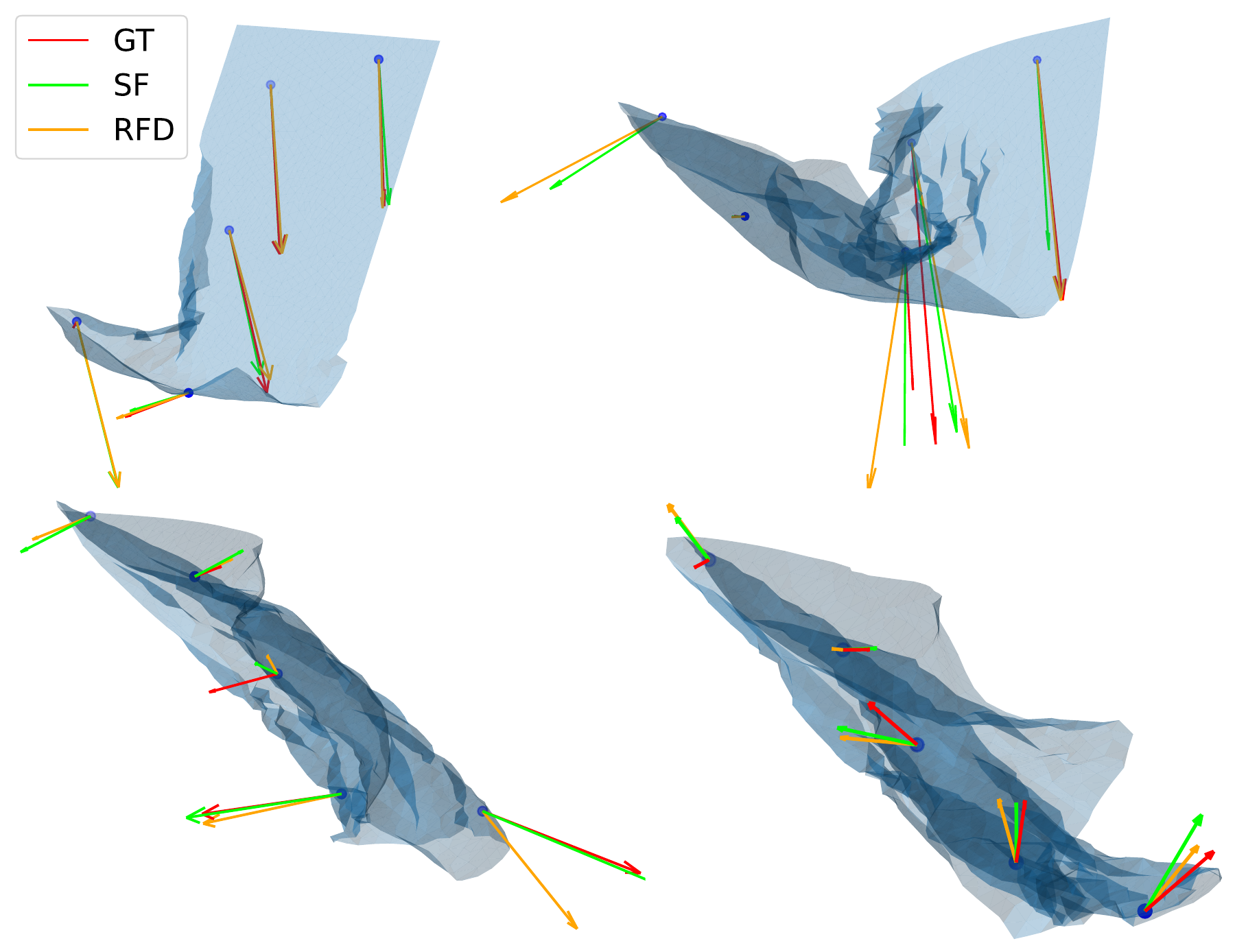}
\vspace{-5mm}  
\caption{\small{Snapshots of meshes for the velocity prediction task comparing results of our GFI methods with ground truth (\textbf{GT}). In several cases, predictions and ground truth are close enough that the velocity vectors appear on top of each other.}}
\label{fig:acceleration_prediction}
\vspace{-3mm}
\end{figure}

\begin{table}[hbt]
    \begin{center}
            \caption{\small{Comparison of the total runtime and mean-squared error (MSE) across several meshes for diffusion-based integration. Runtimes are reported in seconds. The lowest time for each mesh is shown in bold. MSE is calculated w.r.t. the output of \BF.}}
    \small{
        {\begin{tabular}{@{}lc c cc c r@{}}
        \toprule
        \multirow{2}{*}{\textbf{Mesh}} & \multirow{2}{*}{\textbf{$|\mathrm{V}|$}} & & \multicolumn{2}{c}{\textbf{Total Runtime}} & &\multirow{2}{*}{\textbf{MSE}} \\
        \cmidrule{4-5} 
        & & & \BF & \RFD & & \\
        \midrule
        {{Alien}} & $5212$ & & 8.06 & \textbf{0.39} & & 0.041 \\
        {{Duck}} & $9862$ & & 45.36 & \textbf{1.10} & & 0.002\\
        {{Land}} & $14738$ & & 147.64 & \textbf{2.17} & & 0.017\\
        {{Octocat}} & $18944$ & & 302.84 & \textbf{3.36} & & 0.027\\
        \bottomrule
        \end{tabular}}
        \vspace{-2mm}
\label{table:wass_bary_diffusion}} 
    \end{center}
\end{table}

\begin{table}[hbt]
    \begin{center}
    \caption{Setup as in Table~\ref{table:wass_bary_diffusion}, but for the $\mathrm{SF}$ algorithm.}
    \small{
    \begin{tabular}{@{}lc c cc@{} c c@{}}
        \toprule
        \multirow{2}{*}{\textbf{Mesh}} & \multirow{2}{*}{\textbf{$|\mathrm{V}|$}} & & \multicolumn{2}{c}{\textbf{Total Runtime}} & & \multirow{2}{*}{\textbf{MSE}} \\
        \cmidrule{4-5} 
        & & & \multicolumn{1}{c}{\BF} & \SF & &  \\
        \midrule
        {Dice} & $4468$ & & 6.8 & \textbf{4.9} & &  0.063 \\
        {Duck} & $9862$ & & 39.2 & \textbf{19.4} & &  0.002\\
        {Land} & $14738$ & & 90.7 & \textbf{38.9} & & 0.015\\
        {bubblepot2} & $18633$ & & 113.2 & \textbf{48.3} & &  0.081\\
        \bottomrule
        \end{tabular}
        \vspace{-2mm}
   \label{table:wass_bary_separation}
        }
    \end{center}
\end{table}


\begin{figure*}\vspace{-3mm}
     \centering
     \begin{subfigure}[b]{0.16\textwidth}
         \centering
         \includegraphics[trim={1cm 1cm 1cm 1cm},clip,width=\textwidth]{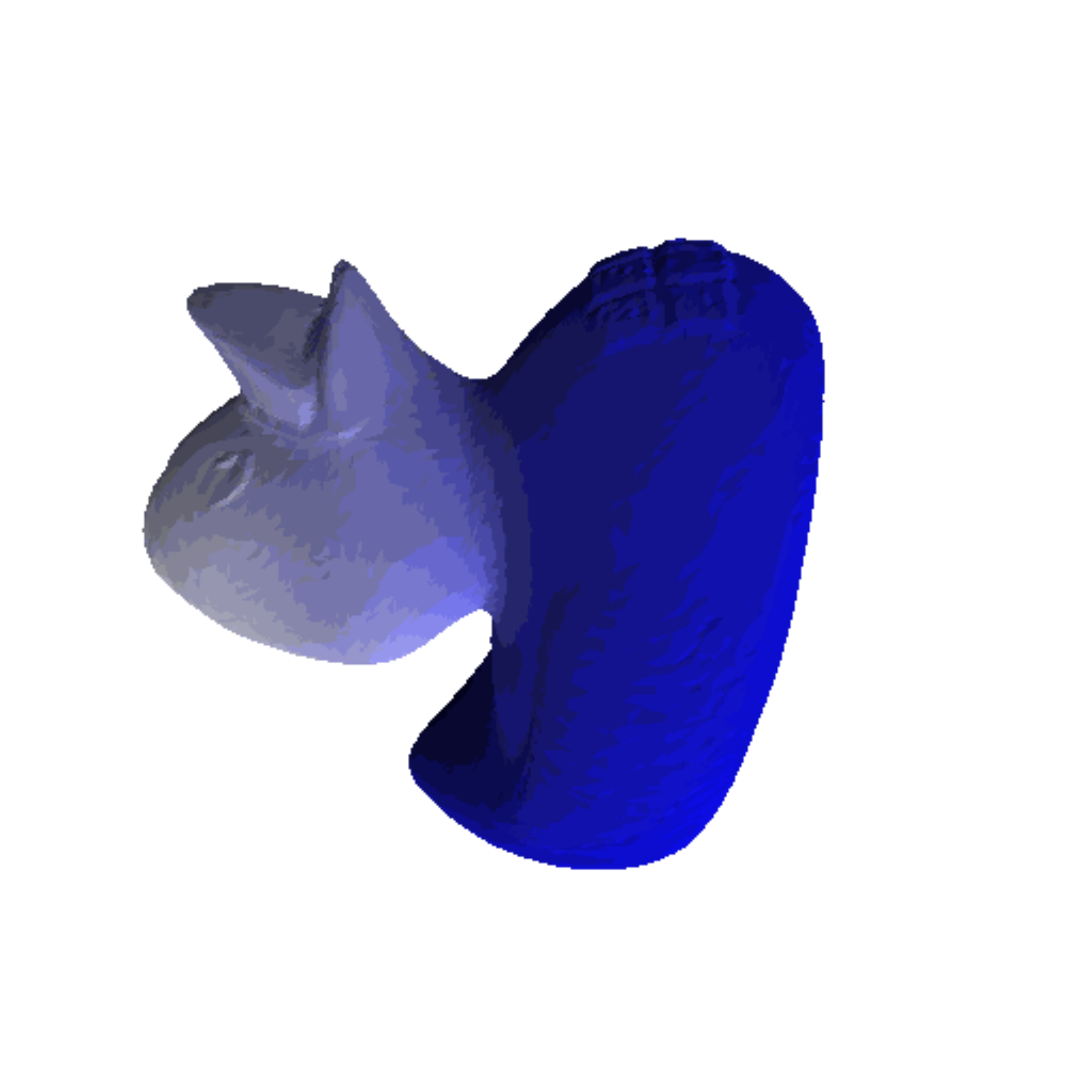}\vspace{-7mm}
         \caption{}\vspace{-1mm}
     \end{subfigure}
     \begin{subfigure}[b]{0.16\textwidth}
         \centering
         \includegraphics[trim={1cm 1cm 1cm 1cm},clip,width=\textwidth]{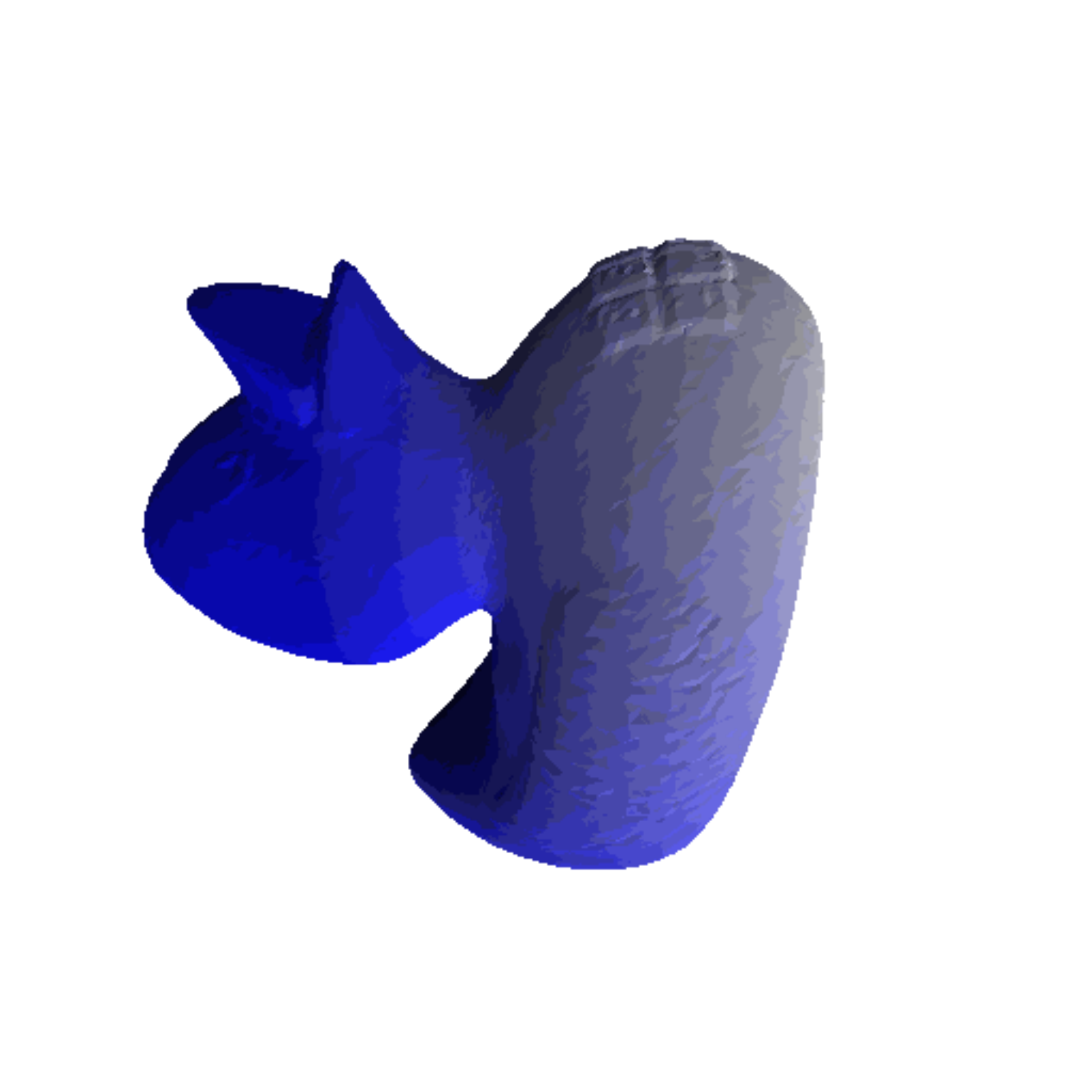}\vspace{-7mm}
         \caption{}\vspace{-1mm}
     \end{subfigure}
     \begin{subfigure}[b]{0.16\textwidth}
         \centering
         \includegraphics[trim={1cm 1cm 1cm 1cm},clip,width=\textwidth]{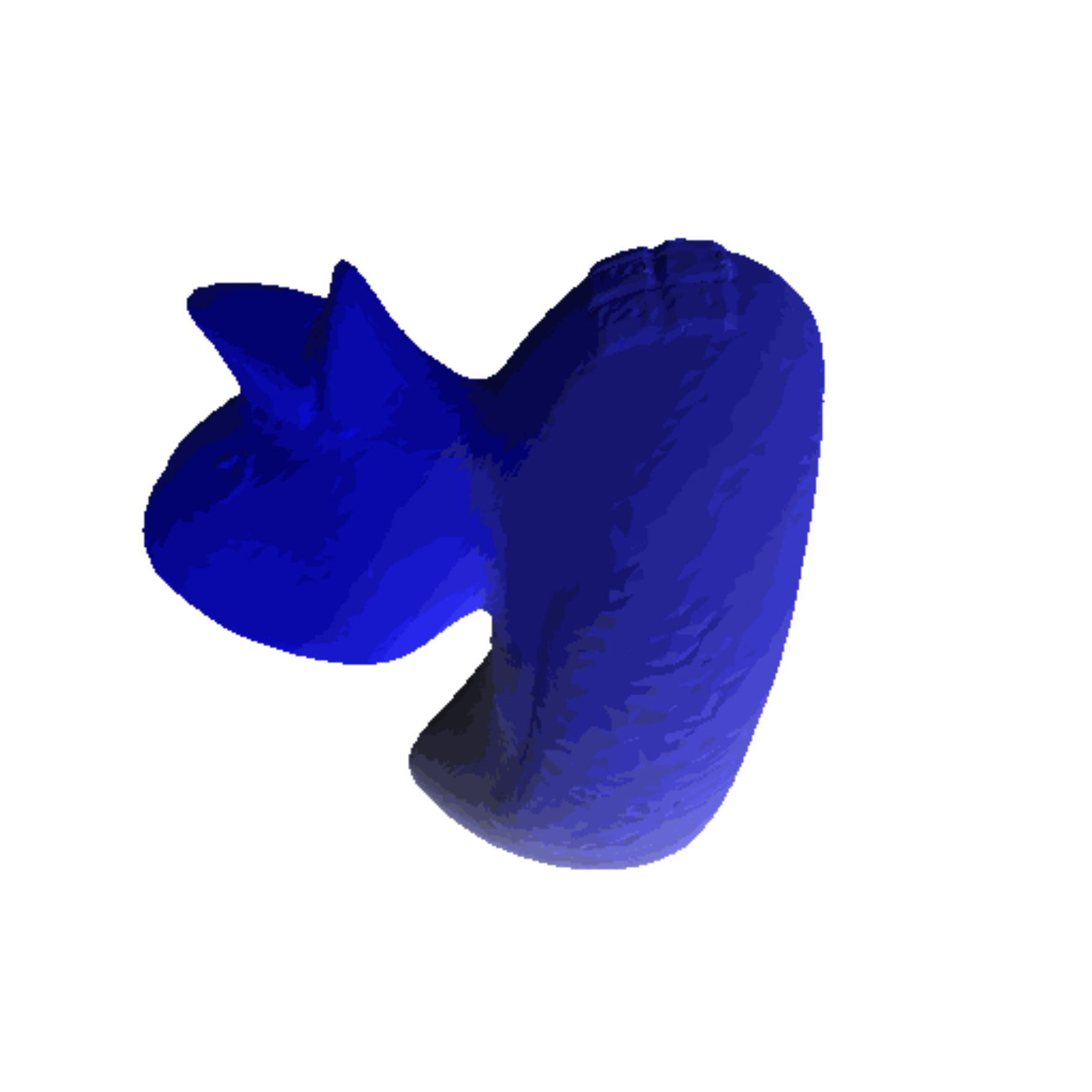}\vspace{-7mm}
         \caption{}\vspace{-1mm}
     \end{subfigure}
     \begin{subfigure}[b]{0.16\textwidth}
         \centering
         \includegraphics[trim={1cm 1cm 1cm 1cm},clip,width=\textwidth]{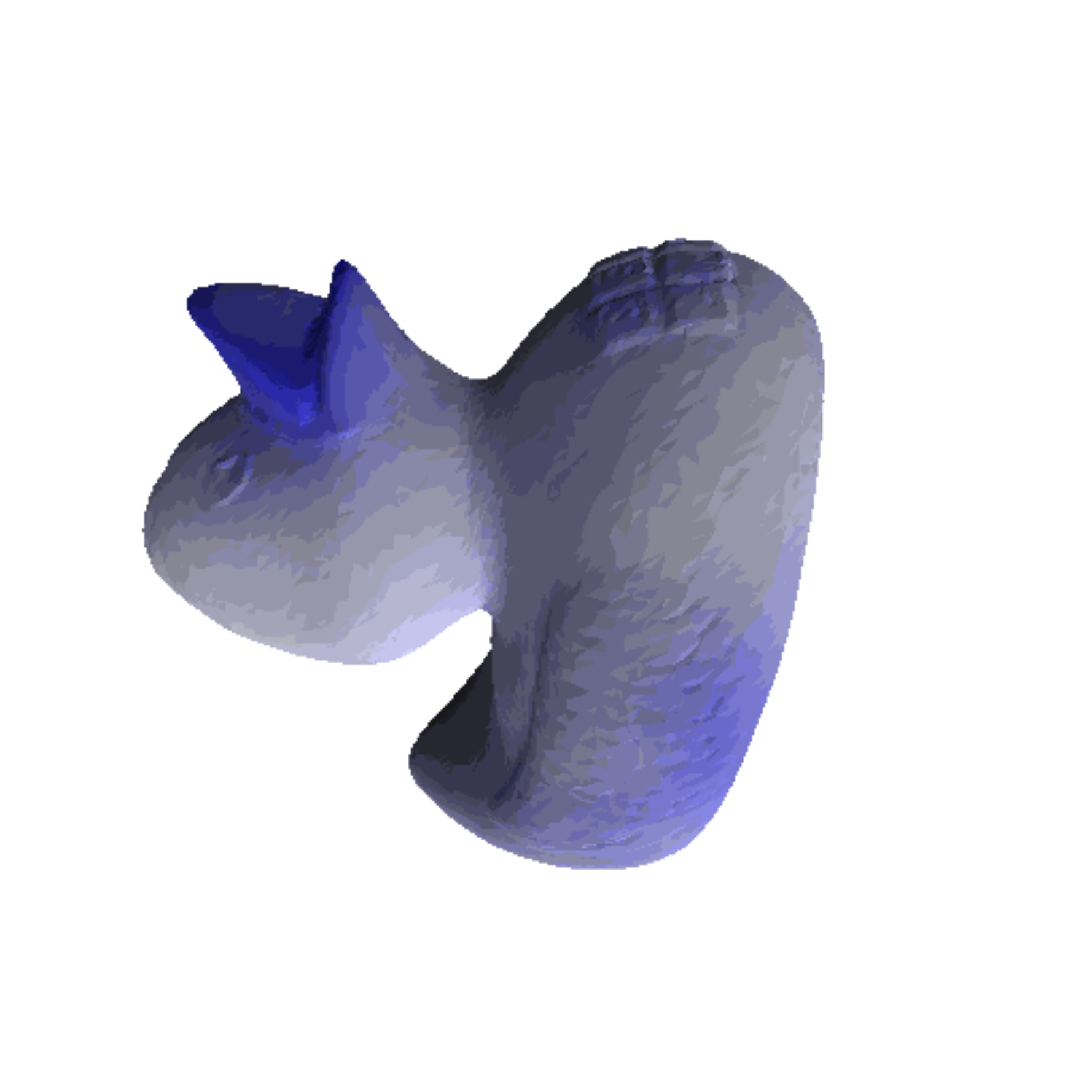}\vspace{-7mm}
         \caption{}\vspace{-1mm}
     \end{subfigure}
     \begin{subfigure}[b]{0.16\textwidth}
         \centering
         \includegraphics[trim={1cm 1cm 1cm 1cm},clip,width=\textwidth]{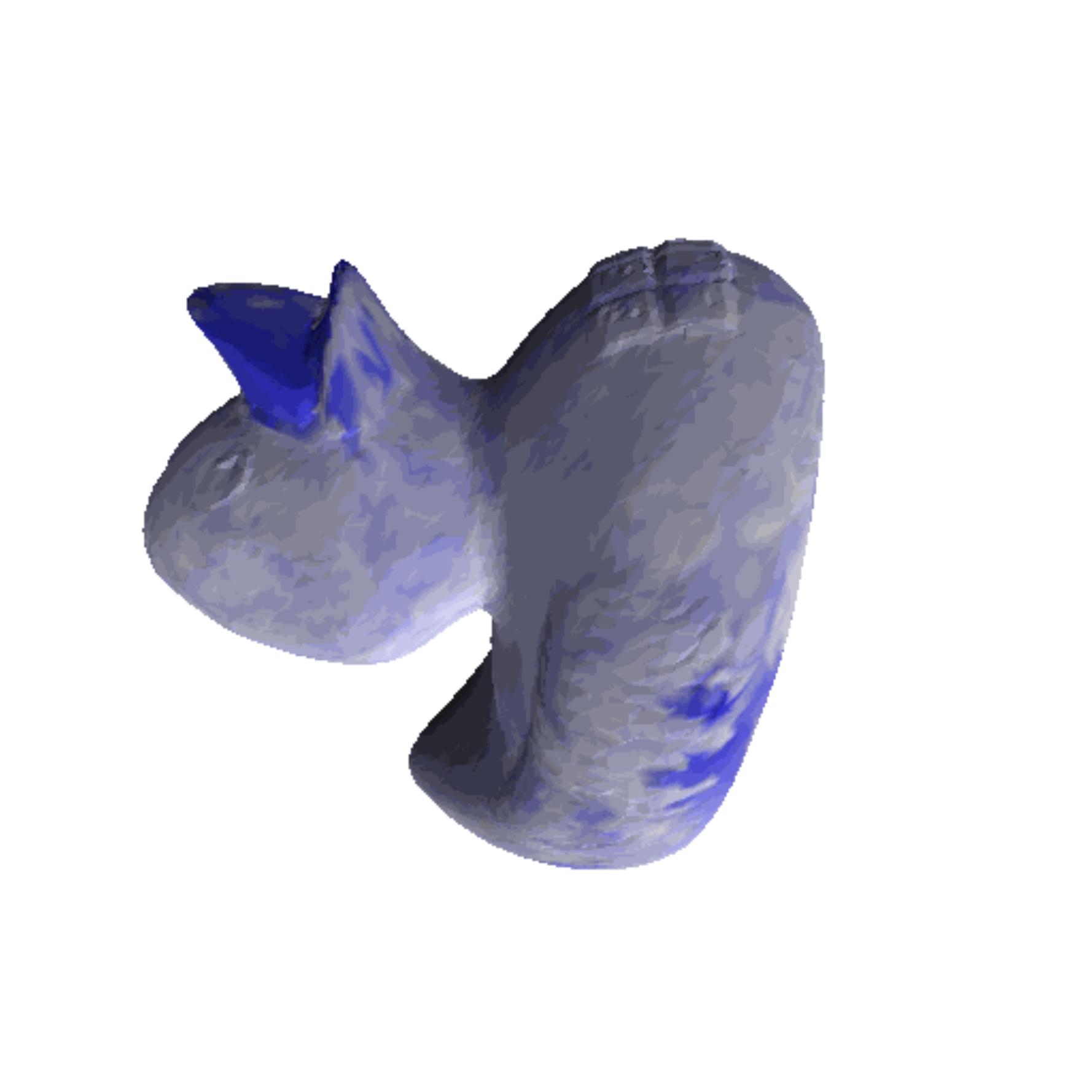}\vspace{-7mm}
         \caption{}\vspace{-1mm}
     \end{subfigure} 
     \\
     \begin{subfigure}[b]{0.16\textwidth}
         \centering
         \includegraphics[trim={2cm 2cm 2cm 2cm},clip,width=\textwidth]{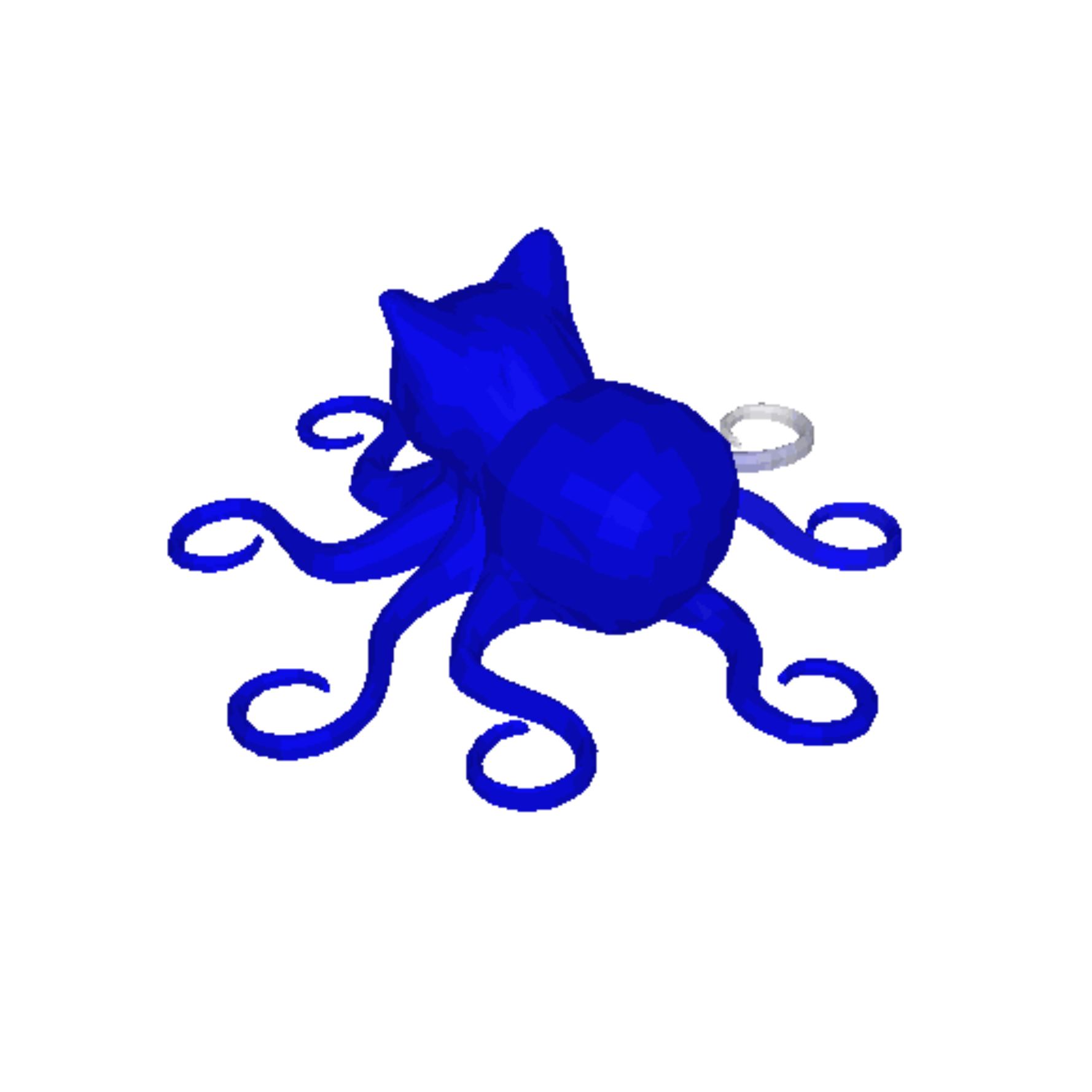}\vspace{-7mm}
         \caption{}\vspace{-5mm}
     \end{subfigure}
     \begin{subfigure}[b]{0.16\textwidth}
         \centering
         \includegraphics[trim={2cm 2cm 2cm 2cm},clip,width=\textwidth]{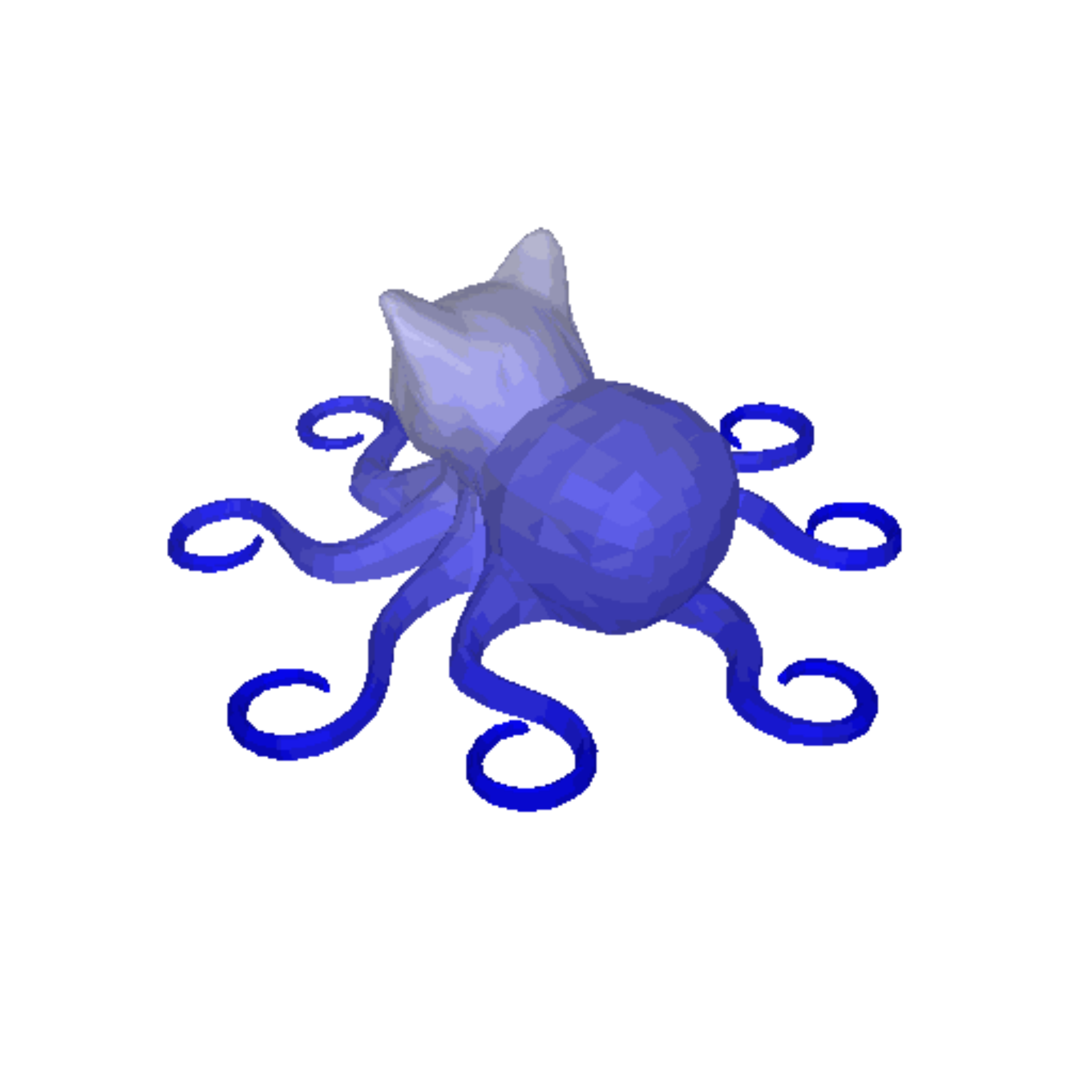}\vspace{-7mm}
         \caption{}\vspace{-5mm}
     \end{subfigure}
     \begin{subfigure}[b]{0.16\textwidth}
         \centering
         \includegraphics[trim={2cm 2cm 2cm 2cm},clip,width=\textwidth]{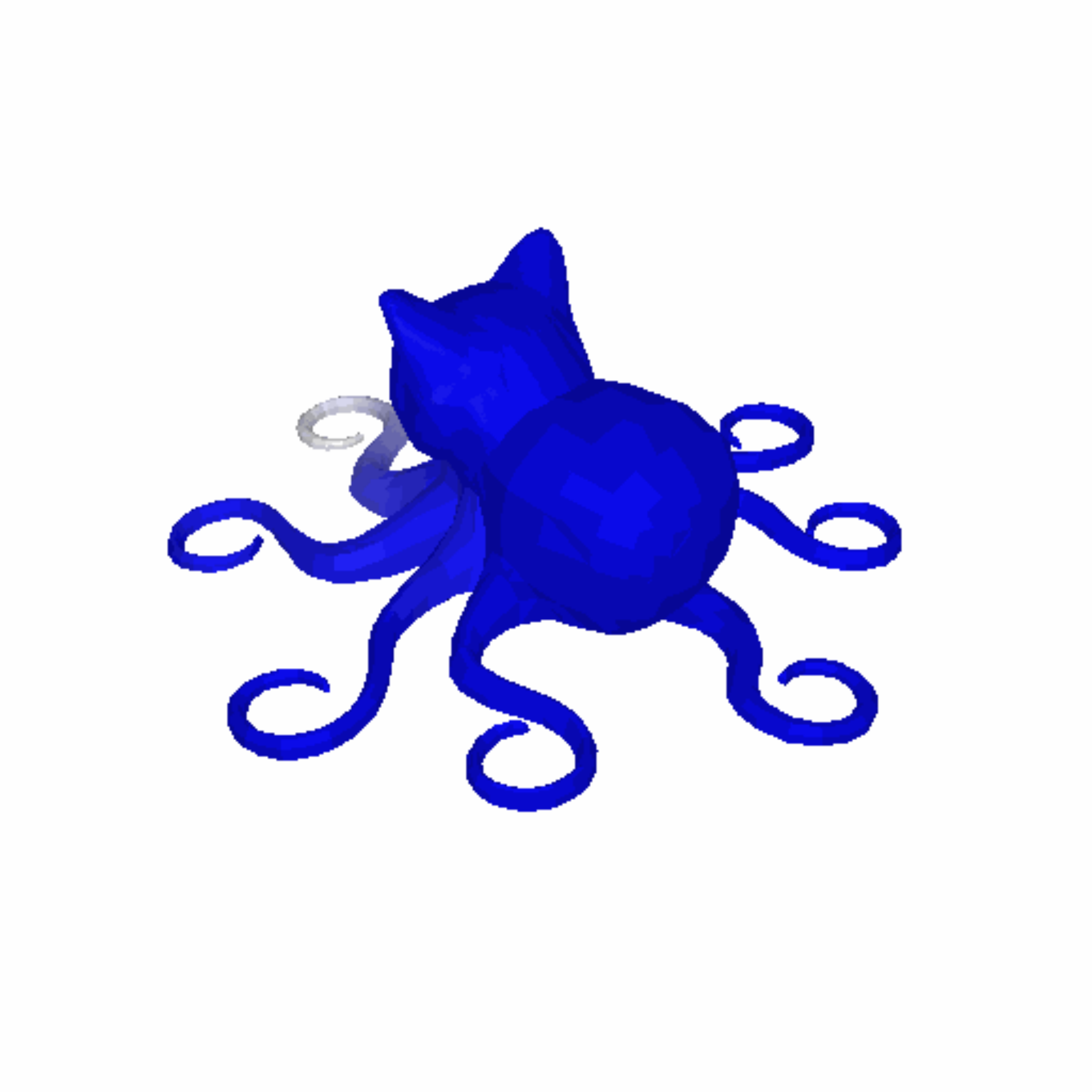}\vspace{-7mm}
         \caption{}\vspace{-5mm}
     \end{subfigure}
     \begin{subfigure}[b]{0.16\textwidth}
         \centering
         \includegraphics[trim={2cm 2cm 2cm 2cm},clip,width=\textwidth]{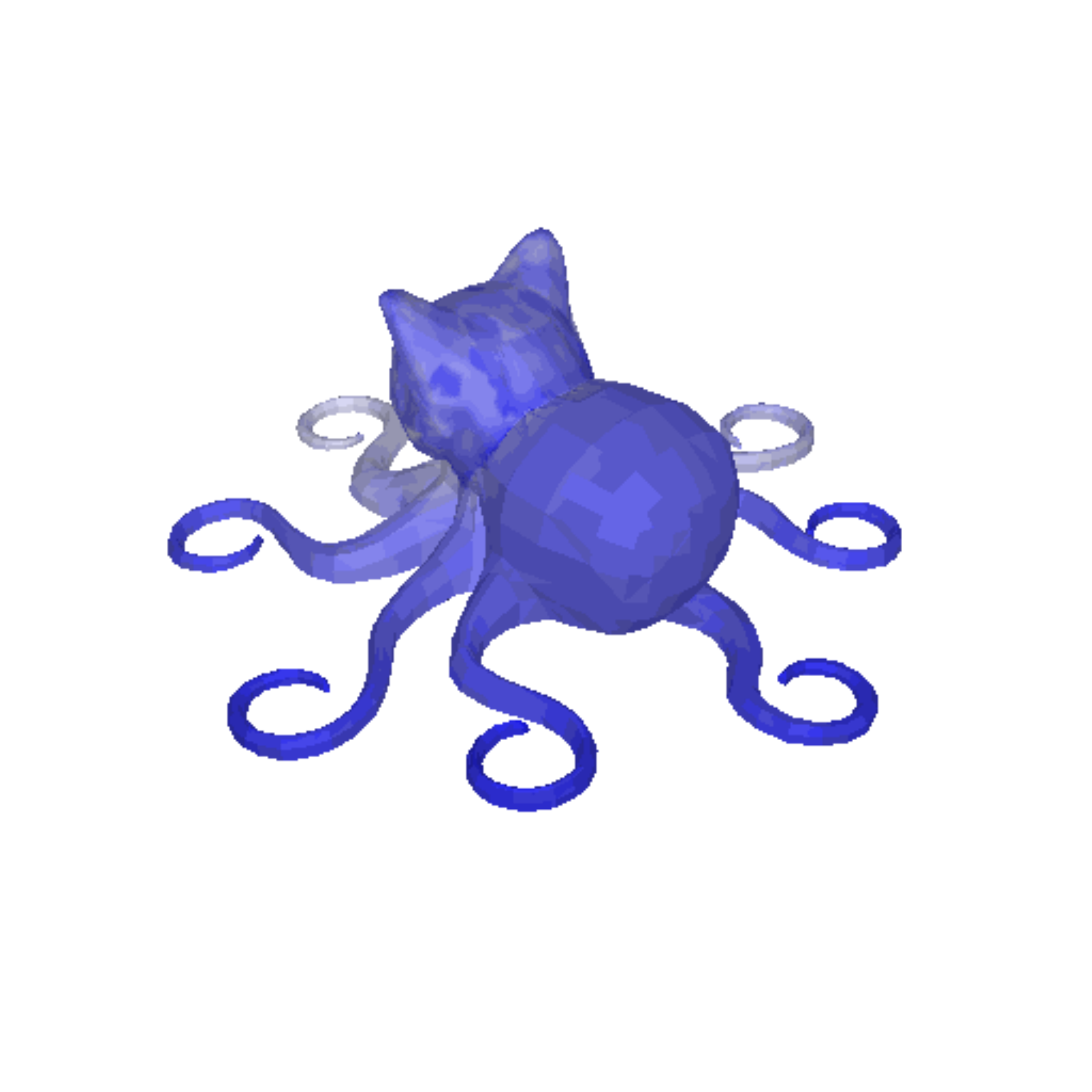}\vspace{-7mm}
         \caption{}\vspace{-5mm}
     \end{subfigure}
     \begin{subfigure}[b]{0.16\textwidth}
         \centering
         \includegraphics[trim={2cm 2cm 2cm 2cm},clip,width=\textwidth]{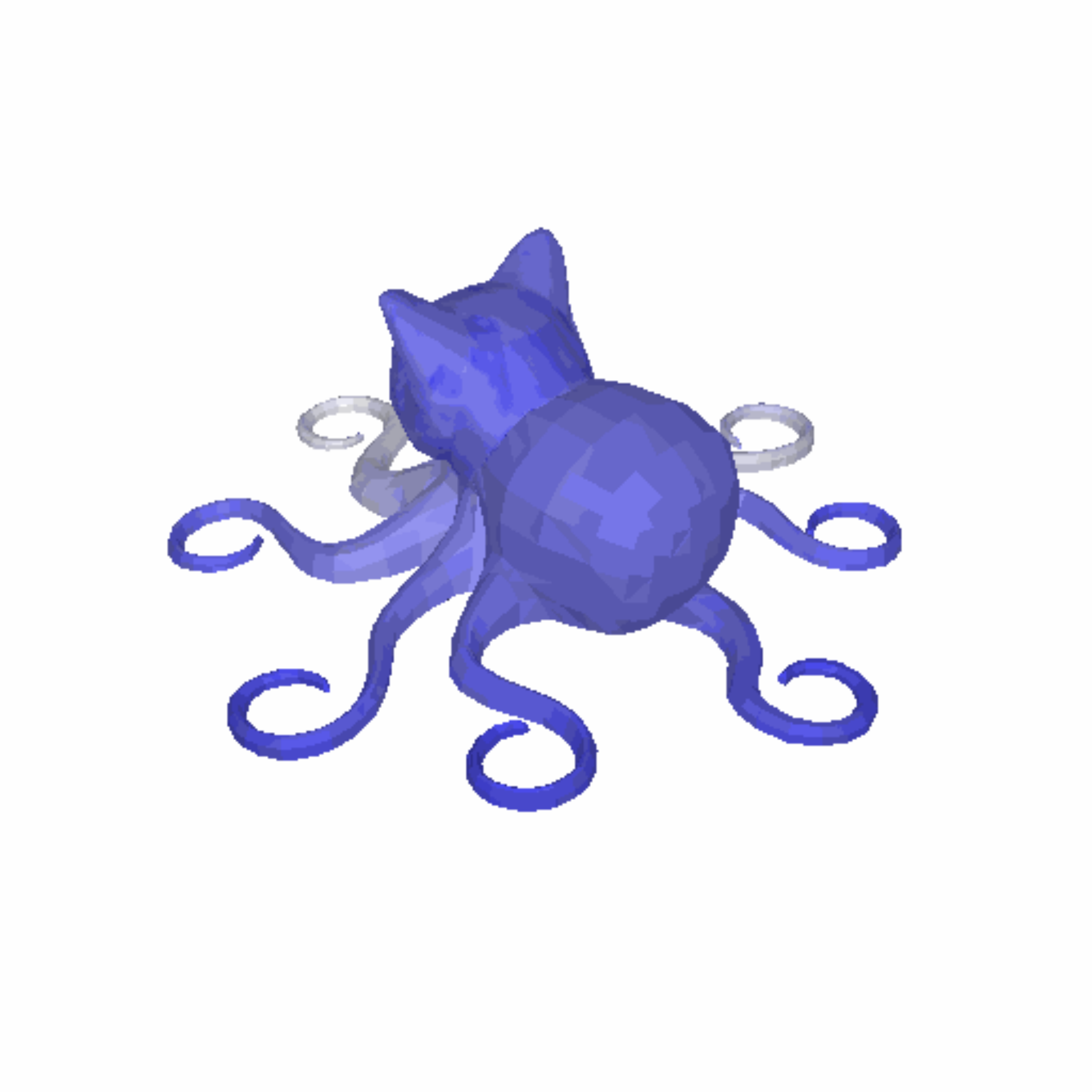}\vspace{-7mm}
         \caption{}
         \vspace{-5mm}
     \end{subfigure}
       \caption{\small{Comparison of the Wasserstein Barycenter output. \textbf{a-c, f-h}: three input distributions; \textbf{d,i}: Wasserstein Barycenter output with brute-force (BF); \textbf{e}: Wasserstein Barycenter output with \RFD}; \textbf{j}: Wasserstein Barycenter output with \SF. The top row is for the mesh \textit{duck} and the second row is for the mesh \textit{Octocat-v1}.}
\label{fig:conv_wass_separator}
\end{figure*}

\begin{figure*}[!htb] 
\centering
\includegraphics[width=1.5\columnwidth]{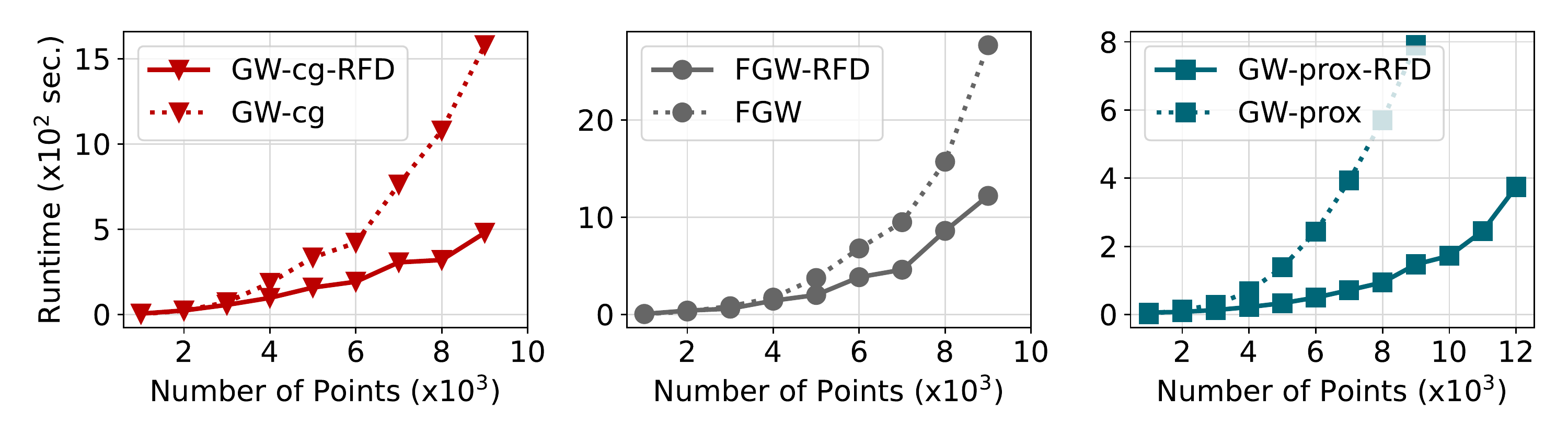}
\includegraphics[width=.5\columnwidth]{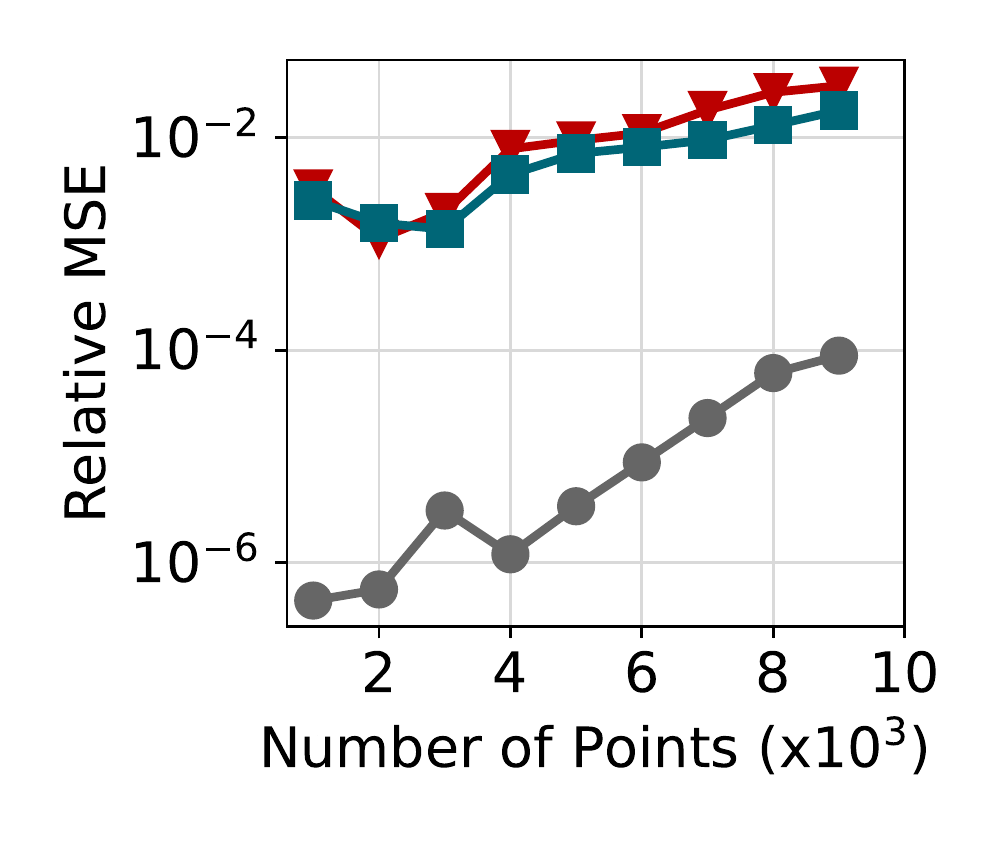}
\vspace{-5mm}
\caption{{\small \textbf{Left 3 figures}: Plots showing runtimes (in seconds) of \GW\ and \FGW\ vs our RFDiffusion injected counterparts. \textbf{Right figure}: The relative error of our RFDiffusion injected \textit{GW} variants. Except for the \GW-proximal-\RFD, all other variants are OOM after 9k points. For the \textit{FGW} experiments, random binary labels are generated for each node.}}\vspace{-5mm}
\label{fig:runtimes_gw}
\end{figure*}
\subsection{Wasserstein Distances and Barycenters}
\label{sec:wass_dist}
Optimal transport (OT) has found many applications in machine learning for its principled approach to compare distributions~\citep{cuturi2013}. There has been considerable work in extending OT problems to non-Euclidean domains like manifolds~\cite{solomon2015convolutional} and graph-structured data~\cite{memoli}. 
Our proposed methods can be easily integrated into several popular algorithms for computing Wasserstein distances. Here, we show the computational efficiency of our algorithms against well-known baselines.

\vspace{-2.8mm}
\paragraph{Wasserstein barycenter.}
In this section, we consider the OT problem of moving masses on a surface mesh, particularly the computation of Wasserstein barycenters. Since the geodesic distance on a surface is intractable, we use two approximations of this metric: 1) shortest-path distance (used in \SF~calculations), and 2) distance coming from an $\epsilon$-NN graph approximating the surface (\RFD~).  

Wasserstein barycenter is a weighted average of probability distributions. More formally,
given input distributions $\{ \boldsymbol{\mu}^i \}_{i=1}^k$ and a set of weights $\boldsymbol{\alpha}\in\mathbb{R}^k_+$, the Wasserstein barycenter is the solution to the following problem \cite{agueh2011barycenters}:
\vspace{-4mm}
\begin{equation*}
    \underset{{\boldsymbol{\mu} \in \operatorname{Prob}(\mathrm{V})}}{\operatorname{minimize}} \quad \sum_{i=1}^k \boldsymbol{\alpha}_i \mathcal{W}_2^2\left(\boldsymbol{\mu}, \boldsymbol{\mu}^i\right), \vspace{-2mm}
\end{equation*}
where $\mathcal{W}_2(\cdot,\cdot)$ denotes the 2-Wasserstein distance. 

Algorithm 2 outlined in~\cite{solomon2015convolutional} is used for the experiments in this subsection. We modify their algorithm to directly materialize and plug in our appropriate kernel matrix (which we refer to as the \BF~algorithm). More details about the baselines are provided in Appendix~\ref{sec:wass_bary_baselines}. We give a detailed description of the adaptation of our Fast Multiplication (FM) techniques to the entropic Wasserstein distances (see Appendix~\ref{sec:conv_wass_bary_appendix} and Algorithm~\ref{alg:wass_barycenter}). Our FM-infused variants significantly speed up the runtime of the baseline algorithm without losing accuracy (see Table~\ref{table:wass_bary_diffusion} and~\ref{table:wass_bary_separation}.)
Three different input probability distributions, encoded as vectors of length $N=|\mathrm{V}|$, are chosen for all our experiments. The barycenter output is also encoded as a vector of length $N$. 
Given an estimator $\hat{\boldsymbol{\mu}}$ and the ground truth $\boldsymbol{\mu}$, we measure the quality of the estimator using the mean squared error ($\mathrm{MSE}$) given by
$
\frac{1}{N}\sum_{j=1}^N \left(\hat{\boldsymbol{\mu}}_j - \boldsymbol{\mu}_j \right)^2.
$

We visualize the Wasserstein barycenter generated by \RFD\ and \SF\ with their corresponding ground truth (generated by the baseline method) in Fig.~\ref{fig:conv_wass_separator}.
Note that the barycenters generated by our integrators are similar to the ground truth.

Low-distortion trees do not scale to mesh sizes considered here. We provide additional comparisons of our method with~\citep{solomon2015convolutional} in Appendix (Table~\ref{table:wass_bary_diffusion_with_solomon}). However, we note that the results are not directly comparable as the kernels employed by the authors are different from ours.
\vspace{-3.5mm}
\paragraph{Gromov Wasserstein and Fused Gromov Wasserstein distances.}
Gromov Wasserstein (\GW~)~discrepancy (resp. Fused Gromov Wasserstein discrepancy (\FGW~)) is an extension of Wasserstein distances to graph-structured data (resp. labeled graph-structured data) 
with widespread applications in a range of tasks including clustering, classification and generation~\citep{peyre:hal-01322992, memoli, Demetci2020.SCOT,Memoli2006, fgw, pmlr-v97-titouan19a}. Despite their widespread use, \GW~and \FGW~discrepancies are very expensive to compute.

The \GW\ discrepancy can be calculated iteratively by the conditional gradient method~\citep{peyre:hal-01322992}, which we refer to as \GW-cg or the proximal point algorithm~\citep{pmlr-v97-xu19b}, resp. \GW-prox. A key component in solving this OT problem by either method involves the computation of a tensor product, which is expensive. Our fast GFI methods can be used to  estimate this tensor product efficiently (Appendix Algorithm~\ref{alg:tensor_prod}), thus speeding up the runtime of the entire algorithm. Moreover, we can also effectively estimate the step size of the \FGW\ iterations in a line search algorithm (Appendix Algorithm~\ref{alg:line_search}). More details are presented in Appendix~\ref{sec:gw_fgw}. 

Since this task is particularly challenging from the computational standpoint, we choose our fastest \RFD\ algorithm. 
 Our methods (\GW-\RFD, \FGW-\RFD\, and \GW-prox-\RFD) run consistently $2$-$4$x faster than the baseline methods, with only a small drop in accuracy in computing the associated costs (Fig.~\ref{fig:runtimes_gw}). The  plots shown are obtained by averaging over $10$ seeds and random $3$-D distributions. For all our experiments, $m=16$ random features, $\epsilon=0.3$, and the smoothing factor $\lambda=-0.2$ are chosen. For the baseline experiments, we use the implementation from the POT library~\citep{flamary2021pot} for the \GW-cg and \FGW~variants, and official implementation from~\citep{pmlr-v97-xu19b} for the \GW-prox variant. Additional experiments (with ablation studies on the hyperparameters) are in Sec.~\ref{sec:gw_bary} (resp.~\ref{sec:gw_ablates}). 

\subsection{Experiments on Point Cloud Classification}
In this subsection, we demonstrate the effectiveness of the \RFD\ kernel for various point cloud classification tasks.
\paragraph{Topological Transformers.} We present additional experiments with results on Point Cloud Transformers (PCT)~\citep{Guo_2021}. The entrance point for the $\mathrm{RFDiffusion}$ algorithm is the topologically-modulated performized version~\citep{choromanski} of the regular PCT. The topological modulation works by Hadamard-multiplying regular attention matrix with the mask-matrix encoding relative distances between the points in the $3D$ space to directly impose structural priors while training the attention model. Performized PCT provides computational gains for larger point clouds ($N=2048$ points are used in our experiments). Moreover, its topologically modulated version can be executed in the favorable sub-quadratic time only if the mask-matrix itself supports sub-quadratic matrix-vector multiplication~\citep{topmasking} without the explicit materialization of the attention and the mask matrices. $\mathrm{RFDiffusion}$ provides a low-rank decomposition via its novel RF-mechanism and the observation in Sec. 3.4 (~\citep{topmasking}), can be used for time-efficient training in our particular setting. We conduct our experiments on the ShapeNet dataset~\citep{Zhirong15CVPR}. Performized PCT with efficient $\mathrm{RFDiffusion}$-driven masking achieves $\mathbf{91.13\%}$ validation accuracy and linear time complexity (due to the efficient integration algorithm with the $\mathrm{RFDiffusion}$). The brute-force variant runs out of memory in training.

\paragraph{Point Cloud Classification.}~\label{sec:pc_expt} We have also conducted point cloud classification experiments on ModelNet10~\citep{Zhirong15CVPR} and Cubes~\citep{meshcnn}, using our $\mathrm{RFDiffusion}$ kernel method. The classification in this case is conducted using the eigenvectors of the kernel matrix. Note that, as described in~\citep{nakatsukasa2019lowrank}, low-rank decomposition of the kernel matrix (provided directly by the $\mathrm{RFDiffusion}$ method via the random feature map mechanism) can be used to compute efficiently eigenvectors and the corresponding eigenvalues. For each dataset, we compute the $k$ smallest eigenvalues of the kernel matrix ($k = 32$ for ModelNet10 and $k = 16$ for Cubes). We pass these $k$ eigenvalues to a random forest classifier for downstream classification.  For all the experiments we use:  $\epsilon = .1$, $\lambda = -.1$ and we sample $2048$ points randomly for each shape in ModelNet10. 

The brute-force baseline version for the ModelNet10 and Cubes explicitly constructs the epsilon-neighborhood graph, directly conducting the eigendecomposition of its adjacency matrix and exponentiating eigenvalues. Comparison with this variant is the most accurate apple-to-apple comparison. The baseline variant has time complexity $O(N^3)$ whereas our method for obtaining the eigenvectors is of time complexity $O(N)$. Our results are summarized in Table~\ref{tab:pc_classification}. Our method excels at these point cloud classification tasks, beating the brute-force  baseline method by almost $\mathbf{25}$ points on ModelNet10 and by $\mathbf{5}$ points on the Cubes. Our reported numbers are comparable to earlier methods on ModelNet (SPH and LFD achieving $79\%$~\citep{Zhirong15CVPR}). Cubes is a fairly challenging dataset and deep learning models like PointNet achieves only $55\%$ accuracy. 

\begin{table}[h]
\caption{Point cloud classification using \RFD\ Kernel}
\label{tab:pc_classification}
\resizebox{\columnwidth}{!}{%
\begin{tabular}{@{}lccccc@{}}
\toprule
\textbf{Dataset} & \textbf{\# Graphs}  & \textbf{\# Classes} & \textbf{Baseline} & \textbf{RFD} \\
\midrule
ModelNet10       & 3991/908                   & 10                  & 43.0                & $\mathbf{70.1}$       \\
Cubes            & 3759/659                   & 23                  & 39.3                & $\mathbf{44.6}$   \\
\bottomrule
\end{tabular}%
}
\end{table}
For additional experiments on graph classification, see Appendix~\ref{sec:graph_classification}.
\vspace{-1mm}
\section{Conclusion}~\label{sec:conclusion}
We have presented in this paper two algorithms, $\mathrm{SeparatorFactorization}$ and $\mathrm{RFDiffusion}$, for efficient graph field integration based on the theory of balanced separators and Fourier analysis. As a byproduct of the developed techniques, we have obtained new results in structural graph theory. Our extensive empirical studies support our theoretical findings (e.g., mesh dynamics modeling) involving interpolation on meshes for rigid and deformable objects and the computation of the Wasserstein distance between distributions defined on meshes. 


\vspace{-1mm}
\section{Acknowledgement}~\label{sec:acknowledgement}
AW acknowledges support from a Turing AI Fellowship under grant EP/V025279/1 and the Leverhulme Trust via CFI.

\bibliography{references}
\bibliographystyle{icml2023}

\newpage
\appendix
\onecolumn

{
\begin{center}
    \Large
    \textbf{{Appendix: Efficient Graph Field Integrators Meet Point Clouds}}
\end{center}
}

\section*{Broader Impacts}
Matrix-vector multiplication is a core component of all machine learning (ML) models. Thus there is a lot of interest in the ML community to discover ways or use cases where the above operation can be done in an efficient manner. This problem of fast matrix-vector multiplication also has tremendous applications in physical sciences, chemistry, and networking protocols. A vast body of literature has proposed scenarios where this problem is applicable. Our work makes an important contribution towards this direction by discovering new examples where such methods exist. We expect our work to benefit the ML community and the broader scientific community. Our work is mostly theoretical in nature, therefore we do not foresee any negative applications of our algorithms. 

\section{Theoretical Analysis}

\subsection{Warmup Results on $(\mathrm{G},f)$-tractability}
\label{sec:app_warmup}

We start with the following simple remark:

\begin{remark}
\label{remark:sim}
Let $\mathcal{G}$ be a family of graphs and let $\mathcal{F}=\{f_{1},\ldots,f_{|\mathcal{F}|}\}$ be a family of functions $\mathbb{R} \rightarrow \mathbb{C}$. If $(\mathcal{G},f_{i})$ is $T$-tractable for $i=1,\ldots,|\mathcal{F}|$ then for any $f:\mathbb{R} \rightarrow \mathbb{C}$ of the form: $f(z) = \sum_{i=1}^{|\mathcal{F}|}a_{i}f_{i}(z)$, where $a_{1},\ldots,a_{\mathcal{F}} \in \mathbb{C}$, $(\mathcal{G},f)$ is $(T \cdot \mathcal{F})$-tractable. 
\end{remark}

Furthermore, the following trivially holds:
\begin{remark}
\label{remark:im}
Let $\mathcal{G}$ be a family of graphs and let $f:\mathbb{R} \rightarrow \mathbb{C}$ be a function. If $(\mathcal{G},f)$ is $T$-tractable then $(\mathcal{G}, \mathrm{Re}(f))$ and $(\mathcal{G}, \mathrm{Im}(f))$ are $T$-tractable, where $\mathrm{Re}$ and $\mathrm{Im}$ stand for the real and imaginary part of $f$ respectively.
\end{remark}

The $|\mathrm{V}|$-tractability of $(\mathcal{T},f)$, where $\mathcal{T}$ is the family of trees and $f(z)=\exp(az+b)$, proven in \cite{topmasking}, combined with the above remarks implies several results for specific important classes of functions $f$. In particular, the following holds:

\begin{corollary}
If $\mathcal{T}$ is the family of trees and $f$ is given by a finite Fourier series of length $L$, then $(\mathcal{T},f)$ is $(\mathrm{V} \cdot L)$-tractable. Thus in particular: $(\mathcal{T},f)$ is $|\mathrm{V}|$-tractable for $f(z)=A \sin(\omega z + \phi)$ for $A,\omega,\phi \in \mathbb{R}$. This remain true if $f(z)=A \exp(-bz)\sin(\omega z + \phi)$, where $b \in \mathbb{R}$.
\end{corollary}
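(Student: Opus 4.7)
The plan is to reduce the three claims to the base tractability result from Table 1, namely that $(\mathcal{T}, \exp(az+b))$ is $|\mathrm{V}|$-tractable for \emph{any} $a,b\in\mathbb{C}$, and then combine it with the two elementary closure properties stated in Remarks \ref{remark:sim} and \ref{remark:im}.

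First I would handle the finite Fourier series case. By definition, such an $f$ has the form $f(z) = \sum_{k=1}^{L} c_k \exp(\mathbf{i} \omega_k z)$ for constants $c_k \in \mathbb{C}$ and $\omega_k \in \mathbb{R}$. Each summand $f_k(z) = \exp(\mathbf{i}\omega_k z)$ fits the template $\exp(a z + b)$ with $a = \mathbf{i}\omega_k \in \mathbb{C}$ and $b=0$, so by the base result each $(\mathcal{T}, f_k)$ is $|\mathrm{V}|$-tractable. The scalar multipliers $c_k$ do not affect the matrix-vector multiplication cost (they amount to scaling a single kernel matrix). Applying Remark \ref{remark:sim} to the family $\{f_1,\ldots,f_L\}$, we conclude that $(\mathcal{T}, f)$ is $(|\mathrm{V}|\cdot L)$-tractable, which is the first part of the corollary.

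Next, for $f(z) = A\sin(\omega z + \phi)$, I would rewrite it as the imaginary part of a single complex exponential: $f(z) = A \cdot \mathrm{Im}\bigl(\exp(\mathbf{i}\omega z + \mathbf{i}\phi)\bigr)$. The inner function $g(z) = \exp(\mathbf{i}\omega z + \mathbf{i}\phi)$ is exactly of the form $\exp(az+b)$ with $a=\mathbf{i}\omega, b=\mathbf{i}\phi \in \mathbb{C}$, so by the base result $(\mathcal{T},g)$ is $|\mathrm{V}|$-tractable. Applying Remark \ref{remark:im} yields $|\mathrm{V}|$-tractability of $(\mathcal{T}, \mathrm{Im}(g))$, and multiplying by the constant $A$ preserves this. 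The damped version $f(z) = A\exp(-bz)\sin(\omega z+\phi)$ is handled identically by writing $f(z) = A \cdot \mathrm{Im}\bigl(\exp((-b+\mathbf{i}\omega)z + \mathbf{i}\phi)\bigr)$; the crucial point is that the base result permits the coefficient $a = -b + \mathbf{i}\omega$ to live in $\mathbb{C}$, so no additional work is needed.

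There is essentially no hard step: once the base result is available for complex parameters, everything follows by linearity and by taking real/imaginary parts. The only thing to be mildly careful about is that Remark \ref{remark:sim} as stated gives $(T \cdot |\mathcal{F}|)$-tractability rather than $T$-tractability, which explains why the general Fourier-series bound carries the factor $L$ while the single-sinusoid case (where $|\mathcal{F}|=1$ after invoking Remark \ref{remark:im}) retains the clean $|\mathrm{V}|$ bound.
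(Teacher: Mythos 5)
Your proposal is correct and follows essentially the same route as the paper: the paper derives this corollary directly from the $|\mathrm{V}|$-tractability of $(\mathcal{T},\exp(az+b))$ for complex $a,b$ (from \cite{topmasking}) combined with Remarks \ref{remark:sim} and \ref{remark:im}, exactly as you do. Your decomposition of the Fourier series into $L$ complex exponentials and of the (damped) sinusoid as the imaginary part of a single $\exp(az+b)$ with $a=-b+\mathbf{i}\omega$, $b$-term $\mathbf{i}\phi$, is precisely the intended argument.
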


\subsection{Proof of Theorem \ref{theorem:bct_main}}
\label{sec:app_thmbct}

 Detailed analysis ragarding tree-decompositions with connected bags that we leverage in our theoretical analysis is illustrated in Sec. \ref{sec:connected_bags}.

\begin{proof}
Let $\mathrm{G} \in \mathcal{G}$ and denote: $N=|\mathrm{V}(\mathrm{G})|$. Without loss of generality, we can assume that $\mathrm{G}$ is connected.
We start with the following auxiliary lemma, where we introduce the key notion of graph \textit{separator}:
\begin{lemma}
\label{lemma:separator}
The set of vertices $\mathrm{V}(\mathrm{G})$ of $\mathrm{G}$ can be partitioned in time $O(N)$ into three pairwise disjoint sets: $\mathcal{A},\mathcal{B}, \mathcal{S}$ such that: $\delta N \leq |\mathcal{A}|, |\mathcal{B}|  \leq (1-\delta)N$ for some universal $0 < \delta < 1$ and $|\mathcal{S}| \leq t+1$, where $t=\mathrm{ctw}(\mathrm{G})$ stands for the connected treewidth of $\mathrm{G}$. Furthermore, no edges exist between $\mathcal{A}$ and $\mathcal{B}$
and: the induced sub-graphs $\mathrm{G}_{\mathcal{A}}$ and $\mathrm{G}_{\mathcal{B}}$ of $\mathrm{G}$ with sets of vertices $\mathcal{A}$ and $\mathcal{B}$ respectively both have connected treewidth at most $\mathrm{ctw}(\mathrm{G})$. Finally, the graph $\mathrm{G}_{\mathcal{S}}$ induced by $\mathcal{S}$ is connected.
We call set $\mathcal{S}$ a \textit{separator} in $\mathrm{G}$. Furthermore, the tree decomposition with connected bags and of treewidth $t$ can be found in time $O(N)$, with $\mathcal{S}$ being one of the bags.
\end{lemma}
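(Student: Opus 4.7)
The plan is to locate $\mathcal{S}$ as a suitably chosen bag of a connected-bag tree decomposition of $\mathrm{G}$, mimicking the classical centroid construction on trees. First I would invoke the hypothesis $\mathrm{ctw}(\mathrm{G}) \leq t$ to obtain a tree decomposition $T$ whose bags $X_1, \ldots, X_L$ all have size $\leq t+1$ and induce connected subgraphs of $\mathrm{G}$; for constant $t$, a Bodlaender-type algorithm (followed, if necessary, by a linear-time post-processing to enforce connected bags, using the connected treewidth hypothesis) produces such a $T$ in $O(N)$ time. The heart of the argument is then to select a single bag of $T$ as the separator.

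Next I would carry out a centroid search on $T$. Root $T$ arbitrarily and, using the running-intersection property, assign each $v \in \mathrm{V}(\mathrm{G})$ to the topmost bag containing it; let $n(X)$ denote the number of graph vertices so assigned to $X$, and $s(X)$ the total $n$-weight of the subtree rooted at $X$, so that $\sum_X n(X) = N$. Pick $X_s$ to be the deepest bag with $s(X_s) \geq N/3$. By maximality, every child subtree of $X_s$ contributes a block of vertices of $n$-weight strictly less than $N/3$, while the complementary ``above'' portion has weight at most $2N/3$. Setting $\mathcal{S} := X_s$ and distributing the remaining blocks greedily (largest first, into the lighter of the two pools) produces $\mathcal{A}$ and $\mathcal{B}$ with $\delta N \leq |\mathcal{A}|, |\mathcal{B}| \leq (1-\delta)N$ for a universal $\delta$ (e.g.\ $\delta = 1/6$ once the $t+1$ vertices of $\mathcal{S}$ are absorbed). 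The no-edge condition is immediate: every edge of $\mathrm{G}$ sits inside some bag, and after deleting $X_s$ every remaining bag lies entirely in one block, so both endpoints fall on the same side (or inside $\mathcal{S}$). Connectedness of $\mathcal{S}$ is inherited directly from the connected-bag property of $T$, and $|\mathcal{S}| \leq t+1$ by the width bound.

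It remains to show that $\mathrm{G}_\mathcal{A}$ and $\mathrm{G}_\mathcal{B}$ still admit tree decompositions with connected bags and width $\leq t$; this is the step I expect to be the main obstacle. The natural candidate for $\mathrm{G}_\mathcal{A}$ is to take, for every bag $X_j$ sitting in a subtree assigned to $\mathcal{A}$, the restriction $Y_j := X_j \cap \mathcal{A}$ together with the inherited tree structure. The running-intersection and width conditions transfer cleanly, since any graph vertex common to two restricted bags yields, via $T$, a path of original bags all containing it that avoids $X_s$. The subtle issue is connectedness: although $X_j$ is connected in $\mathrm{G}$, $Y_j$ may fall apart in $\mathrm{G}_\mathcal{A}$ because paths witnessing connectedness of $X_j$ could have routed through $\mathcal{S}$. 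I would repair this by replacing each $Y_j$ with its connected components $Y_j^{(1)}, \ldots, Y_j^{(k_j)}$ in $\mathrm{G}_\mathcal{A}$, each of size $\leq t+1$, and splicing them into a refined tree: substitute a path of nodes $Y_j^{(1)} \! - \! Y_j^{(2)} \! - \! \cdots$ for the single node $X_j$ and re-route each edge of $T$ incident to $X_j$ to an appropriate component (the one containing the vertices shared with the neighbor), preserving running intersection. Carrying out this local surgery everywhere yields the required connected-bag decomposition of $\mathrm{G}_\mathcal{A}$ with width $\leq t$, and symmetrically for $\mathrm{G}_\mathcal{B}$. All ingredients — computing $T$, the weights $n(\cdot)$ and $s(\cdot)$, the centroid bag, and the greedy distribution — run in $O(N)$ total time, establishing the claim.
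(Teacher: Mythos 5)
Your construction of the separator itself is essentially the paper's own argument: both proofs compute a connected-bag tree decomposition of width $t$ in linear time and then locate a single bag acting as a balanced separator by a subtree-weight (centroid-type) search — the paper does this via the algorithmic version of the balanced-separator lemma for tree decompositions, you do it via topmost-bag weights and the deepest bag of weight $\geq N/3$ — and your greedy grouping of the resulting blocks, the no-edge argument, $|\mathcal{S}|\leq t+1$, connectivity of $\mathcal{S}$, and the $O(N)$ accounting are all fine and match the paper in spirit.

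The genuine gap is in the step you yourself flag as the main obstacle, the claim $\mathrm{ctw}(\mathrm{G}_{\mathcal{A}}),\mathrm{ctw}(\mathrm{G}_{\mathcal{B}})\leq \mathrm{ctw}(\mathrm{G})$. Your repair — split each restricted bag $Y_j=X_j\cap\mathcal{A}$ into its components in $\mathrm{G}_{\mathcal{A}}$, splice them into a path, and re-route each incident tree edge to ``the component containing the vertices shared with the neighbor'' — is not well defined and can break the running-intersection property: the shared set $X_j\cap X_{j'}\cap\mathcal{A}$ may meet two different components of $Y_j$ (e.g.\ $X_j=\{a,s,b\}$ with $s\in\mathcal{S}$ and $a,b$ joined in $X_j$ only through $s$, while $X_{j'}=\{a,c,b\}$ connects $a$ and $b$ inside $\mathcal{A}$); whichever component receives the edge, the bags containing the other shared vertex no longer induce a connected subtree. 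Worse, no local surgery of this kind can work for an arbitrary balanced-separator bag: take two $n$-cycles attached to a common hub $s$ adjacent to every cycle vertex. This graph is sparse, has connected treewidth $3$ (bags $\{s,c_1,c_i,c_{i+1}\}$ and $\{s,c'_1,c'_j,c'_{j+1}\}$), and admits a connected decomposition in which $\{s\}$ is a bag; deleting that bag leaves two bare $n$-cycles, each of which is a geodesic cycle of length $n$, so $\mathrm{G}_{\mathcal{A}}$ and $\mathrm{G}_{\mathcal{B}}$ have connected treewidth $\Theta(n)$ and no width-$t$ connected decomposition of them exists at all, spliced or otherwise. Hence preserving connected treewidth requires either a more careful choice of $\mathcal{S}$ than ``any centroid bag'' or enlarging the pieces. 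This is in effect how the paper proceeds: its connected-bags theorems restore connectivity by augmenting bags with short geodesic paths (accepting bag sizes up to $k^2\ell$ rather than insisting on width $t$), and in the recursion for Theorem~\ref{theorem:bct_main} it never recurses on $\mathrm{G}_{\mathcal{A}}$ alone but on the extended graph $\mathrm{G}[\mathcal{A}\cup\mathcal{S}\cup\mathcal{B}'\cup\mathcal{C}]$, chosen precisely so that bounded connected treewidth is retained; the paper's proof of the lemma itself is silent on this preservation claim, so your attempt goes beyond it, but the particular argument you give does not close it.
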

\begin{proof}
This follows directly from the algorithmic proof of the following theorem:
\begin{theorem}[\citealp{pilipczuk-2}]
For a graph on $N$ vertices with treewidth $k$, there is an algorithm
that will return a tree decomposition with width $5k + 4$ in time $2^{O(k)}N$.
\end{theorem}
The bounded treewidth decomposition from the above theorem can be easily refined to the bounded connected treewidth deecomposition. Note that the $O(N)$ time-complexity algorithm for the bounded connected treewidth decomposition immediately implies that its representation is of size $O(N)$ (i.e. the corresponding tree has $O(N)$ edges/vertices).

Since our graph admits a tree decomposition with connected bags, the above tree decomposition can also be constructed to have this property. Now we can apply the algorithmic version of the proof of Lemma 7.19 from \cite{pilipczuk}, concluding that one of the bags of this tree decomposition is a balanced separator and can be found by searching the tree in time $O(N)$.

The base version of the lemma is provided below:
\begin{lemma}
\label{cyg-lemma}
Assume that $\mathrm{G}$ is a graph of treewidth at most $k$. Then there exists a  separator $X$ in $\mathrm{G}$ of size at most $k + 1$ and such that each connected component of the graph obtained from $\mathrm{G}$ by deleting vertices from $X$ and the incident edges has at most $\frac{1}{2}|V(\mathrm{G})|$ vertices.    
\end{lemma}

Let us explain know how the algorithmic version of the proof works. From the proof of the Lemma \ref{cyg-lemma} it is clear that as long as computing the sizes of the sets 
for all nodes of the tree (from the treewidth decomposition) can be done in $O(N)$
time, the balanced separator can be found in $O(N)$ time ($V_{t}$ in the Lemma refers to the union of all the bags in the nodes of the tree rooted in $t$). However this can be done via a standard recursion algorithm. Consider a node $t$ which is not a leaf and its children: $c_{1},c_{2},\ldots$
Note that in order to compute the size of the union of the sets: $V_{c_{1}},V_{c_{2}},\ldots$,  all that we need is: (a) the
individual sizes of the sets: $V_{c_{1}},V_{c_{2}},\ldots$ (that can be stored in the individual nodes as we progress with the recursion) (b) the number of children from the set $\{c_{1},c_{2},\ldots\}$ whose corresponding bags contain a given vertex $x$ from the bag $B_{t}$ associated with $t$ (for every $x$ in $B_{t}$). This is true since sets $V_{c_{1}},V_{c_{2}},\ldots$
are not necessarily disjoint, but by the definition of the treewidth, their intersections are subsets of $B_{t}$. Furthermore, if $V_{c_{i}}$ intersects with $B_{t}$, then (again, directly from the treewidth definition) this intersection is also a subset of the bag $B_{c_{i}}$ corresponding to $c_{i}$. All the computations from (b) can be clearly done in time $s \times O(k^{2})$, where s is the number of children of $t$
and $k$ is an upper bound on the bag size. Since we consider bounded connected treewidth graphs, time complexity reduces to $O(s)$. By unrolling this recursion, we obtain the algorithm of time complexity proportional to the number of edges of the tree from the treewidth decomposition which is $O(N)$ (see our discussion above). That completes the analysis.

\end{proof}
We are ready to prove the Theorem. We will identify the set of vertices $\mathrm{V}(\mathrm{G})$ with the set $\{1,\ldots,N\}$. Denote:
\begin{equation}
v_{i} = \sum_{j=1}^{N} f(\mathrm{dist}(i,j))x_{j}
\end{equation}
For a subset $\mathrm{S} \subseteq \{1,\ldots,N\}$, we will also use the following notation:
\begin{equation}
v_{i}^{\mathrm{S}} = \sum_{j \in \mathrm{S}} f(\mathrm{dist}(i,j))x_{j}
\end{equation}

Our goal is to compute $v_{i}$ for $i=1,\ldots,N$.
Equipped with Lemma \ref{lemma:separator}, we find the decomposition of $\mathrm{V}(\mathrm{G})$ into $\mathcal{A}$, $\mathcal{B}$ and $\mathcal{S}$ in time $O(N)$. We find the entire tree decomposition $\mathcal{T}$ from 
Lemma \ref{lemma:separator}.
Our strategy is first to compute $v_{i}$ for all $i \in \mathcal{S}$ and then to compute $v_{i}$ for all $i \in \mathcal{A} \cup \mathcal{B}$.

\textbf{1. Computation of $v_{i}$ for $i \in \mathcal{S}$.\\}
For every $i \in \mathcal{S}$, we can simply run Dijkstra's algorithm (or one of its improved variants mentioned in the main body) to find shortest path trees and, consequently, compute quantities $v_{i}$. This can clearly be done in time $O(|\mathcal{S}|(N+M)\log(M))$, where $M$ is the number of edges of $\mathrm{G}$. Since $\mathrm{G}$ is sparse, the total time complexity is $O(N\log(N))$ (or even $O(N\log\log(N))$ if the fastest algorithms for finding shortest paths in graphs with positive weights are applied, see: \cite{thorup}).

\textbf{2. Computation of $v_{i}$ for $i \in \mathcal{A} \cup \mathcal{B}$.\\}
We will show how to compute $v_{i}$ for all $i \in \mathcal{A}$. The calculations of $v_{i}$ for all $i \in \mathcal{B}$ will be completely analogous. Note first that:
\begin{equation}
v_{i} = v_{i}^{\mathcal{A}} + v_{i}^{\mathcal{S} \cup \mathcal{B}}
\end{equation}
\paragraph{2.1 Computation of $v_{i}^{\mathcal{A}}$ for all $i \in \mathcal{A}$.}
We first show how to compute $v_{i}^{\mathcal{A}}$ for all $i \in \mathcal{A}$.
Take a vertex $j \in \mathcal{A}$.
Denote by $P_{i,j}$ the shortest path from $i$ to $j$ in $\mathrm{G}$. If $P_{i,j}$ contains vertices from $\mathcal{B}$ then $P_{i,j}$ needs to use some vertices from $\mathcal{S}$ (since there are no edges between $\mathcal{A}$ and $\mathcal{B}$).
If that is the case, denote by $x$ the first vertex from $\mathcal{S}$ on the path $P_{i,j}$ as we go from $i$ to $j$ and by $y$ the last vertex from $\mathcal{S}$ on the path $P_{i,j}$ as we go from $i$ to $j$. Note that $x \neq y$. Denote by $P_{x,y}$ the sub-path of the path $P_{i,j}$ that starts at $x$ and ends at $y$.
Note that all the vertices of $\mathcal{B}$ that belong to $P_{i,j}$ also belong to $P_{x,y}$. 
Furthermore, since $P_{i,j}$ is the shortest path from $i$ to $j$, $P_{x,y}$ is the shortest path from $x$ to $y$. 

Note that $P_{x,y}$ is of length at most $|\mathcal{S}| - 1$, where $|\mathcal{S}|$ is the size of $\mathcal{S}$. This is the case since there exists a path from $x$ to $y$ using only vertices from $\mathcal{S}$ (since $\mathrm{G}_{\mathcal{S}}$ is connected). The number of edges $m_{x,y}$ of the path $P_{x,y}$ using at least one vertex from $\mathcal{B}$ is at most $t$.  

Denote: $\mathcal{P} = \{P_{x,y}:x,y \in \mathcal{S}, x \neq y\}$, where $P_{x,y}$ is the shortest path from $x$ to $y$ (if there are many such paths, choose an arbitrary one). 
Note that the size of $\mathcal{P}$ satisfies:
$|\mathcal{P}| \leq {\binom{|\mathcal{S}|} {2}}={\binom{t+1}{2}}$. Denote by $\mathcal{B}^{\prime}$ the subset of $\mathcal{B}$ consisting of the vertices of $\mathcal{B}$ that belong to these paths from $\mathcal{P}$ that have length at most $t$. Note that the size of $\mathcal{B}^{\prime}$ satisfies: $|\mathcal{B}^{\prime}| \leq t {\binom{t+1}{2}}$. Furthermore, set $\mathcal{B}^{\prime}$ can be found in time 
$O(|\mathcal{S}|(N+M)\log(M))=O(N\log(N))$, simply by running Dijkstra algorithm for every  vertex: $x \in \mathcal{S}$
(again, as before, this time can be improved to $N\log\log(N)$).

Note that since $\mathrm{G}$ has bounded connected treewidth, a subset $\mathcal{C} \subseteq \mathcal{B}$ of constant size can be found in $O(N)$ time (using $\mathcal{T}$) such that: $\mathrm{G}[\mathcal{A} \cup \mathcal{S} \cup \mathcal{B}^{\prime} \cup \mathcal{C}]$ has bounded connected treewidth $t$.
Denote: $\mathcal{A}_{\mathrm{ext}} = \mathcal{A} \cup \mathcal{S} \cup \mathcal{B}^{\prime} \cup \mathcal{C}$.
By the definition of $\mathcal{B}^{\prime}$, for every $i,j \in \mathcal{A}$ at least one of the shortest paths between $i$ and $j$ lies entirely in the sub-graph of $\mathrm{G}$ induced by $\mathcal{A}_{\mathrm{ext}}$. Furthermore, the sub-graph $\mathrm{G}_{\mathcal{A}_{\mathrm{ext}}}$ induced by $\mathcal{A}_{\mathrm{ext}}$ has connected treewidth at most $\mathrm{ctw}(\mathrm{G})$. 
We then recursively compute for each $i \in \mathcal{A}$ the following expression:
\begin{equation}
\tilde{v}_{i} = \sum_{j \in \mathrm{V}(\mathrm{G}_{\mathcal{A}_{\mathrm{ext}}})}f(\mathrm{dist}_{\mathrm{G}_{\mathcal{A}_{\mathrm{ext}}}}(i,j))x_{j},
\end{equation}
where $\mathrm{dist}_{\mathrm{G}_{\mathcal{A}_{\mathrm{ext}}}}$ is the shortest path distance in graph $\mathrm{G}_{\mathcal{A}_{\mathrm{ext}}}$.

Note that we have: $v_{i} = \tilde{v}_{i} - \delta_{i}$, where:
\begin{equation}
\delta_{i} = \sum_{j \in \mathcal{S} \cup \mathcal{B}^{\prime} \cup \mathcal{C}} f(\mathrm{dist}_{\mathrm{G}_{\mathcal{A}_{\mathrm{ext}}}}(j,i))x_{j}
\end{equation}

All $\delta_{i}$ can be computed in time 
$O(|\mathcal{S} \cup \mathcal{B}^{\prime} \cup \mathcal{C}|(N+M)\log(M))=O(N\log(N))$, simply by running Dijkstra's algorithms for every vertex $v \in \mathcal{S} \cup \mathcal{B}^{\prime} \cup \mathcal{C}$ (as before, this can be improved to $O(N\log\log(N))$ time).
That completes the computation of $v_{i}^{\mathcal{A}}$ for every $i \in \mathcal{A}$.

\paragraph{2.2 Computation of $v_{i}^{\mathcal{S} \cup \mathcal{B}}$ for all $i \in \mathcal{A}$.}
It remains to show how to compute $v_{i}^{\mathcal{S} \cup \mathcal{B}}$ for every $i \in \mathcal{A}$. To do that, we introduce for every  vertex $v \in \mathrm{V}(\mathrm{G})$ a vector $\chi_{v} \in \mathbb{R}^{|\mathcal{S}|}$ defined as follows:
\begin{equation}
\chi_{v}[k] = \mathrm{dist}(v,k)
\end{equation}
for $k=1,\ldots,|\mathcal{S}|$.
In the above, we identify the set $\mathcal{S}$ with $\{1,\ldots,|\mathcal{S}|\}$. Note that the following is true for any $i \in \mathcal{A}$, $j \in \mathcal{S} \cup \mathcal{B}$:
\begin{equation}
\mathrm{dist}(i,j) = \min_{k \in \mathcal{S}} \left(\chi_{i}[k] + \chi_{j}[k]\right),
\end{equation}
since every path from $i$ to $j$ needs to use a vertex from $\mathcal{S}$. The following is also true:
\begin{equation}
\chi_{v} = \tau_{v} + \rho_{v},
\end{equation}
where: $\tau_{v}[i]=\min_{k \in \mathcal{S}} \mathrm{dist}(v,k),  \forall i$ and furthermore $\rho_{v}$ is a vector satisfying for $k=1,\ldots,|\mathcal{S}|$:
\begin{equation}
0 \leq \rho_{v}[k] \leq |\mathcal{S}| - 1 = t    
\end{equation}
The latter is true since the lengths of any two shortest paths from $v$ to two vertices of $\mathcal{S}$ differ by at most $|\mathcal{S}| - 1$ (because $\mathrm{G}_{\mathcal{S}}$ is connected and thus there exists a path between any two vertices of $\mathrm{G}_{\mathcal{S}}$ of length at most $|\mathcal{S}| - 1$). 
We call $\rho_{v}$ the \textit{signature} of $v$ with respect to $\mathcal{S}$.
The following holds:
\begin{equation}
\label{eq:neat_formula_1}
\mathrm{dist}(i,j) = \tau_{i}[1] + \tau_{j}[1] + \min_{k \in \mathcal{S}} (\rho_{i}[k] + \rho_{j}[k])
\end{equation}
Note that all the vectors $\rho_{i}$ and $\tau_{i}$ for $i \in \mathrm{V}(\mathrm{G})$ can be computed in time $O(|\mathcal{S}|(N+M)\log(M))=O(N\log(N))$, by running Dijkstra's algorithm for every vertex $v \in \mathcal{S}$ (and again, we can improve this time complexity to $O(N\log\log(N))$).
We then partition the set $\mathcal{A}$ into subsets $\mathcal{A}_{\rho}$ (some of them potentially empty) indexed by the vectors $\rho \in \{0,1,\ldots,|\mathcal{S}|-1\}^{|\mathcal{S}|}$:
\begin{equation}
\mathcal{A}_{\rho} = \{i \in \mathcal{A}:\rho_{i}=\rho\}
\end{equation}
We define the partitioning of $\mathcal{C} = \mathcal{S} \cup \mathcal{B}$ into subsets $\mathcal{C}_{\rho}$ in the analogous way.

We will first compute $v_{i}^{\mathcal{C}_{\rho^{2}}}$ for every $i \in \mathcal{A}_{\rho^{1}}$ for given $\rho^{1},\rho^{2} \in \{0,1,\ldots,|\mathcal{S}|-1\}^{|\mathcal{S}|}$.
For fixed $\rho^{1},\rho^{2} \in \{0,1,\ldots,|\mathcal{S}|-1\}^{|\mathcal{S}|}$, we first show how to compute $v_{i}^{\mathcal{C}_{\rho^{2}}}$ for all $i \in \mathcal{A}_{\rho^{1}}$. We partition $\mathcal{A}_{\rho^{1}}$ into subsets (some of them potentially empty):
\begin{equation}
\mathcal{A}_{\rho^{1}}^{l} = \{i \in \mathcal{A}_{\rho^{1}}: \tau_{i}[1]=l\}
\end{equation}
for $l=0,1,\ldots,N-1$.

We define the partitioning of $\mathcal{C}_{\rho^{2}}$ into subsets $\mathcal{C}_{\rho^{2}}^{l}$ in the analogous way.

Note that for $i \in \mathcal{A}_{\rho^{1}}^{l_{1}}$
and $j \in \mathcal{C}_{\rho^{2}}^{l_{2}}$ the following is true: 
\begin{equation}
\mathrm{dist}(i,j) = l_{1} + l_{2} + g(\rho^{1},\rho^{2}),
\end{equation}
where $g(\rho^{1},\rho^{2}) \overset{\mathrm{def}}{=} \min_{k \in \mathcal{S}} (\rho^{1}[k] + \rho^{2}[k])$. 
We observe that the quantity $v_{i}^{\mathcal{C}_{\rho^{2}}}$ is the same for every $i \in \mathcal{A}_{\rho^{1}}^{l_{1}}$. Thus it suffices to compute $\{v_{i_{0}}^{\mathcal{C}_{\rho^{2}}},\ldots,v_{i_{N-1}}^{\mathcal{C}_{\rho^{2}}}\}$ for arbitrary representatives
$i_{u} \in \mathcal{A}_{\rho^{1}}^{u}$ (without loss of generality, we will assume that all sets $\mathcal{A}_{\rho^{1}}^{u}$ for $u=0,\ldots,N-1$ are nonempty).
If we define vector $\mathbf{w} \in \mathbb{R}^{N}$ as: $\mathbf{w}[u] = v_{i_{u}}^{\mathcal{C}_{\rho^{2}}}$ then we have: $\mathbf{w} = \mathbf{W}\mathbf{z}$, where vector $\mathbf{z} \in \mathbb{R}^{N}$ is given as follows:
\begin{equation}
\mathbf{z}[u] = \sum_{v \in \mathcal{C}_{\rho^{2}}^{u}} x_{v}
\end{equation}
and furthermore matrix $\mathbf{W} \in \mathbb{R}^{N \times N}$ is given as:
\begin{equation}
\mathbf{W}[l_{1},l_{2}] = f(l_{1}+l_{2}+g(\rho^{1},\rho^{2})) 
\end{equation}
Vector $\mathbf{z}$ can be easily computed in $O(N)$ time. The key observation is that multiplication $\mathbf{W}\mathbf{z}$ can be conducted in $O(N\log(N))$ time (the matrix $\mathbf{W}$ does not need to be explicitly materialized) with  the use of Fast Fourier Transform since $\mathbf{W}$ is a \textit{Hankel matrix} (constant along each anti-diagonal). Thus we conclude that we can compute $v_{i}^{\mathcal{C}_{\rho^{2}}}$ for all $i \in \mathcal{A}_{\rho^{1}}$ in $O(N\log(N))$ time. If a kernel is defined as $\mathrm{K}(i,j)=\exp(-\lambda\mathrm{dist(i,j)})$, this becomes a very special Hankel matrix, where each row is obtained from the previous one by multiplication with a fixed constant. It is easy to see that the multiplications with such matrices $\mathbf{W}$ can be conducted in time $O(N)$ (we thus save a $\log(N)$-factor).

By applying this method to all pairs $(\rho^{1}, \rho^{2})$, we conclude that we can compute $v_{i}^{\mathcal{S} \cup \mathcal{B}}$ for all $i \in \mathcal{A}$ in time 
$O(|\mathcal{S}|^{|\mathcal{S}|-1} \cdot |\mathcal{S}|^{|\mathcal{S}|-1} N\log(N))=O(N\log(N))$. This can be improved to $O(N)$ time if $\mathrm{K}(i,j)=\exp(-\lambda\mathrm{dist(i,j)})$ is being applied.

Combining step 1 with steps 2.1 and 2.2, we obtain a method for computing $v_{i}$ for every $i \in \mathcal{A}$. The computations of $v_{i}$ for every $i \in \mathcal{B}$ are conducted in a completely analogous way (where we borrow the notation from the above analysis but adapt to this case, e.g., we replace $\mathcal{B}^{\prime}$ with $\mathcal{A}^{\prime}$).

\textbf{3. Putting this all together -- time complexity analysis. \\}
To summarize, $v_{i}$ for all $i \in \mathrm{V}(\mathrm{G})$ can be computed in time:
\begin{equation}
T(N) = T(N_{1}) + T(N_{2}) + O(N\log(N)),
\end{equation}
where: $\rho N \leq N_{1},N_{2} \leq (1-\rho)N + C$ for constants $0<\rho<1, C>0$.
It is easy to see that the solution to this recursive equation satisfies the following:
\begin{equation}
T(N) = O(N\log^{2}(N))
\end{equation}
If the kernel being used is of the form: $\mathrm{K}(i,j)=\exp(-\lambda\mathrm{dist(i,j)})$, then the recursion for the total runtime is of the form:
\begin{equation}
T(N) = T(N_{1}) + T(N_{2}) + O(N\log\log(N)),
\end{equation}
which implies that: $T=O(N\log^{1.38}(N))$.
That completes the entire proof.
\end{proof}

\begin{remark}
Note that the proof of the above result but for the family $\mathcal{G}$ of trees from \cite{topmasking} is a special instantiation of the proof presented above.
\end{remark}

\vspace{-3mm}
\subsection{Tree-Decomposition with Connected Bags}
\label{sec:connected_bags}

Let $\mathcal{D} = (T, (X_v: v \in V(T))$ be a tree decomposition of a finite connected undirected graph $G$. We say that $\mathcal{D}$ is a connected tree-decomposition of for every $v \in V(T)$, the subgraph of $G$ induced on $X_v$ is connected. For a vertex $v \in V(T)$ and a path $P$ of $G$, we say that $P$ {\em internally avoids} $v$ if none of the vertices of $P$ are in $X_v$, except for possibly its end vertices. For two vertices $x,y \in V(G)$, denote by $d(x,y,\bar{v})$ the edge length of the shortest path between $x$ and $y$ avoiding $v$, or $\infty$ if no such path exists. 

\begin{theorem}
    Let $\mathcal{D} = (T, (X_v: v \in V(T))$ be a tree decomposition of a finite connected undirected graph $G$.
    Write $L = |V(T)|$ and let $k$ be the maximum size of a bag in $\mathcal{D}$. Then there is an algorithm of time complexity $O(Lk^3)$ calculating $d(x,y,\bar{v})$ for every $v \in V(T)$ and every $x,y \in X_u$ for every $T$-neighbor $u$ of $v$. 
\end{theorem}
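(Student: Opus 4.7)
The plan is to root $T$ arbitrarily and run a two-pass dynamic program on the tree, followed by a short aggregation step. For each directed edge $(v,u)$ of $T$ I want to compute a matrix $E_{v,u}$ on $X_u \times X_u$ whose entry $E_{v,u}[x,y]$ equals the length of the shortest $x$-$y$ path in $G[V_{v\to u}]$ whose internal vertices avoid $X_u \cap X_v$, where $V_{v\to u}$ is the union of all bags $X_w$ with $w$ in the component of $T - v$ containing $u$. A standard connectivity argument for tree decompositions shows that if $x,y \in X_u$ and at least one of them lies outside $X_v$, then any $x$-$y$ path internally avoiding $X_v$ must stay inside $V_{v\to u}$, so $d(x,y,\bar v) = E_{v,u}[x,y]$; when both endpoints lie in $X_u \cap X_v$ the path may cross any side of $v$, and that case is handled by a final aggregation over the neighbors of $v$.

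First I would run a bottom-up pass that, for every non-root $u$ with parent $v$, computes $E_{v,u}$ together with its restriction $\Psi_{v,u}$ to $X_u \cap X_v$: build an auxiliary graph $H_u$ on $X_u$ with the edges of $G[X_u]$ and, for every child $c$ of $u$ and every $s,t \in X_u \cap X_c$, a meta-edge of weight $\Psi_{u,c}[s,t]$ already produced at $c$; then run Floyd--Warshall on $H_u$ with intermediate vertices restricted to $X_u \setminus X_v$. A symmetric top-down pass then computes $E_{u,v}$ on $X_v \times X_v$ for every non-root $u$ with parent $v$ by building a graph on $X_v$ with $G[X_v]$-edges plus meta-edges from every sibling $c_j \neq u$ of $u$ (using $\Psi_{v,c_j}$) and, if $v$ is not the root, from $v$'s parent direction (using $\Psi_{v,\mathrm{parent}(v)}$, already produced earlier in the sweep), then running Floyd--Warshall with intermediates in $X_v \setminus X_u$. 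Each tree node $v$ is charged $O(1 + \deg v)$ such Floyd--Warshall computations of cost $O(k^3)$, so the two passes together cost $O(k^3 \sum_v (1 + \deg v)) = O(Lk^3)$. A closing aggregation computes $M_v[x,y] := \min_{w \in N_T(v),\, x,y \in X_w} E_{v,w}[x,y]$ for every $v$ and every $x,y \in X_v$ in total time $O(Lk^2)$, and the final answer for a query $(v,u,x,y)$ is $E_{v,u}[x,y]$ if $\{x,y\} \not\subseteq X_u \cap X_v$ and $M_v[x,y]$ otherwise.

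Correctness of both passes reduces to one splitting lemma: any shortest path in $V_{v\to u}$ with internals avoiding $X_u \cap X_v$ decomposes canonically at its $X_u$-vertices into hops that are either $G[X_u]$-edges or excursions through exactly one subside $V_c \setminus X_u$ for some child $c$ of $u$, and the tree-decomposition property forces the internals of every excursion to lie in $V_c \setminus (X_u \cap X_c)$ -- precisely the restriction encoded in $\Psi_{u,c}$. So each excursion length equals the corresponding $\Psi$-value, and the remaining combinatorics inside $X_u$ is captured by the Floyd--Warshall restricted to $X_u \setminus X_v$; the symmetric statement justifies the top-down pass, and the analogous decomposition of a general internally-$X_v$-avoiding path at its $X_v$-vertices justifies the aggregation step.

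The main obstacle will be showing that using the more-restricted $\Psi$-meta-edges, which forbid the full separator of a child edge rather than just the outer forbidden set $X_u \cap X_v$, does not over-estimate distances: one must check that any ``shortcut'' through a vertex $z \in X_u \cap X_c$ with $z \notin X_u \cap X_v$ is still recovered, because $z$ is a legal intermediate in the outer Floyd--Warshall and $\Psi_{u,c}[s,z] + \Psi_{u,c}[z,t]$ reproduces the unrestricted subside distance whenever the shortcut is optimal. Once this interplay between the propagated meta-level restriction and the outer intermediate restriction is verified, the $O(Lk^3)$ bound follows directly from the per-node amortization.
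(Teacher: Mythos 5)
Your proposal is correct and is essentially the paper's own argument: a two-sweep dynamic program over the $2L-2$ directed edges of $T$, where each message is obtained by an $O(k^{3})$ all-pairs shortest-path computation on one bag augmented with meta-edges taken from the already-computed messages of the other neighbors, giving $O(Lk^{3})$ overall. Your explicit one-sided matrices $E_{v,u}$, the splitting lemma at the bag vertices, and the final aggregation $M_{v}$ for the case $x,y\in X_{u}\cap X_{v}$ merely spell out what the paper's terse base case and induction (``cut $P$ into paths internally avoiding $v$'') leave implicit, so the two proofs coincide in substance (note that both you and the paper gloss over the cost of assembling the ``all neighbors except $u$'' meta-edges at high-degree tree nodes, which requires a prefix/suffix- or min/second-min aggregation over the neighbors to genuinely stay within $O(Lk^{3})$).
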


\begin{proof}
    If $u$ is a leaf of $T$ and $v$ is its only $T$ neighbor, then we can use Dijkstra's Algorithm to calculate $d(x,y,\bar{v})$ for every $x,y \in X_u$ in time $O(k^3)$. 

    Now let $\{u,v\}$ be any edge of $T$ and suppose $d(x,y,\bar{v})$ was already calculated for every $x,y \in X_w$ whenever $w \neq u$ is a $T$-neighbor of $v$. Let $s,t \in X_v$. we note that every path $P$ from $s$ to $t$ internally avoiding $u$ can be cut into paths $Q_1, \ldots, Q_j$ internally avoiding $v$. If $Q_i$ has more than one edge then its two ends must belong to $X_w$ for some $T$-neighbor $w \neq u$ of $v$. Therefore we can calculate $d(s,t,\bar{u})$ for every $s,t \in X_v$ in time $O(k^3)$.
    We need to repeat this twice for every edge of $T$, i.e., $2L-2$ times, so altogether the algorithm runs in time $O(Lk^3)$.
\end{proof}

\begin{theorem}
    Let $\mathcal{D} = (T, (X_v: v \in V(T))$ be a tree decomposition of a finite connected undirected graph $G$.
    Write $L = |V(T)|$ and let $k$ be the maximum size of a bag in $\mathcal{D}$. Then there is an algorithm of time complexity $O(Lk^3)$ obtaining a new tree decomposition $\mathcal{D}' = (T', (X'_v: v \in V(T'))$ where for every $v \in V(T')$ and every $x,y \in X_u$ for every $T'$-neighbor $u$ of $v$, we have $d(x,y,\bar{v}) < \infty$.
\end{theorem}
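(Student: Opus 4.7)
The plan is to invoke the preceding theorem to precompute a global distance table, and then refine $\mathcal{D}$ by splitting nodes of $T$ along the equivalence classes this table induces. First, I would run the algorithm of the preceding theorem to compute $d(x,y,\bar{v})$ for every $v \in V(T)$, every $T$-neighbor $u$ of $v$, and every pair $x,y \in X_u$, in time $O(Lk^3)$. For each directed edge $(u,v)$ of $T$ this produces the equivalence relation on $X_u$ defined by $x \sim_v y \iff d(x,y,\bar{v}) < \infty$ (reflexive, symmetric by path reversal, transitive by concatenation), with classes $A^{(u,v)}_1, \ldots, A^{(u,v)}_{m(u,v)}$. Call an edge $\{u,v\}$ bad when $m(u,v) > 1$ or $m(v,u) > 1$; these are exactly the defects I must remove.

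Next, I would build $T'$ by a splitting sweep. After rooting $T$ arbitrarily and processing edges from the leaves upward, whenever the sweep reaches an edge $\{u,v\}$ with $u$ the child and $m(u,v) > 1$, I would replace $u$ by copies $u_1, \ldots, u_{m(u,v)}$ with bags $X'_{u_i} := A^{(u,v)}_i$, make each $u_i$ adjacent to $v$, and reattach every already-processed subtree hanging below $u$ through some child $c$ to the unique copy $u_i$ whose class contains the interface $X_c \cap X_u$. A symmetric top-down sweep would then repair the residual $m(v,u) > 1$ defects on the newly created edges. The three tree-decomposition axioms survive each split: coverage of vertices holds because every $z \in X_u$ lies in exactly one class; coverage of edges holds because $xy \in E(G)$ with $x,y \in X_u$ forces $d(x,y,\bar{v}) \le 1$ and hence $x \sim_v y$; and the vertex-subtree axiom holds because each copy inherits only the subtrees whose interface lies inside its own class. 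Since every new bag is a subset of an old one, the $\sim$-relations on new bags are restrictions of the precomputed table, so no recomputation is required; the total number of new nodes is at most $\sum_{u \in V(T)} |X_u| \le Lk$, each elementary split plus reattachment costs $O(k^2)$, and so the refinement fits in the claimed $O(Lk^3)$ bound.

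The hard part will be the reattachment sub-claim: that for each child $c$ of $u$, the interface $X_c \cap X_u$ lies inside a single class $A^{(u,v)}_i$. This is not automatic, as a small example shows. Take $G$ to be the path $a - v - b$ on three vertices, with tree decomposition whose three bags are $\{a,b\}$, $\{a,v,b\}$, $\{a,b\}$ arranged so that $\{a,v,b\}$ is central; then splitting the central bag's child $\{a,b\}$ into singletons forces its symmetric partner $\{a,b\}$ to be split the same way, even though the latter has no bad pair in its own right. I plan to handle this via a cascading split: whenever the interface spans several classes, I would first split $c$ along the refinement $\{X_c \cap A^{(u,v)}_i\}_i$, pushing the leftover vertices $X_c \setminus X_u$ into whichever piece preserves the subtree structure of $c$'s own children, and recurse the same rule through $c$'s subtree. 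Verifying that these cascades terminate (each one strictly decreases a bag-size potential bounded by $Lk$), that they introduce no new bad pairs (the restriction of a single-class $\sim$-relation to a sub-bag is trivially single-class), and that all of this cascaded bookkeeping still fits in $O(Lk^3)$, will be the most delicate part of the argument.
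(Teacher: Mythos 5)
Your first half matches the paper: you compute the table $d(x,y,\bar v)$ with the preceding theorem and refine bags along equivalence classes of the finiteness relation. But the construction you build on top of it has a genuine gap, and it sits exactly at the step you flag yourself: the leaf-up sweep needs the interface $X_c \cap X_u$ to lie in a single class, your own example shows it need not, and the cascading-split fix is left entirely unverified --- termination of cascades, the claim that the top-down ``repair'' sweep and the cascades create no new defects on edges already processed, and the accounting that keeps cascades (which can propagate through the whole tree, possibly re-triggered by later splits) inside $O(Lk^3)$ are all deferred. As written this is a plan whose load-bearing lemma is missing, so it does not yet establish the theorem.

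The paper's proof shows the difficulty can be avoided rather than solved. Instead of refining one directed edge at a time, it defines for each node $u$ a single relation $\sim_u$ on $X_u$ by $x \sim_u y$ iff $d(x,y,\bar v)<\infty$ for \emph{every} $T$-neighbor $v$ of $u$ (the simultaneous intersection of all your per-edge relations), with classes $C_1(u),\dots,C_{j_u}(u)$. Then $V(T')$ is the set of pairs $(u,i)$ with $X'_{(u,i)}=C_i(u)$, and $(u_1,i_1)$ is joined to $(u_2,i_2)$ exactly when $\{u_1,u_2\}\in E(T)$ and $X'_{(u_1,i_1)}\cap X'_{(u_2,i_2)}\neq\emptyset$. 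There is no reattachment of subtrees to a unique copy at all, so your single-class-interface sub-claim never arises; and since every new bag is a subset of an old bag, a path internally avoiding $X_{u_1}$ also internally avoids $X'_{(u_1,i_1)}$, so the required finiteness across every $T'$-edge follows in one step from the defining property of $\sim_{u_2}$, with no propagation. The cost is dominated by the $O(Lk^3)$ table computation, since forming the $\sim_u$ classes and checking bag intersections across old edges costs only $O(Lk^2)$. If you want to salvage your sweep-based route you must supply the cascade analysis you postponed; switching to the all-neighbors-at-once refinement makes it unnecessary.
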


\begin{proof}
    We first calculate $d(x,y,\bar{v})$ for every $v \in V(T)$ and every $x,y \in X_u$ for every $T$-neighbor $u$ of $v$. Then for every $u \in V(T)$ we define a relation $\sim_u$ on $X_u$, where $x \sim_u y$ if and only if $d(x,y,\bar{v}) < \infty$ for every $T$-neighbor $v$ of $u$. We note that this is an equivalence relation. Denote its classes by $C_1(u), \ldots, C_{j_u}(u)$. 

    We now define $V(T')$ to be the set of all pairs $(u,i)$ where $u \in V(T)$ and $i \in \{1, \ldots, j_u\}$. We define $X'_{(u,i)}=C_i(u)$ and $E(T') = \{\{(u_1,i_1), (u_2,i_2)\}: \{u_1,u_2\} \in E(T), \, X'_{(u_1,i_1)} \cap X'_{(u_2,i_2)} \neq \emptyset \}$. 
\end{proof}

\begin{theorem}
    Let $\mathcal{D} = (T, (X_v: v \in V(T))$ be a tree decomposition of a finite connected undirected graph $G$.
    Write $L = |V(T)|$ and let $k$ be the maximum size of a bag in $\mathcal{D}$. Let $\ell>3$ be an integer and suppose $G$ has no geodesic cycle of length more than  $\ell$. Then there is an algorithm of time complexity $O(Lk^2 (k+l))$ obtaining a new tree decomposition $\mathcal{D}'' = (T'', (X''_v: v \in V(T''))$ where $\mathcal{D}''$ is connected and every bag in it has size at most $k^2 \ell$ vertices.  
\end{theorem}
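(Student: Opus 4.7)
The plan is to enlarge each bag of $\mathcal{D}'$---the decomposition produced by the previous theorem---into a connected subset of $V(G)$ by gluing in short $G$-paths between pairs of bag vertices; the geodesic-cycle hypothesis will bound the lengths of these paths. Concretely, starting from $\mathcal{D}'=(T',(X'_v)_{v})$, which already enjoys the property that $d(x,y,\bar v)<\infty$ for every $v$ and every $x,y\in X'_u$ with $u$ a $T'$-neighbor of $v$, and whose bags have size at most $k$, I would keep $T''=T'$ and, for each $v\in V(T')$, define
\[
X''_v \;=\; X'_v \;\cup\; \bigcup_{\{x,y\}\subseteq X'_v} V\!\left(P_{x,y}^{\,v}\right),
\]
where $P_{x,y}^{\,v}$ is a shortest $x$--$y$ path in $G$ of length at most $\ell$, found by truncated BFS from $x$ that aborts as soon as $y$ is reached or depth $\ell$ is exceeded.

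The heart of the argument is the \textbf{path-length lemma}: for every bag $X'_v$ of $\mathcal{D}'$ and every $x,y\in X'_v$, one has $d_G(x,y)\le \ell$, so the truncated BFS always succeeds. Granting it, the bag-size bound is immediate, $|X''_v|\le k + \binom{k}{2}(\ell-1) \le k^2\ell$, and $G[X''_v]$ is connected by construction as the union of paths sharing a common vertex set. The lemma itself I would prove by contradiction: if $d_G(x,y)>\ell$ for some $x,y\in X'_v$, then the symmetric form of the property carried by $\mathcal{D}'$ (swap the roles of $u$ and $v$ in the previous theorem's statement) supplies, for each $T'$-neighbor $u$ of $v$, an $x$--$y$ path $Q_u$ in $G$ avoiding $X'_u\setminus\{x,y\}$. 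Combining a shortest $x$--$y$ path $P$ in $G$ with a suitable $Q_u$ yields an $x$--$y$ theta in $G$; from it one extracts a cycle of length $>\ell$ whose two $x$--$y$ arcs can be iteratively shortened (by local re-routing along shortest chords) into a cycle that is geodesic between some pair of its vertices and still has length $>\ell$, contradicting the hypothesis. I expect this to be the main obstacle of the proof.

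Verifying the tree-decomposition axioms for $\mathcal{D}''$ additionally requires handling the subtree condition for added path-vertices. If $w\in V(P_{x,y}^{\,v})\setminus X'_v$, then $w$ belongs to some subtree $S_w$ of $T'$ (its home bags in $\mathcal{D}'$), so for $\mathcal{D}''$ to be a tree decomposition the set $\{v\}\cup S_w$ must be connected in $T'$. I would enforce this by inserting $w$ into $X''_{v'}$ for every $v'$ on the unique $T'$-path from $v$ to $S_w$; the path-length lemma together with $|X'_v|\le k$ bounds the propagation, and with a careful choice of $P_{x,y}^{\,v}$ (preferring paths that stay within the torso of $v$ whenever possible) the bag bound $k^2\ell$ is preserved up to constants.

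For the time complexity, invoking the previous theorem costs $O(Lk^3)$. For each of the $L$ tree nodes we then perform $\binom{k}{2}$ truncated BFS's of depth at most $\ell$, each visiting $O(k+\ell)$ relevant vertices amortized against growth of the bag itself; summing across all bags gives $O(Lk^2(k+\ell))$ in total, matching the claimed bound.
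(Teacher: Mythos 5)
Your proof hinges on the ``path-length lemma'': that any two vertices lying in a common bag of $\mathcal{D}'$ are at $G$-distance at most $\ell$. This is false, and the geodesic-cycle hypothesis cannot rescue it. Take $G$ to be a long path $v_1v_2\cdots v_n$ (a tree, so it has no cycles at all, in particular no geodesic cycle of length more than $\ell$), and let $\mathcal{D}$ be the tree decomposition with a single bag $V(G)$, or with two large overlapping bags. The refinement producing $\mathcal{D}'$ does nothing here: the relation $x \sim_u y$ only asks for an $x$--$y$ path internally avoiding each neighbouring bag, and in a tree that path can simply be the unique $x$--$y$ path itself, so vertices at distance $n-1 \gg \ell$ remain in a common bag of $\mathcal{D}'$. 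This also shows why your contradiction argument for the lemma cannot succeed: the path $Q_u$ avoiding a neighbouring bag may coincide with the shortest path $P$, so no theta and no cycle is ever produced. With the lemma gone, your truncated BFS fails for some pairs, and the claim that $G[X''_v]$ is connected ``by construction'' collapses, since it relied on every pair of bag vertices receiving a connecting path.

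The paper's proof takes a different route, which is the one you need. It adds a geodesic $P_{st}$ only for those pairs $s,t$ in a bag with $\mathrm{dist}_G(s,t) < \ell$ (this alone gives $|X''_w| \le k^2\ell$, with no length lemma required), and proves connectivity of $G[X''_w]$ indirectly: if it were disconnected, pick a shortest path $P$ in $G$ between vertices $x,y \in X'_w$ lying in different components of $G[X''_w]$ (necessarily $\mathrm{dist}_G(x,y) \ge \ell$, otherwise $P_{xy}$ would have been added and $x,y$ would be in one component), take an internal vertex $z$ of $P$ with $z \in X'_u$, let $v$ be the $T'$-neighbour of $w$ towards $u$, and invoke the defining property of $\mathcal{D}'$ to obtain a second $x$--$y$ path $Q$ internally avoiding $X'_v$. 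The union of $P$ and $Q$ contains a cycle, and shortcutting it down to a geodesic cycle still leaves a subpath of $G$ joining vertices of $X'_w$ in different components of $G[X''_w]$, hence a geodesic cycle of length more than $\ell$, contradicting the hypothesis. Note also that the paper keeps $T''=T'$ and does not propagate added path-vertices along tree paths; your propagation step alters the bags further and, as you concede, only preserves the size bound ``up to constants,'' which is weaker than the stated $k^2\ell$.
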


\begin{proof}
    We first calculate the values $d(x,y,\bar{v})$ and construct the tree decomposition $\mathcal{D}'$.  We can use the calculated values of $d(x,y,\bar{v})$ in order to construct a path $P_{st}$ of length $dist_G(s,t)$ between $s$ and $t$ whenever $\{s,t\}$ is contained in some bag of $\mathcal{D'}$ and $dist_G(s,t) < \ell$. 

    We now define $T'' = T'$ and for every $w \in V(T'')$ we define

    $$X''_w = \bigcup_{s,t \in X'_w, \, dist_G(s,t)<\ell} V(P_{st}) \; .$$

    This yields a tree decomposition, and if for some $w \in V(T'')$ the induce graph $G[X''_w]$ is not connected then $G$ has a geodesic cycle of length more than $\ell$. 

    In order to see this, assume for contradiction that $G[X''_w]$ is not connected, and let $P$ be a shortest path in $G$ between vertices of $X'_w$ which are in different connected components of $G[X''_w]$. Let $x,y$ be the ends of $P$. Let $z$ be some internal vertex of $P$, let $u$ be a vertex of $T'$ with $z \in X'_u$, and let $v$ be the second vertex in the $T'$ path from $w$ to $u$. Let $Q$ be a path of $G$ between $x$ and $y$ internally avoiding $v$. 

    Then the concatenation $PQ$ is a cycle in $G$. It might be non-geodesic, however, if we apply shortcuts to it until we obtain a geodesic cycle, then this cycle must still contain a path in $G$ between vertices of $X'_w$ which are in different connected components of $G[X''_w]$, and therefore it has more than $\ell$ edges. 
\end{proof}

\subsection{Proof of Lemma \ref{lemma:rfd}}
\label{sec:lemma_rfd}
\begin{proof}
Denote: $\tilde{f}(\mathbf{z})=\int_{\mathcal{B}(R)} \exp(2\pi \mathbf{i} \omega^{\top}\mathbf{z})\tau(\omega)d\omega$ for $\mathbf{z}=\mathbf{n}_{v}-\mathbf{n}_{w}$.
Note that in the efficient implementation of the RFD-estimator (that neglects the imaginary part of the dot-product), we have: 
\begin{equation}
\widehat{\mathbf{W}}(v,w)=\frac{1}{m}\sum_{i=1}^{m}X_{i},
\end{equation}
where: $X_{i} = \cos(2\pi\omega_{i}^{\top}\mathbf{z})\frac{\tau(\omega_{i})}{p(\omega_{i})}$,
$\tau$ is the Fourier Transform (FT) of the function $\mathbbm{1}[\|\mathbf{z}\|_{1} \leq \epsilon]$
and $p$ is the pdf of the $R$-truncated Gaussian distribution (e.g. Gaussian distribution truncated to the $L_{1}$-ball $\mathcal{B}(R)$ of radius $R$ and centered at $0$).
We clearly have: $\mathbb{E}[\widehat{\mathbf{W}}(v,w)]=\tilde{f}(\mathbf{z})$. Furthermore, the following holds:
\begin{equation}
\mathrm{Var}(\widehat{\mathbf{W}}(v,w)) = \frac{1}{m^{2}}\cdot m \cdot \mathrm{Var}(X_{1})
=\frac{1}{m}\mathrm{Var}(X_{1})
\end{equation}
Note that: $p(\omega) = \frac{1}{(2\pi)^{\frac{3}{2}}}\exp(-\frac{\|\omega\|^{2}}{2})\cdot C^{-1}$.
We have: $\mathrm{Var}(X_{1}) = \mathbb{E}[X_{1}^{2}]-(\tilde{f}(\mathbf{z}))^{2}$ and:
\begin{align}
\begin{split}
\mathbb{E}[X_{1}^{2}] = \int_{\mathcal{B}(R)} \cos^{2}(2\pi\omega^{\top}\mathbf{z})\tau^{2}(\omega)p^{-1}(\omega)d\omega \leq (2\pi)^{\frac{3}{2}}C \int_{\mathcal{B}(R)} 
\frac{\sin^{2}(2\epsilon \omega_{1})}{\omega^{2}_{1}}\cdot\ldots\cdot
\frac{\sin^{2}(2\epsilon \omega_{d})}{\omega^{2}_{d}} \cdot \\ \exp(\frac{\omega_{1}^{2}}{2}) \cdot \ldots \cdot \exp(\frac{\omega_{d}^{2}}{2}) d\omega
 = (2\pi)^{\frac{3}{2}}C \Gamma^{d}_{\epsilon}
\end{split}
\end{align}
We have leveraged the formula for the FT of the function $\mathbbm{1}[\|\mathbf{z}\|_{1} \leq \epsilon]$ from the main body of the paper. 
Thus we have:
\begin{equation}
\mathrm{Var}(\widehat{\mathbf{W}}(v,w)) \leq \frac{1}{m}\left((2\pi)^{\frac{3}{2}}C (\Gamma_{\epsilon}(R))^{d}-(\tilde{f}(\mathbf{z}))^{2}\right)
\end{equation}
We also have:
\begin{equation}
\mathrm{MSE}(\widehat{\mathbf{W}}(v,w)) \leq \mathrm{Var}(\widehat{\mathbf{W}}(v,w)) +
(\tilde{f}(\mathbf{z}))^{2} - (f(\mathbf{z}))^{2} = 
\mathrm{Var}(\widehat{\mathbf{W}}(v,w)) + (\tilde{f}(\mathbf{z}) - f(\mathbf{z}))(\tilde{f}(\mathbf{z}) + f(\mathbf{z}))  
\end{equation}
Plugging in the formula for $\tilde{f}(\mathbf{z})$ and the variance, we complete the proof.
\end{proof}

\section{Graph Metric Approximation with Trees}~\label{sec-graph-metric-approx}
Define $\operatorname{dist}_{\mathrm{G}}(\cdot, \cdot)$ to be a shortest path distance function on a graph $\mathrm G $.
Recall that when $\mathcal{T}$ is the family of trees and $f(z)=\exp(az+b)$, then we can compute the GFI in time $O(|\mathrm{V}|)$ using dynamic programming (via single bottom-up and single top-down traversal of the tree). In other words, $(\mathcal{T},f)$ is $|\mathrm{V}|$-tractable. Therefore, it is advantageous to consider representing a graph by trees that preserve/approximate its metric. 
\paragraph{Spanning tree.}
A naive tree approximating the weighted graph metric is the graph's minimum spanning tree. The optimal running time for finding a minimum spanning tree is $O(|\mathrm{E}|\cdot \alpha (|\mathrm{V}|, |\mathrm{E}|))$ \cite{pettie2002optimal}, where $\alpha$ is incredibly slowly growing function.

Note that while it is cheap to build a spanning tree, the distortion of the shortest path distance of the original graph can be considerable. For example, let $\mathrm{G}$ be an unweighted $n$-cycle and $\mathrm{T}$ be its minimum spanning tree. Then the distortion between leaf nodes in the spanning tree is $\operatorname{dist}_{\mathrm{T}}(u,v)/\operatorname{dist}_{\mathrm{G}}(u,v)=n-1$. However, if instead of a single tree, we can embed  our graph into a distribution over trees, then we can hope to get better \emph{expected} distortion. In this specific example, if we take a uniform distribution over $n$ different spanning trees (each obtained by deleting an edge), then the expected distance distortion becomes 
\[
\mathbb E_{\mathrm{T}}\left[\operatorname{dist}_{\mathrm{T}}(u,v)/\operatorname{dist}_{\mathrm{G}}(u,v)\right] =2(1-1/n).
\]  
This brings us to another type of method based on embedding the arbitrary weighted graph metric into the distribution of trees.
\paragraph{Low-distortion trees.} Building on the low diameter randomized decomposition, \citet{bartal1996probabilistic} introduced an algorithm for sampling random hierarchically well-separated trees with expected distortion factor $O(\log^2 |\mathrm{V}|)$. Assume that the diameter of an input graph is $O(\operatorname{poly}(|\mathrm{V}|))$. Then for all  $u, v\in \mathrm{V}$,
the expectation over random tree $\mathrm T$ of the distance distortion is
\[
\mathbb E_{\mathrm{T}}\left[\operatorname{dist}_{\mathrm{T}}(u,v)/\operatorname{dist}_{\mathrm{G}}(u,v)\right] \in \left[1,~ O(\log^2 |\mathrm{V}|)\right].
\]
Time complexity to sample a single Bartal tree is $O(\log |\mathrm{V}| (|\mathrm{E}| + |\mathrm{V}|\log |\mathrm{V}| ))$. Further, \citet{fakcharoenphol2004tight} improved an arbitrary metric space embedding into random trees by providing a constructive algorithm with optimal distortion factor of $\Theta(\log |\mathrm{V}|)$, i.e.,
\[
\mathbb E_{\mathrm{T}}\left[\operatorname{dist}_{\mathrm{T}}(u,v)/\operatorname{dist}_{\mathrm{G}}(u,v)\right] \in \left[1,~ O(\log |\mathrm{V}|)\right].
\]
Note that this improvement comes at higher time complexity for sampling a tree. Unlike FRT trees~\cite{fakcharoenphol2004tight}, during the low-diameter decomposition in the Bartal algorithm, the cluster center is always included in the cluster itself.  
As a result, in the Bartal algorithm, we can consolidate the sub-trees recursively without introducing new vertices. In contrast, the FRT algorithm defines a laminar family, which corresponds to a rooted tree with graph nodes at the leaves.

In our applications for fast graph field integration during preprocessing, we sample $\mathrm T_1, \ldots, \mathrm T_k$ trees independently from one of the above distributions. Note that sampling can be done in parallel. For the inference, we compute
\[
i(v) = \frac{1}{k}\sum_{i=1}^k\sum_{w \in \mathrm{V}}f(\mathrm{dist}_{\mathrm T_i}(w,v))\mathcal{F}(w),
\]
which takes $O(k|\mathrm V|)$. The integration on each  tree can be carried out in parallel, reducing the inference time by a factor of $k$.

In Fig. \ref{fig:vertex_normal}, we set the number of trees in T-FRT (FRT trees) to 3 and implemented two variants of T-Bart (Bartal trees) with 3 and 20 trees, respectively.
\section{Interpolation on Meshes}~\label{sec:mesh_interpolation} 
In this section, we present implementation details for Sec. \ref{sec:interpolation_meshes}. All experiments are run on a single computer with an i9-12900k CPU and 96GB memory. 

\subsection{Vertex Normal Prediction.} In the vertex normal prediction task in Sec. \ref{sec:vertex_normal_prediction}, we choose 120 meshes for 3D-printed objects with a wide range of size from the Thingi10k \cite{Thingi10K} dataset with the following File IDs:

\small
\texttt{[60246, 85580, 40179, 964933, 1624039, 91657, 79183, 82407, 91658, 40172, 65414, 90431, 74449, 73464, 230349, 40171, 61193, 77938, 375276, 39463, 110793, 368622, 37326, 42435, 1514901, 65282, 116878, 550964, 409624, 101902, 73410, 87602, 255172, 98480, 57140, 285606, 96123, 203289, 87601, 409629, 37384, 57084, 136024, 202267, 101619, 72896, 103538, 90064, 53159, 127243, 293452, 78671, 75667, 285610, 80597, 90736, 75651, 1220293, 126660, 75654, 75657, 111240, 75665, 75652, 68706, 123472, 88855, 470464, 444375, 208741, 80908, 73877, 495918, 1215157, 85758, 80516, 101582, 75496, 441708, 796150, 257881, 68381, 294160, 265473, 762595, 461110, 461111, 38554, 762594, 79353, 81589, 95444, 762586, 762610, 762607, 1335002, 274379, 437375, 59333, 551074, 550810, 93130, 372053, 372059, 133078, 178340, 133079, 133568, 331105, 80650, 47984, 551021, 308214, 372057, 59197, 1717685, 439142, 372058, 376252, 372114]}
\normalsize 

For each method listed in Fig. \ref{fig:acceleration_prediction}, we do a grid-search over its hyper-parameter(s) for each mesh and report the pre-processing time and interpolation time associated with the hyper-parameter(s) that give(s) us the best cosine similarity. 




\section{Wasserstein Distances and Barycenters}
The Wasserstein metric has received a lot of attention in the machine learning
community, especially for its principled way of comparing distributions on a metric space $\mathcal{X}$~\citep{villani2003}. It is a distance function between probability measures defined on $\mathcal{X}$, while strongly reflecting the metric of the underlying space. In spite of their broad use, Wasserstein distances are computationally expensive.

\subsection{Wasserstein Barycenters on Meshes}
To alleviate this computational bottleneck,
convolutional Wasserstein distance is proposed as an entropic regularized Wasserstein distance over geometric domains by leveraging the heat kernel as the proxy for the geodesic distance over the manifold. 

The heat kernel matrix $\mathbf{H}$ can be seen as a generalization of a Gaussian kernel on a manifold by Varadhan’s formula~\cite{solomon2015convolutional}. Moreover, the geodesic distance function $\mathrm{dist}(i,j)$ on a surface mesh can be approximated by the Euclidean distance on an $\epsilon$-nearest-neighbor graph (when $\epsilon$ is suitably chosen). Note that, unlike the Gaussian kernel, the heat kernel can be efficiently computed. 

In our work, we approximate the geodesic distance on a surface mesh by (1) shortest-path distance (used in SF calculations), and 2) distance coming from an $\epsilon$-NN graph approximating the surface (RFD).

\subsubsection{Efficient computation of Wasserstein barycenter}~\label{sec:conv_wass_bary_appendix}
One of the key steps for the computation of Wasserstein distance is the action $\mathbf{H}$ on a given vector $\mathbf{x}$.~\citet{solomon2015convolutional} use a pre-factorized decomposition of $\mathbf{H}$ to do the above matrix-vector multiplication efficiently without actually materializing $\mathbf{H}$. 

Similar to their method, we never materialize our kernel matrices $\mathbf{K}$ explicitly, i.e. we only need to know how to apply $\mathbf{K}$ to vectors. Here FM can either be SeparationFactorization (SF) or the RFDiffusion algorithm (RFD) and for clarity, we use the subscript for the matrix $\mathbf{K}$ to specify that we are approximating the (right) action of the matrix $\mathbf{K}$. 

We define $\otimes$ as the Hadamard product (also known as the element-wise product) and $\oslash$ as the element-wise division. 

\begin{algorithm}[!htb]
   \caption{Fast Computation of Wasserstein Barycenter}
   \label{alg:wass_barycenter}
\begin{algorithmic}
   \STATE {\bfseries Inputs:} probability distributions $\left\{\boldsymbol{\mu}^i\right\}_{i=1}^k$, area weights $\vec{a}\in\mathbb{R}^N_+, \text{maxIter}\in\mathbb{N}$, $\boldsymbol{\alpha}\in\mathbb{R}^k_+$
   \STATE {\bfseries Output:} Wasserstein barycenter $\boldsymbol{\mu}\in \operatorname{Prob}(\mathrm{V})$
   \STATE {\bfseries Initialize:} $\mathbf{v}^1, \ldots, \mathbf{v}^k \leftarrow \vec{1}, 
 \mathbf{w}^1, \ldots, \mathbf{w}^k \leftarrow \vec{1}, \boldsymbol{\mu} \leftarrow \vec{1}$.
   \STATE for $j\le \text{maxIter}$ 
   \STATE\hskip2.5em for $i=1, \ldots, k$ 
   \STATE\hskip5em 1. $\mathbf{w}^i \leftarrow \boldsymbol{\mu}^i \oslash \left(\mathrm{FM}_{\mathbf{K}} (\vec{a} \otimes \mathbf{v}^i) \right)$
   \STATE\hskip5em 2. $\mathbf{d}^i \leftarrow \mathbf{v}^i \otimes \left(\mathrm{FM}_{\mathbf{K}} (\vec{a} \otimes \mathbf{w}^i)\right)$
   \STATE\hskip5em 3. $\boldsymbol{\mu} \leftarrow \boldsymbol{\mu} \otimes (\mathbf{d}^i)^{\boldsymbol{\alpha_i}}$
   \STATE\hskip2.5em for $i=1, \ldots, k$
   \STATE\hskip5em 4. $\mathbf{v}^i \leftarrow \mathbf{v}^i \otimes \boldsymbol{\mu} \oslash \mathbf{d}^i$  
   \STATE {\bfseries return} $\boldsymbol{\mu}$
\end{algorithmic}
\end{algorithm}

\subsubsection{Details on Baselines}~\label{sec:wass_bary_baselines}
For the \BF\ in separation integration, we first compute the pairwise \emph{shortest path distances} for all vertices, using vertices and edges in the input mesh. We then compute the \emph{element-wise exponential} $\mathbf{K}$ with $\mathbf{K}_{ij}:=\exp(-\lambda\mathrm{dist}(i,j))$ for all $i,j$. 

For the \BF\ in diffusion integration, we use the vertex embeddings of the input mesh and create a graph $\mathrm{G}$ with edges between nodes  $i$ and $j$ if $\|\mathbf{n}_i-\mathbf{n}_j\|_{1} \leq\epsilon$. After creating the set of edges, we compute the \emph{matrix exponential} $\mathbf{K}=\exp(\lambda  \mathbf{W}_{\mathrm{G}})$ with $(\mathbf{W}_{\mathrm{G}})_{ij}:=\|\mathbf{n}_i-\mathbf{n}_j\|_{1}\cdot \mathbbm{1}[\|\mathbf{n}_i-\mathbf{n}_j\|_{1} \le \epsilon]$. 

In steps $1$ and $2$ in the algorithm~\ref{alg:wass_barycenter}, both the baseline variants explicitly perform $\mathbf{Kx}$, the matrix-vector multiplication. 

\subsubsection{Details on Hyper-parameters}
For diffusion integration, we fix parameters $\epsilon=0.01$ and $\lambda=0.5$, and the number of random features is 30. For separation integration, we choose $\lambda=0.2$, $\text{unit-size}=0.1$, $\text{threshold}=2000$ (the maximum size of the graph, measured in the number of vertices, for which the integrator is conducted in a brute-force manner).

For computations of the Wasserstein barycenters, the input vector $\vec{a}$ contains area weights for vertices. The area weights are proportional to the sum of triangle areas adjacent to each vertex in a triangle mesh \citep{solomon2015convolutional}. We set the number of input distributions $k=3$ and $\boldsymbol{\alpha}=\vec{\frac{1}{k}}=\vec{\frac{1}{3}}$. For each mesh, we generate three different input distributions $\boldsymbol{\mu}^i$, each with mass concentrated in vertices surrounding a distinct center vertex.   

The experiments performed for Wasserstein barycenter are conducted on an 8-CPU core Ubuntu virtual machine on Google Cloud Compute. 

\subsubsection{Additional Experiments}~\label{sec:wass_sol}
In Table~\ref{table:wass_bary_diffusion_with_solomon}, we include comparisons of \RFD\ with an additional baseline \citep{solomon2015convolutional}. Even though there are similarities between our work and the above mentioned autors, we would like to point the following key differences : (1) \citet{solomon2015convolutional} does not consider an $\epsilon$-graph and (2) uses the heat kernel which is constructed using mesh Laplacian instead of our matrix exponential of the weighted adjacency matrix. 

Moreover, we note that the kernel employed in our \SF\ experiments can be seen as a generalization of the Laplace kernel on the manifold, and thus not directly comparable to the heat kernel. 


\begin{table}[h]
    \begin{center}
            \caption{Comparison of the total runtime and mean-squared error (MSE) across several meshes for diffusion-based integration. $\mathrm{Slmn}$ is the integrator from \citep{solomon2015convolutional}. Runtimes are reported in seconds. The lowest time for each mesh is shown in bold. MSE is calculated w.r.t. the output of brute force (BF).}
        \begin{tabular}{@{}p{8mm}crrr c rr@{}}
        \toprule
        \multirow{2}{*}{\textbf{Mesh}} & \multirow{2}{*}{\textbf{$|\mathrm{V}|$}} & \multicolumn{3}{c}{\textbf{Total Runtime}} & & \multicolumn{2}{c}{\textbf{MSE}} \\
        \cmidrule{3-5} \cmidrule{7-8}
        & & \BF & Slmn & \RFD & & Slmn & \RFD \\
        \midrule
        {Alien} & $5212$ & 8.06 & 0.57 & \textbf{0.39} & & 0.042 & 0.041 \\
        {Duck} & $9862$ & 45.36 & 1.94 & \textbf{1.10} & & 0.002 & 0.002\\
        {Land} & $14738$ & 147.64 & 4.17 & \textbf{2.17} & & 0.023 & 0.017\\
        {Octocat} & $18944$ & 302.84 & 6.74 & \textbf{3.36} & & 0.022 & 0.027\\
        \bottomrule
        \end{tabular}
\label{table:wass_bary_diffusion_with_solomon}
    \end{center}
\end{table}

\subsection{Gromov Wasserstein Distance}~\label{sec:gw_fgw}
The optimal transport (OT) problem associated with Gromov-Wasserstein (\GW) discrepancy~\citep{peyre:hal-01322992},
which extends the Gromov-Wasserstein distance~\citep{memoli}, has emerged as an effective transportation distance between structured data, alleviating the incomparability issue between different structures by aligning the intra-relational
geometries. The \GW~discrepancy problem can be solved iteratively by conditional gradient method~\citep{peyre:hal-01322992} and the proximal point algorithm~\citep{pmlr-v97-xu19b}.  \GW~distance is isometric, meaning the unchanged similarity under rotation, translation, and permutation and is thus related to graph matching problem, encoding structural information to compare graphs, and has also been successfully adopted in image recognition~\citep{peyre:hal-01322992}, alignment of large single-cell datasets~\citep{Demetci2020.SCOT}, and point-cloud data alignment~\citep{Memoli2006}. However, despite its broad
use, the Gromov Wasserstein distance is computationally expensive as it scales as $O(n^2m^2)$ where $n, m$ are the numbers of source and target nodes respectively.

\subsubsection{(Fused) Gromov Wasserstein Discrepancy}~\label{fgw_comp}
Formally, the Gromov-Wasserstein discrepancy between two measured similarity
matrices $(\mathbf{C}, \mathbf{p}) \in \mathbb{R}^{n\times n} \times \Sigma_n$ and $ (\mathbf{D}, \mathbf{q}) \in \mathbb{R}^{m \times m} \times \Sigma_m $ is defined as :
\begin{equation}~\label{gwa}
\begin{split}
\operatorname{GW}(\mathbf{C}, \mathbf{D}, \mathbf{p}, \mathbf{q}) & = \min_{\mathbf{T} \in \mathcal{C}_{\mathbf{p},\mathbf{q}}} \sum_{i,j,k,l}\ell(\mathbf{C}_{i,k}, \mathbf{D}_{j,l})\mathbf{T}_{i,j}\mathbf{T}_{k,l}   \\
    & = \min_{\mathbf{T} \in \mathcal{C}_{\mathbf{p},\mathbf{q}}} \langle L(\mathbf{C}, \mathbf{D} , \mathbf{T} ), \mathbf{T} \rangle
\end{split}
\end{equation}
where $\mathbf{C}$ and $\mathbf{D}$ are matrices representing either similarities
or distances between nodes within the graph, $\mathbf{A}_{i,j}$ is the $ij$th entry of the matrix $\mathbf{A}$,  $\ell$ is the loss function applied elementwise on the matrices. The common choices of the loss function are  Euclidean distance, i.e. $\ell(x,y):= (x-y)^2$ or KL-divergence, i.e. $\ell(x,y):=x \log \frac{x}{
y} - x + y$, $\mathbf{p} \in \mathbb{R}^{n}_{+}$ (resp. $\mathbf{q} \in \mathbb{R}^{m}_{+}$),  $\sum p_i = 1 $ (resp. $\sum q_i = 1 $),
is the probability simplex of histograms
with $n$ (resp. $m$) bins, and $\mathbf{T}$ is the coupling matrix between the two spaces on
which the similarity matrices are defined, i.e.
\begin{equation}~\label{eqn:coupling}
    \mathcal{C}_{\mathbf{p},\mathbf{q}} = \{\mathbf{T} \in \mathbb{R}^{n \times m}_{+}  \mid \mathbf{T}\ones_{m} = \mathbf{p}, \mathbf{T}^{\top}\ones_{n} = \mathbf{q} \}
\end{equation}
Define, $L(\mathbf{C}, \mathbf{D} , \mathbf{T} ) := [\mathbf{L}_{k,l}] \in \mathbb{R}^{n \times m}$ and $\mathbf{L}_{k,l} := \sum_{i,j}\ell(\mathbf{C}_{i,k}, \mathbf{D}_{j,l})\mathbf{T}_{i,j}$
and $\langle \cdot, \cdot \rangle$ is the inner
product of matrices. For the rest of the section, we use the Euclidean distance $\ell$ as our loss function. 

Following the work of~\citep{fgw}, the concept of \GW~discrepancy can be extended to an OT discrepancy on graphs called Fused Gromov Wasserstein (FGW) that take into account both the node features of the graphs as well as their structure matrices. \FGW~can be written as follows: 
\begin{equation}
\begin{split}
\operatorname{FGW}_{\alpha}(\mathbf{C}, \mathbf{D}, \mathbf{M},\mathbf{p}, \mathbf{q}) & = \min_{\mathbf{T} \in \mathcal{C}_{\mathbf{p},\mathbf{q}}} \sum_{i,j,k,l}((1-\alpha)\mathbf{M}+ \\
& \alpha\ell(\mathbf{C}_{i,k}, \mathbf{D}_{j,l}))\mathbf{T}_{i,j}\mathbf{T}_{k,l}
\end{split}
\end{equation}
where $\mathbf{M}$ is the distance matrix encoding differences between the nodes of the $2$ graphs and $\alpha$ is the convex combination between the distance matrices.

\subsubsection{Estimating the Action of Hadamard Square of Matrices on Vectors}
To compute (Fused) Gromov-Wasserstein discrepancies, one needs to compute $\mathbf{C}^{\odot 2} \mathbf{p}$, where $\mathbf{C}^{\odot 2}$ is the element-wise square (Hadamard square) of a cost matrix $\mathbf{C}$ and a vector $\mathbf{p}$. This is given by the following formula :
\begin{equation}
\mathbf{C}^{\odot 2}\mathbf{p} =\operatorname {diag} \left(\mathbf{CD}_{\mathbf {p} }\mathbf{C}^{\top}\right)
\end{equation}
where diag($\mathbf{M}$) is the diagonal of $\mathbf{M}$ and $\mathbf{D_p}$ is the matrix formed by $\mathbf{p}$ as its diagonal. 

However, for all our fast variants, we never materialize the matrix $\mathbf{C}$ explicitly. Thus to estimate the above action, we can make $2$ calls to our Fast  Multiplication method (FM)  via the following :
\begin{equation}~\label{eqn:sq}
    \mathbf{C}^{\odot 2}\mathbf{p} \sim \operatorname{diag}(FM_{\mathbf{C}}(FM_{\mathbf{C}}(\mathbf{D_p})^{\top}))
\end{equation}
Here FM can either be SeparationFactorization (SF) or the RFDiffusion algorithm (RFD), and for clarity, we use the subscript for the matrix $\mathbf{C}$ to specify that we are approximating the (right) action of the matrix $\mathbf{C}$.

\subsubsection{Algorithm to Put It All Together}
To calculate the OT (for Gromov-Wasserstein and Fused Gromov Wasserstein), the loss matrix $L(\mathbf{C}, \mathbf{D}, \mathbf{T} )$ needs to be computed, which is one of the most expensive steps, as it involves a tensor-matrix
multiplication. Indeed if the source graph has $n$ nodes and the target graph has $m$ nodes, this operation has a time complexity of $O(n^2m^2)$. However, when the loss function $\ell$ can be written as $\ell(a, b) = f_1(a) + f_2(b) - h_1(a)h_2(b)$ for functions $(f_1, f_2, h_1, h_2)$, the loss matrix can be calculated as~\citep{peyre:hal-01322992}
\begin{equation}~\label{eqn:tensor_prod}
L(\mathbf{C}, \mathbf{D}, \mathbf{T} ) = f_1(\mathbf{C})\mathbf{p} \ones_{m}^{\top} +
\ones_{n}\mathbf{q}^{\top}f_2(\mathbf{D}) - h_1(\mathbf{C})\mathbf{T} h_2(\mathbf{D})^{\top}
\end{equation}
where the functions $(f_1, f_2, h_1, h_2)$ are applied elementwise. In this case, the time complexity reduces to $O(n^2m + m^2n)$. Moreover if $\ell$ is the Euclidean loss function, then $f_1(x) = f_2(x) = x^2$ and $h_1(x) = x, h_2(x) = 2x$. 

Our Fast Multiplication methods (FM) can be used to efficiently estimate the above tensor product given by equation~\ref{eqn:tensor_prod} via the algorithm~\ref{alg:tensor_prod} thus leading to computation gains in computing \GW~(resp. \FGW~) discrepancies. In the above algorithm, the implicit representation of a matrix $\mathbf{M}$ can be given as an array of $3$-D coordinates and hyperparameters that are specific to the chosen FM algorithm. 

\begin{algorithm}[tb]
   \caption{Fast Computation of Tensor Products}
   \label{alg:tensor_prod}
\begin{algorithmic}
   \STATE {\bfseries Input:} $\mathbf{T}$ and $I_\mathbf{C}, I_\mathbf{D}$ \COMMENT{$I_\mathbf{M} :=$ implicit representation of the cost matrix $\mathbf{M}$}
   \STATE {\bfseries Output:} $L(I_\mathbf{C},I_\mathbf{D},\mathbf{T})$
   \STATE 1. Estimate $\mathbf{v}_1 := f_1(\mathbf{C})\mathbf{p}$ by Equation~\ref{eqn:sq}
   \STATE \hskip2.5em Compute $\mathbf{w}_1 = \mathbf{v}_1 \ones^{\top}$
   \STATE 2. Estimate $\mathbf{v}_2 := f_2(\mathbf{D})\mathbf{q}$ by Equation~\ref{eqn:sq} \COMMENT{Using the fact that $\mathbf{D}$ is symmetric}
   \STATE \hskip2.5em Compute $\mathbf{w}_2 = \ones \mathbf{v}_2^{\top}$
   \STATE 3. Estimate $h_1(\mathbf{C})\mathbf{T} h_2(\mathbf{D})^{\top}$ by \\ 
   \hskip2.5em $\mathbf{w}_3 := (FM_{\mathbf{D}}(FM_{\mathbf{C}}(\mathbf{T})^{\top}))^{\top}$ \\
   \STATE {\bfseries return} $\mathbf{w}_1 + \mathbf{w}_2 - 2\mathbf{w}_3$
\end{algorithmic}
\end{algorithm}

Our contributions go further than providing fast accurate computation of the tensor products but also a fast computation of the line search algorithm (Algorithm 2 as presented in~\citep{pmlr-v97-titouan19a}). The line search algorithm provides an optimal step size for the \FGW~iterations. 

We now provide a brief description of how our novel FM methods can be injected into the line search algorithm for the conjugate gradient. The line search algorithm at a \FGW~iteration takes in the structure matrices of the source and target graphs (which in our case will be implicit representations of such matrices), transport cost $\mathbf{G}$, $d\mathbf{G}$ which is the difference between the optimal map found by linearization in the \FGW~algorithm and $\mathbf{G}$, and $\mathbf{M}$, a matrix measuring the differences between nodes features of source and target graphs. Define $c_{\mathbf{C},\mathbf{D}} := f_1(\mathbf{C})\mathbf{p} \ones_{m}^{\top} +
\ones_{n}\mathbf{q}^{\top}f_2(\mathbf{D})$. Finally, the algorithm needs a cost function that combines the transportation cost coming from the node features and the graph structures which is applied to $\mathbf{G}$. This cost function crucially relies on the tensor product computation (Equation~\ref{eqn:tensor_prod}) and our Algorithm~\ref{alg:tensor_prod} provides a fast efficient computation of this cost function as well. 

\begin{algorithm}[tb]
   \caption{Fast Computation of Line-search for CG }
   \label{alg:line_search}
\begin{algorithmic}[1]
\STATE {\bfseries Input:} $I_{\mathbf{C}}, I_{\mathbf{D}}, \alpha, \mathbf{G}, 
d\mathbf{G}, \mathbf{M}, $
\STATE {\bfseries Output:} Optimal Step Size $\tau$
   \STATE  Estimate $c_{\mathbf{C},\mathbf{D}}$ by Step $1$ and $2$ of algorithm~\ref{alg:tensor_prod}.
   \STATE Estimate $a_1 := \mathbf{C}d\mathbf{G}\mathbf{D}$ by 
 $FM_{\mathbf{D}}(FM_{\mathbf{C}}(d\mathbf{G})^{\top})^{\top}$ \COMMENT{since $\mathbf{D}$ is symmetric}.
   \STATE Compute $a := -2\alpha\langle a_1, d\mathbf{G} \rangle $.
   \STATE Estimate $b_1 := \mathbf{C}\mathbf{G}\mathbf{D}$ by 
   $FM_{\mathbf{D}}(FM_{\mathbf{C}}(\mathbf{G})^{\top})^{\top}$
   \STATE Compute $b := \langle (1-\alpha)\mathbf{M} + \alpha c_{\mathbf{C}, \mathbf{D}}, d\mathbf{G}\rangle - 2\alpha(\langle a_1, \mathbf{G} \rangle + \langle b_1, d\mathbf{G}\rangle)$
   \STATE Compute $c := \text{cost}(\mathbf{G})$
\IF {$a > 0$} 
  \STATE  $\tau \gets \text{min}(1, \text{max}(0, \frac{-b}{2a}))$
\ELSE
    \IF {$a + b < 0$}
        \STATE $\tau \gets 1$  
    \ELSE 
    \STATE $\tau \gets 0$
    \ENDIF
\ENDIF 
\end{algorithmic}
\end{algorithm}
Note that employing a low-rank decomposition of the cost matrices to speed up the computation of \GW~ has also been studied in~\citep{lr_gw}. However, our work differs from their work in certain key aspects. The choice of our kernel matrices and the method of factorization of the cost matrix differs from the above work. Moreover, we do not design our methods with \GW~computations in mind but a flexible mechanism that can be injected into various \GW~computations including entropic-\GW~(similar to Algorithm 2 proposed in~\citep{lr_gw}).

\subsubsection{Gromov Wasserstein Barycenters}~\label{sec:gw_bary}
Recall, that given graphs $G_1, \cdots , G_N$, where $G_i := \{\mathbf{C}_i, \mathbf{p}_i\}$ comes equipped with a cost matrix $\mathbf{C}_i$ between its nodes and a probability simplex defined on its nodes, the Wasserstein barycenter can be defined as the minimizer of the functional
\begin{equation}
F[\nu] = \sum_{i=1}^{n}w_i \operatorname{GW}((\bar{\mathbf{C}}, \mathbf{C}_i,  \bar{\mathbf{p}}, \bar{\mathbf{p}_i}) 
\end{equation}
where $w_i$ are some fixed positive weights and $\sum w_i = 1$, $\bar{G} := \{\bar{\mathbf{C}}, \bar{\mathbf{p}} \}$ is the predefined barycenter graph with a fixed number of nodes. One can similarly define a Fused \GW~barycenter as well. 

As an application of our methods, we interpolate between a bunny ($1887$ vertices) and a torus ($1949$ vertices). We center the meshes around $(0,0,0)$ and scale the coordinates such that $|x|,|y|,|z| \leq 1$. We then run a fast Sinkhorn barycenter algorithm~\citep{janati:hal-03063875} to get a configuration of intermediate shapes in $3$D space. A sampling algorithm (Voxel Grid Filter) is then used to reduce the density of the generated point clouds to $1445$, $1450$, and $1425$ points respectively. We then try to solve for the edges of these intermediate point clouds.
\begin{figure}[h!]
\centering
\includegraphics[scale=.4]{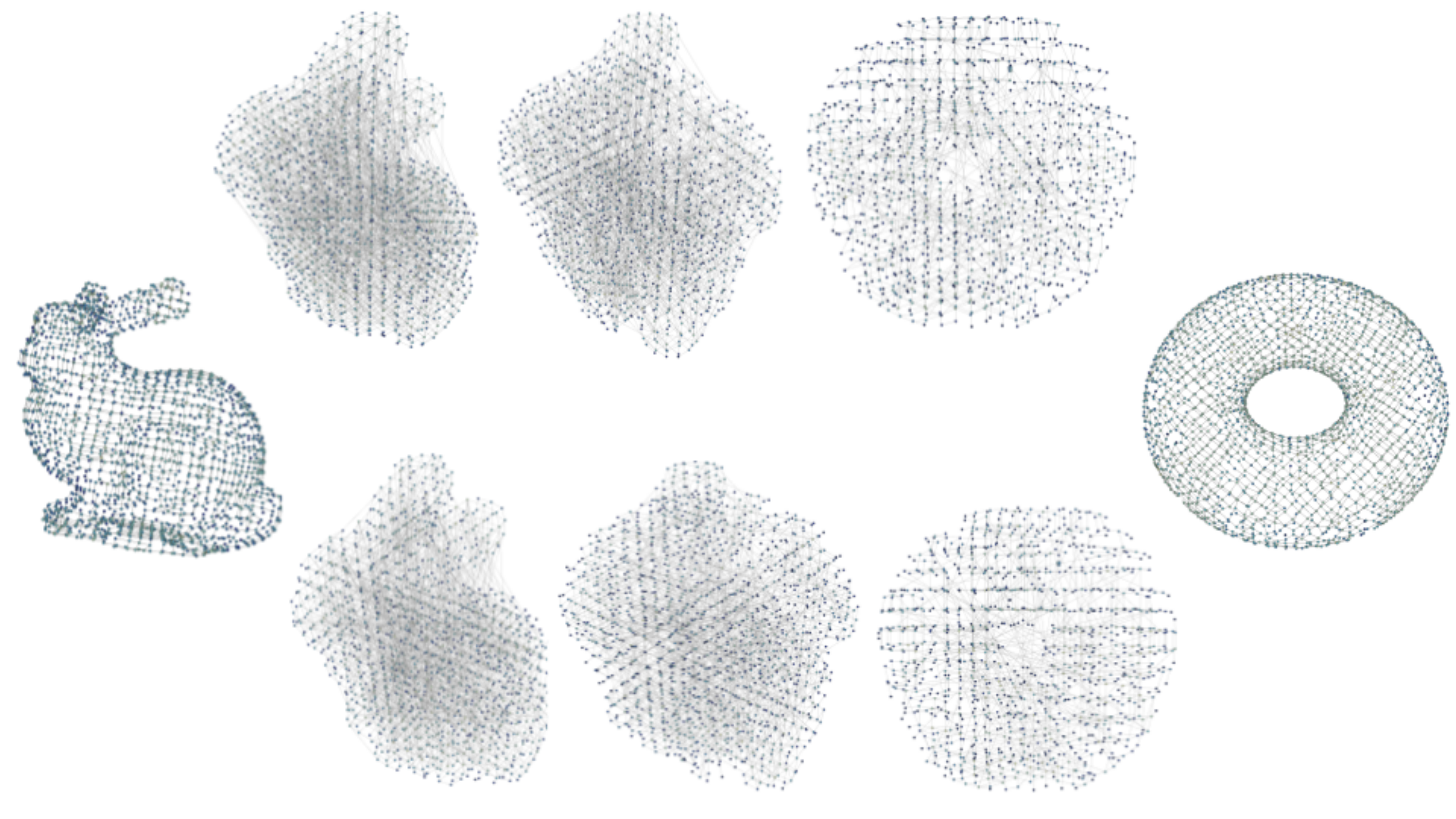}
\caption{Interpolation between a bunny and a torus. Barycenters are computed using, Top row: \GW-cg, Bottom row : \GW-cg-RFD }
\label{fig:gw_bary}
\end{figure}
Our method as well as the baseline \GW-cg algorithm produce decent meshes and tries to preserve the consistency of the manifold mesh throughout the interpolation (Figure~\ref{fig:gw_bary}). For the barycenter experiment, we use $m=16$ random features, $\lambda=-.15$, and $\epsilon=.13$.

All experiments on \GW~and its variants are conducted on a Google Colab.

\section{Ablation Studies}~\label{sec:ablates}
In this section, we present detailed ablation studies for our experiments. 
\subsection{Ablation Studies for Vertex Normal Prediction Experiments}~\label{sec:ablates_vertex_normal_prediction}
\vspace{-3mm}

\textbf{RFDiffusion.} There are three hyper-parameters in our RFDiffusion algorithm, which are the number of random features $m$, epsilon $\epsilon$, and lambda $\lambda$. The number of random features determines the accuracy of our approximation of the weighted adjacency matrix $\mathbf{W}_\mathrm{G}$. The epsilon hyper-parameter controls the sparsity level of the $\epsilon$-NN graph. The lambda hyper-parameter controls the "steepness" of our kernel. We can make the following conclusions based on the observations from Fig. \ref{fig:ablation_interpolation_RFD}. Firstly, increasing the number of random features usually gives us a better estimation of the weighted adjacency matrix $\mathbf{W}_\mathrm{G}$, which leads to higher cosine similarity. Secondly, a densely connected graph (large epsilon) coupled with a steeper kernel function (lambda with large absolute value) leads to better performance. 
\begin{figure}[h!]
\centering
\includegraphics[width=.99\textwidth]{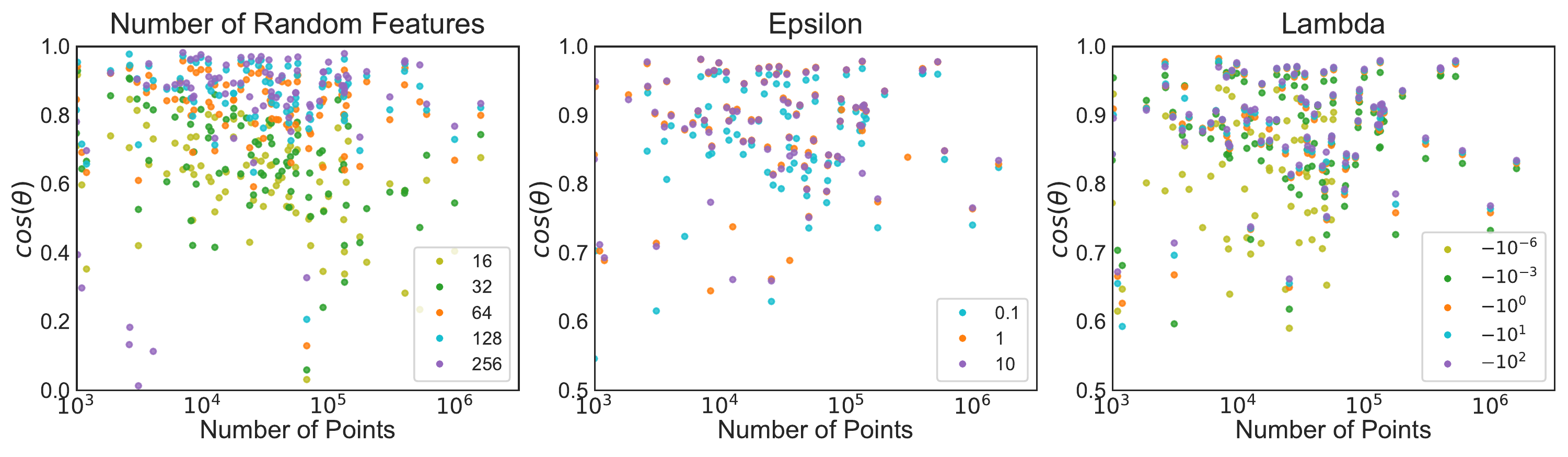}
\vspace{-4mm}
\caption{Ablations study for RFDs on the vertex normal prediction task.}
\label{fig:ablation_interpolation_RFD}
\vspace{-5mm}
\end{figure}

\textbf{SF.} There are two hyper-parameters in our $\mathrm{SF}$ algorithm, which are: $\mathrm{unit}$-$\mathrm{size}$ (determining the quantization mechanism: all the shortest path lengths are considered modulo $\mathrm{unit}$-$\mathrm{size}$) and $\mathrm{threshold}$ (specifying the maximum size of the graph, measured in the number of vertices, for which the GFI is conducted in a brute-force manner). Fig. \ref{fig:ablation_interpolation_SF_unitsize} shows pre-processing time, interpolation time, and cosine similarity under different values of the unit-size hyper-parameter. The results are reported with the threshold set as half of the number of vertices in the mesh. We can observe from the plots that a small value for unit-size provides a better estimation of the shortest-path distance without incurring significant changes in pre-processing and interpolation time. Fig. \ref{fig:ablation_interpolation_SF_threshold} shows the ablation of different thresholds while keeping the unit-size hyper-parameter the same (0.01). There is a trade-off between accuracy (measured by cosine similarity) and interpolation time. In the main body of our paper, we set the unit-size to 0.01 and the threshold to 0.5.
\begin{figure}[h!]
\centering
\includegraphics[width=.99\columnwidth]{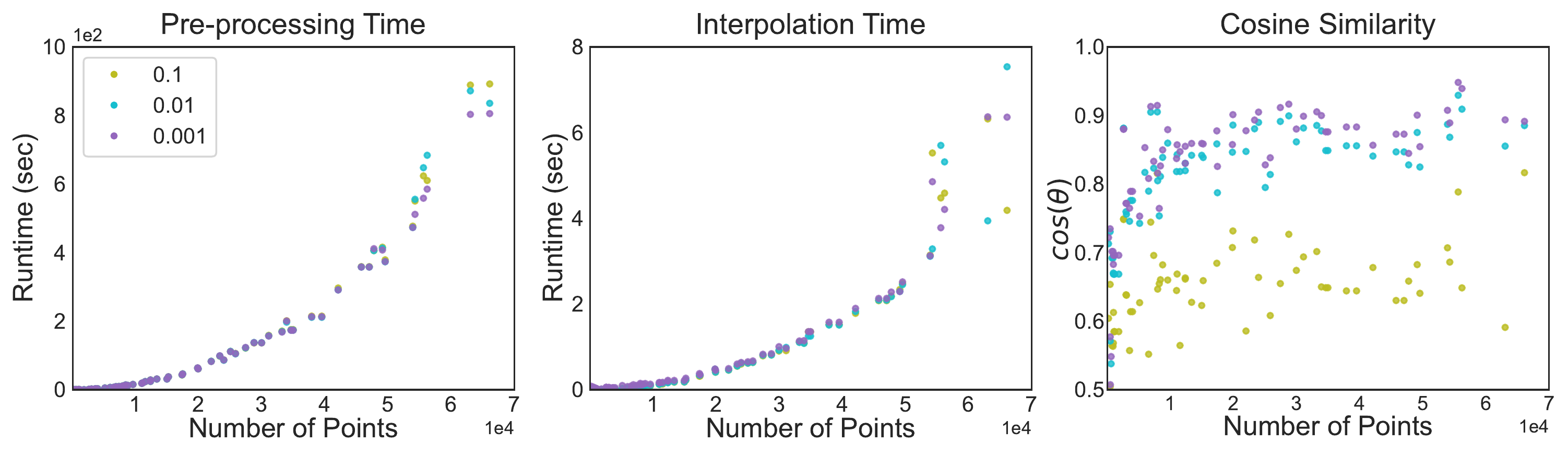}
\vspace{-4mm}
\caption{Ablation study for the \textbf{unit-size} hyper-parameter in $\mathrm{SF}$ algorithm for vertex normal prediction task.}
\label{fig:ablation_interpolation_SF_unitsize}
\end{figure}

\begin{figure}[h!]
\centering
\includegraphics[width=.99\columnwidth]{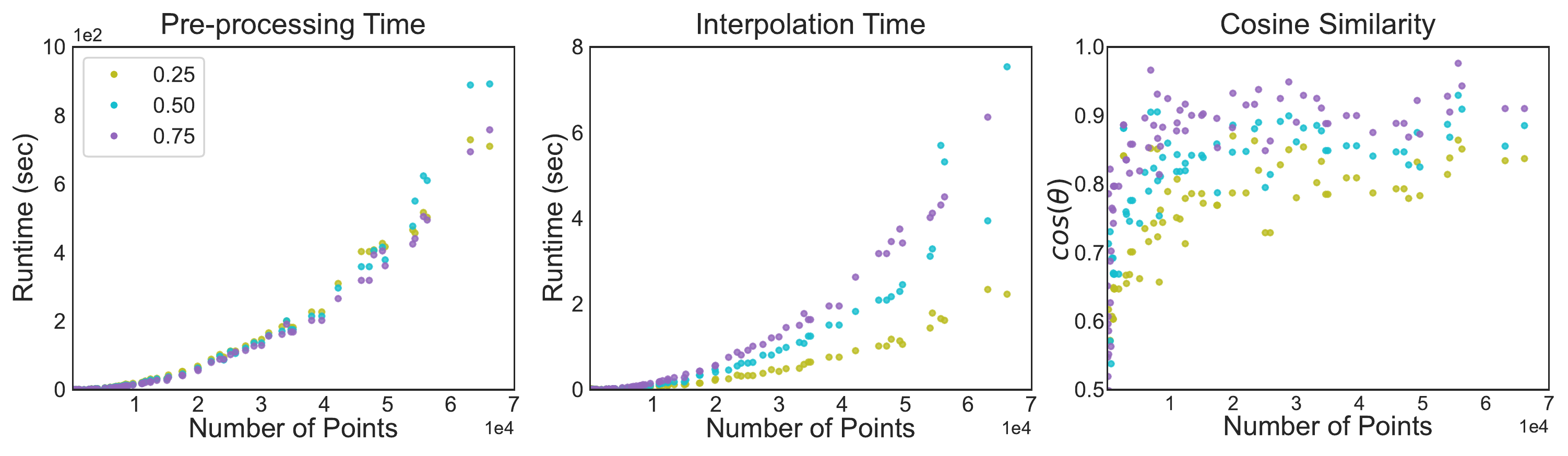}
\vspace{-4mm}
\caption{Ablation study for the \textbf{threshold} hyper-parameter in $\mathrm{SF}$ algorithm for vertex normal prediction task.}
\label{fig:ablation_interpolation_SF_threshold}
\end{figure}

\subsection{Ablation Studies for Gromov Wasserstein experiments}~\label{sec:gw_ablates}
The $\epsilon$ parameter in our RFDiffusion effectively controls the sparsity of our source and the target graphs. We find that our runtimes for \GW~distance via the conditional gradient algorithm remain mostly stable while that of the baseline algorithm grows with the density of the graph. However, runtime for \GW-distance computed via the proximal point algorithm is fairly stable~\citep{pmlr-v97-xu19b}. Surprisingly, the runtime for the \FGW~distance is also stable. We hypothesize that it is because even though the source or target graphs are sparse, we need to materialize a dense cross-feature distance matrix between the node features of the source and target nodes. In these cases, the runtimes for our RFDiffusion-integrated \GW~(with conditional gradient algorithm) and \FGW~are also stable and inconsistently lower than the baseline methods. The middle figure~\ref{fig:epsilon_ablates} shows the relative error as the function of $\epsilon$. As $\epsilon$ increases, for a fixed value of $\lambda$, the action of the matrix that we are trying to estimate will have a larger norm, and thus the relative error grows in accordance to Lemma~\ref{lemma:rfd}. However, for meshes rescaled in a unit box, the $\epsilon$ tends to be smaller in practice. 

We also see similar behavior with $\lambda$, i.e., smaller values of $|\lambda|$ tend to produce better results. This phenomenon is predicted by Lemma~\ref{lemma:rfd}. However, if $\lambda$ gets too close to $0$, the structure matrices approach an identity matrix, leading to information loss. This causes instabilities in the convergence of the algorithm.

All the experiments are run on random $3$D distributions with $3000$ points, and the results are averaged over $10$ runs.
\begin{figure}[h!]
\centering
\includegraphics[width=.3\columnwidth]{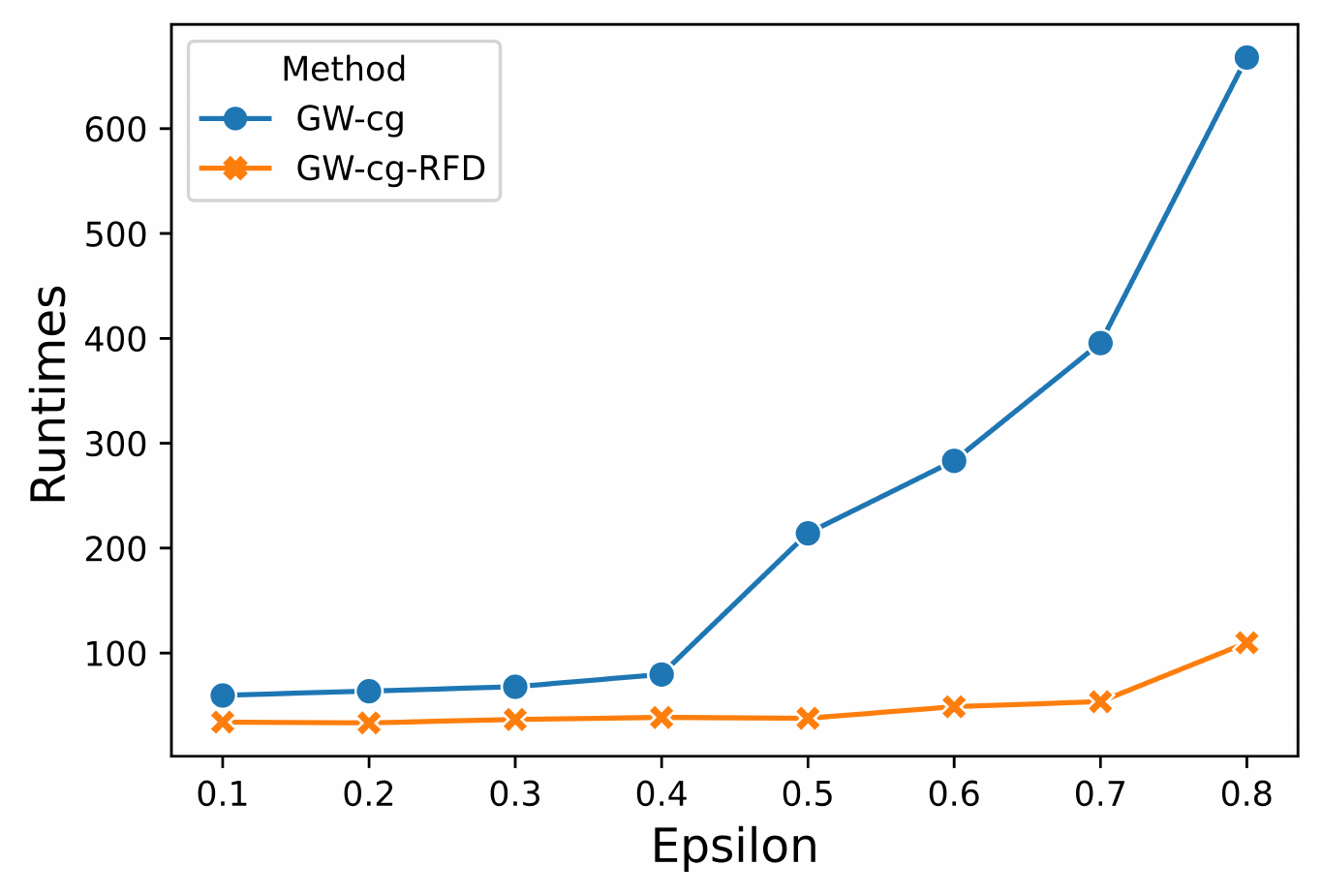}
\includegraphics[width=.3\columnwidth]{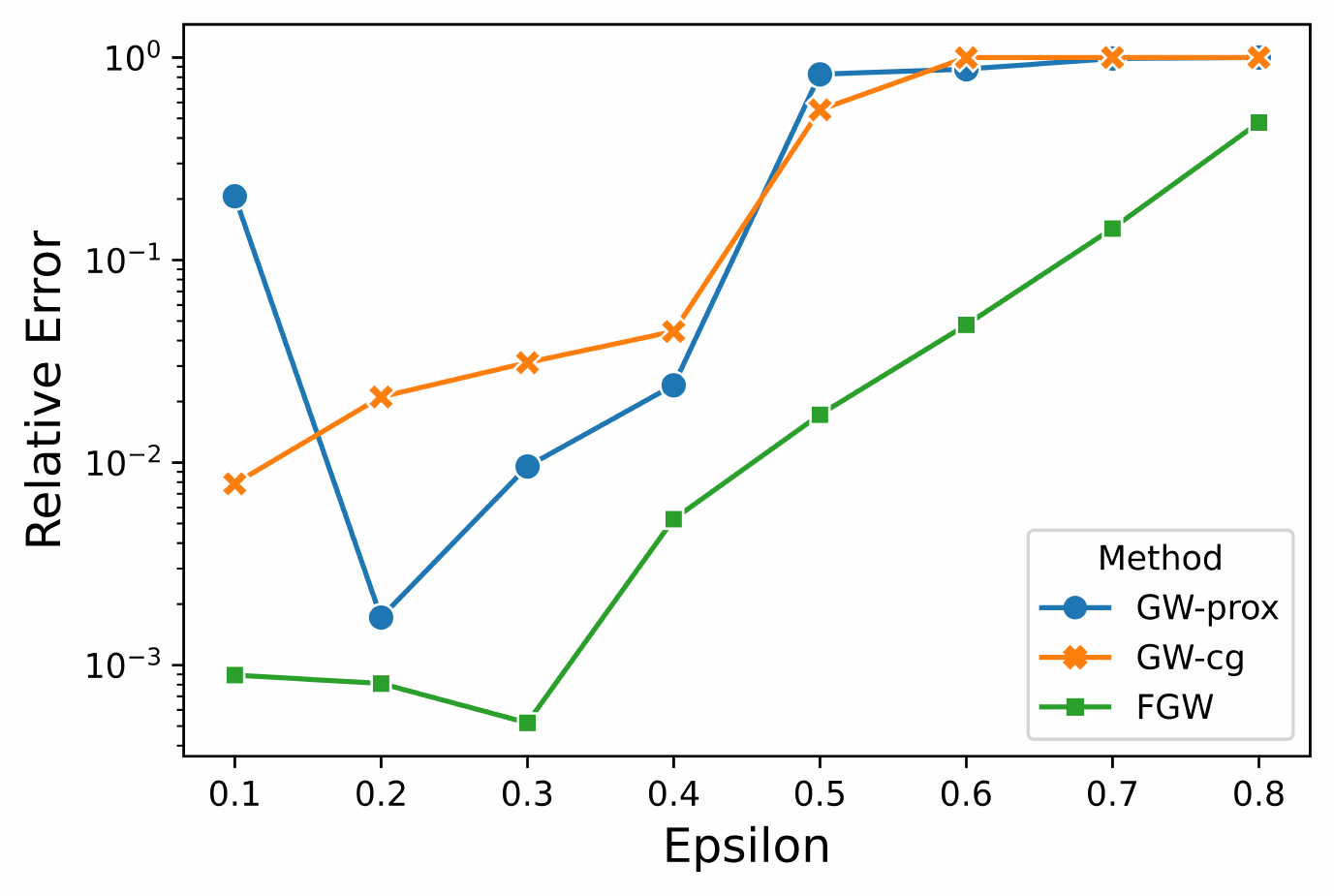}
\includegraphics[width=.3\columnwidth]{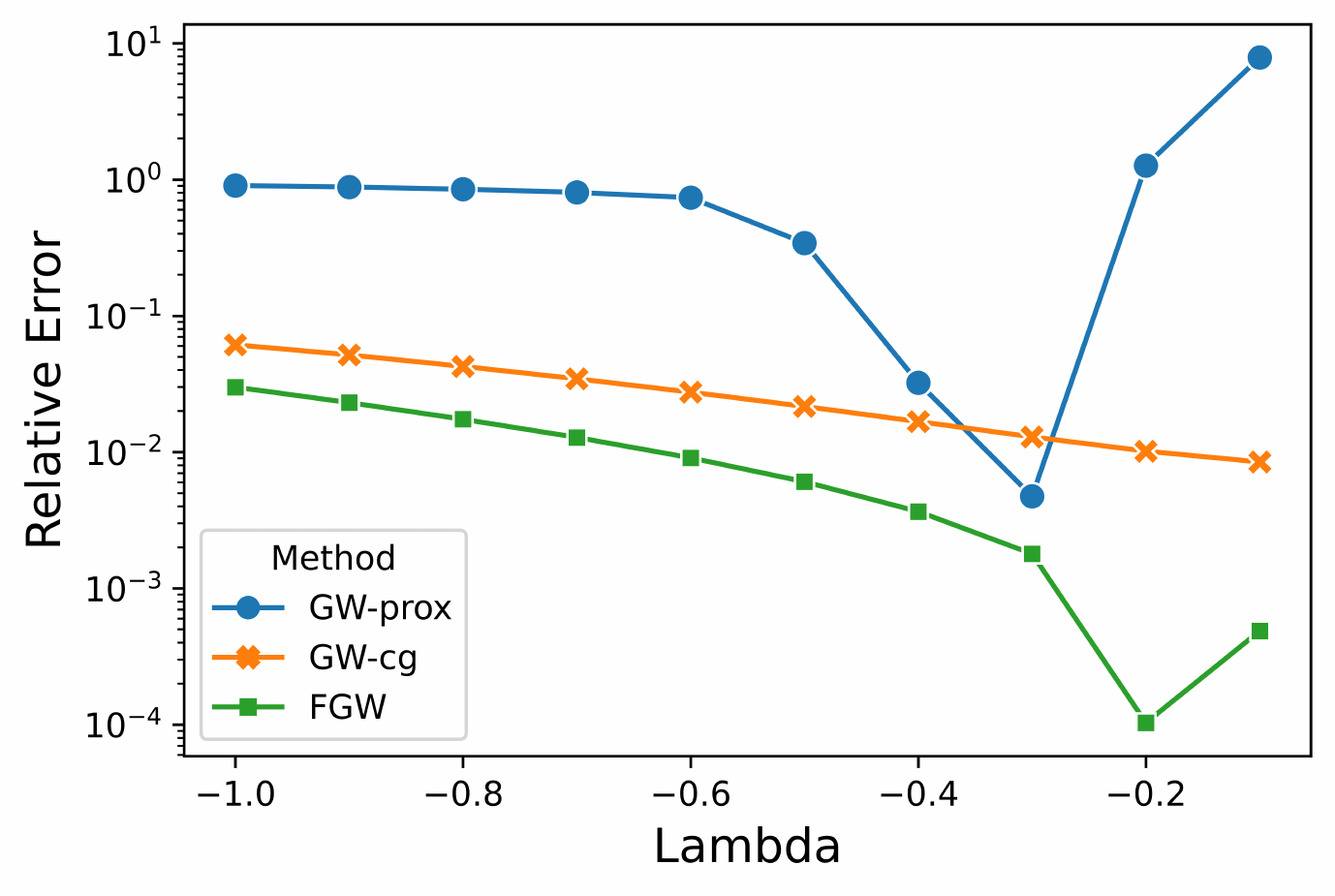}
\caption{Ablation study over the $\lambda$ and $\epsilon$ parameters for the \GW~variants. \textbf{Left:} The runtime for the baseline \GW-cg method increases as the input graph gets denser while our runtimes remain mostly constant. \textbf{Middle and right:} Plots show relative error as a function of $\epsilon$ and $\lambda$ respectively. The relative error increases if the graphs get ``dense" or the kernel becomes too ``steep."}
\label{fig:epsilon_ablates}
\end{figure}

\subsection{Ablation Studies on Wasserstein Barycenter Experiments}
In Table~\ref{table:wass_bary_diffusion_ablation}, we provide ablation results for the $\lambda$ hyper-parameter in $\mathrm{RFD}$ algorithm for the Wasserstein barycenter task. Experiments are conducted on the mesh \texttt{duck}. We show that the MSE increases with $\lambda$, which is in line with the observation in Section~\ref{sec:gw_ablates}. The runtime is nearly unchanged for different values of $\lambda$. 
We normalize the coordinates of the vertices and choose the epsilon parameter to be 0.01, making the computation meaningful. Larger epsilon values will cause the graph to be too dense, and smaller epsilon values will create an epsilon graph with almost no edges.

In Table~\ref{table:wass_bary_separation_ablation}, we provide ablation results for the \textbf{unit-size} hyperparameter in the $\mathrm{SF}$ algorithm. We show that the MSE slowly increases with unit-size, and the runtime is nearly unchanged for different values of \textbf{unit-size}.

\begin{table}[ht]
\begin{minipage}[b]{0.56\linewidth}
\centering
    \caption{Ablation study for the \textbf{unit-size} parameter in \SF\ for Wasserstein barycenter task.}
    \begin{tabular}{@{}l c c c@{}}
        \toprule
        \textbf{unit-size} & \textbf{MSE} & \textbf{Total time (secs)} \\
        \midrule
        $0.1$ & $2.1 \times 10^{-3}$ & $19.4$\\
        $0.5$ & $2.1 \times 10^{-3}$ & $19.1$\\
        $1.0$ & $2.1 \times 10^{-3}$ & $18.9$\\
        $5.0$ & $2.7 \times 10^{-3}$ & $18.8$\\
        $10.0$ & $3.1 \times 10^{-3}$ & $19.1$\\
        \bottomrule
    \end{tabular}
    \label{table:wass_bary_separation_ablation}
\end{minipage}\hfill
\begin{minipage}[b]{0.4\linewidth}
\centering
    \caption{Ablation study for the $\lambda$ parameter in \RFD\  for Wasserstein barycenter task.}
    \begin{tabular}{@{}c c c c@{}}
        \toprule
        \textbf{$\lambda$} & \textbf{MSE} & \textbf{Total time (secs)} \\
        \midrule
        $0.1$ & $2 \times 10^{-4}$ & $1.1$\\
        $0.3$ & $1.1 \times 10^{-3}$ & $1.1$\\
        $0.5$ & $2.1 \times 10^{-3}$ & $1.0$\\
        $0.7$ & $2.7 \times 10^{-3}$ & $1.1$\\
        $0.9$ & $3.3 \times 10^{-3}$ & $1.1$\\
        \bottomrule
    \end{tabular}
    \label{table:wass_bary_diffusion_ablation}
\end{minipage}
\end{table}
\section{Graph Classification Experiments using the \RFD\ Kernel}~\label{sec:graph_classification}
We extract our \RFD\ kernel and use it for various graph classification tasks. More specifically, we compute the top $k$ eigenvalues for the approximated kenel matrix and pass it to a random forest classifier for classification. Note that, as described in~\citep{nakatsukasa2019lowrank}, low-rank decomposition of the kernel matrix (provided directly by the $\mathrm{RFDiffusion}$ method via the random feature map mechanism) can be used to compute efficiently eigenvectors and the corresponding eigenvalues. 

However, most of the benchmark datasets for graph classification are molecular datasets~\citep{Morris2020}. Our methods are originally developed for meshes and point clouds where we excel (see section~\ref{sec:pc_expt}) hence we do not consider molecular graphs in the main paper. The node features of these molecular graphs are extremely coarse and thus the epsilon-neighborhood graph constructed using these features performs poorly in downstream graph classification tasks. We apply our $\mathrm{RFDiffusion}$ kernel on the sets of points, considering the node features as vectors in a $d$-dimensional space. The \RFD\ kernel produces a smoothened version of the epsilon-neighborhood graph, giving good results even when the baseline applying explicitly the epsilon-neighborhood graph does not. This is the case since random features replace the combinatorial object (a graph with edges and no-edges) with its “fuzzy” version, where all the nodes are connected by edges (that are not explicitly reconstructed though) but the weights corresponding to non-edges in the original graph are close to zero with high probability.

We compare our algorithm with four baselines : Vertex Histogram (VH), Random Walk (RW), Weisfeiler-Lehman shortest path kernel (WL-SP)~\citep{graph_kernels} and Feature based method (FB)~\citep{de2018simple}. Our method compares favorably with these methods and is also competitive with various kernel methods reported in~\citep{de2018simple, balcilar2020bridging, graph_kernels, graph_classification_ieee_gcn}. 
The results along with statistics about the datasets are summarized in Table~\ref{tab:mol_graph_classification}.
\begin{table}
\centering
\caption{Graph Classification using \RFD\ Kernel}
\begin{tabular}{@{}c c c c c c c c  c@{}}
        \toprule
    Dataset & \# Graphs & Avg. \# Nodes & Avg. \# Edges & VH & RW & WL-SP & FB & \RFD\ (ours) \\
        \midrule
        MUTAG &	$188$ &	$17.93$	& $19.79$	& $69.1$ & $81.4$ & $81.4$ & $\mathbf{84.7}$ & $71.0$ \\			
        ENZYMES &	$600$	& $32.63$ &	$62.14$ & 	$20.0$ & $16.7$ & $27.3$ & $\mathbf{29.0}$ & $27.0$ \\
        PROTEINS &	$1113$	& $39.06$ &	$72.82$ & 	$71.1$ & $69.5$ & $72.1$ & $70.0$ & $\mathbf{75.0}$ \\
        NCI1	& $4110$ &	$29.87$ &	$32.3$	 & $55.7$ & TIMEOUT & $60.8$ & $\mathbf{62.9}$ & $61.0$ \\		DD	& $1178$ &  $284.32$ &  $715.66$ & $74.8$ & OOM & $\mathbf{76.0}$ & - & $73.0$		\\				PTC-MR & $344$	& $14.29$	& $14.69$	& $57.1$ & $54.4$ & $54.5$ & $55.6$ & $\mathbf{61.0}$ \\																				
        \bottomrule
    \end{tabular}
    \label{tab:mol_graph_classification}
\end{table}


\end{document}